\definecolor{c1}{HTML}{2F70AF} % 00457E
\definecolor{pink}{HTML}{747199}
\definecolor{yellow}{HTML}{cda380}
\theoremstyle{plain}
\newtheorem{theorem}{Theorem}[section]
\newtheorem{lemma}[theorem]{Lemma}
\theoremstyle{definition}
\theoremstyle{remark}
\newcommand{\T}{\mathrm{T}}
\newcommand{\D}{\mathrm{D}}
\newcommand{\ie}{\textit{i}.\textit{e}., }
\newcommand{\bst}[1]{{\textbf{\textcolor{red}{#1}}}}
\newcommand{\subbst}[1]{\textcolor{blue}{\underline{{#1}}}}
\newcommand{\scalea}[1]{\scalebox{0.78}{#1}}
\newcommand{\scaleb}[1]{\scalebox{0.8}{#1}}
\title{Time-o1: Time-Series Forecasting Needs \\ Transformed Label Alignment}
\author{
  Hao Wang$^{1,2}$\thanks{This work was done in the internship at Xiaohongshu Inc. Both authors have equal contribution.
}\quad Licheng Pan$^{1,2*}$ \quad Zhichao Chen$^{4}$ \quad Xu Chen$^{3}$\thanks{Corresponding author.} \quad Qingyang Dai$^{2}$ \\ 
  \textbf{Lei Wang$^{3}$ \quad Haoxuan Li$^{4}$ \quad Zhouchen Lin$^{4}$}\\
  $^1$Xiaohongshu Inc \quad
  $^2$Zhejiang University \quad
  $^3$Renmin University of China \quad
  $^4$Peking University\\
  \texttt{Ho-ward@outlook.com}
}
\author{Hao Wang$^1$\thanks{This work was done in the internship at Xiaohongshu Inc. Both authors have equal contribution.
} \qquad Licheng Pan$^{1*}$ \qquad Zhichao Chen$^2$ \qquad Xu Chen$^{3}$ \\ \textbf{Qingyang Dai$^4$} \qquad \textbf{Lei Wang$^{3}$} \qquad\enspace \textbf{Haoxuan Li$^{5\dagger}$} \enspace\quad \textbf{Zhouchen Lin$^{2,6,7}$}\thanks{Corresponding author.
}\\
    $^1$Xiaohongshu Inc.\\
    $^2$State Key Lab of General AI, School of Intelligence Science and Technology, Peking University \\
    $^3$Gaoling School of Artificial Intelligence, Renmin University of China\\
    $^4$Department of Control Science and Engineering, Zhejiang University \\
    $^5$Center for Data Science, Peking University  \\
    $^6$Institute for Artificial Intelligence, Peking University \\
    $^7$Pazhou Laboratory (Huangpu), Guangzhou, Guangdong, China  \\
\texttt{Ho-ward@outlook.com} \quad \texttt{hxli@stu.pku.edu.cn}
}
\begin{document}
\maketitle

\begin{abstract}

Training time-series forecast models presents unique challenges in designing effective learning objectives. Existing methods predominantly utilize the temporal mean squared error, which faces two critical challenges: (1) label autocorrelation, which leads to bias from the label sequence likelihood; (2) excessive amount of tasks, which increases with the forecast horizon and complicates optimization. To address these challenges, we propose Time-o1, a transformation-augmented learning objective tailored for time-series forecasting. The central idea is to transform the label sequence into decorrelated components with discriminated significance. Models are then trained to align the most significant components, thereby effectively mitigating label autocorrelation and reducing task amount. Extensive experiments demonstrate that Time-o1 achieves state-of-the-art performance and is compatible with various forecast models. Code is available at \url{https://github.com/Master-PLC/Time-o1}.

\end{abstract}
\section{Introduction}

Time-series forecasting involves predicting future data from historical observations~\cite{qiu2025easytime,k2vae} and has been applied across diverse domains, such as weather forecasting in meteorology~\citep{application_weather}, user behavior analysis in e-commerce~\citep{chenmode}, and process monitoring in manufacturing~\citep{wang2025tnnlspot,wang2024taiattentionmixer}.
To build effective forecast models, there are two questions that warrant investigation: \textit{(1) How to design a neural network architecture to encode historical observations, and (2) How to devise a learning objective to train the neural network.} Both are critical for model performance.

Recent research has primarily focused on developing neural network architectures. The key challenge lies in exploiting the autocorrelation in the historical sequences~\citep{wu2024autocts++}. To this end, various architectures have been proposed~\citep{qiu2024tfb,li2024foundts,wu2024fully}, such as recurrent neural networks~\citep{S4}, convolutional neural networks~\citep{Timesnet, micn}, and graph neural networks~\citep{FourierGNN}. 
The current progress is marked by a debate between Transformers and simple linear models. Transformers, equipped with self-attention mechanisms, offer superior scalability~\citep{itransformer, PatchTST}. In contrast, linear models, which encapsulate temporal dynamics using linear layers, are straightforward to implement and often demonstrate strong performance~\citep{DLinear}. These advancements showcase the rapid evolution in neural architecture design for time-series forecasting.

In contrast, the design of learning objectives has received less attention~\cite{wang2025iclrfredf,dbloss,psloss}. Most existing methods employ the temporal mean squared error (TMSE) as the learning objective, which measures the step-wise discrepancy between forecast and label sequences~\citep{itransformer, PatchTST}. While being effective for various scenarios, it has two critical limitations. First, it is biased against the true likelihood of label sequence due to the presence of autocorrelation in the label sequence. Second, the number of prediction tasks increases with the forecast horizon, which complicates the optimization process since multitask learning is known to be challenging given excessive tasks~\citep{mtlsurvey,cagrad}. These challenges present unique challenges in designing learning objectives for time-series forecasting.

To handle these challenges, we propose a transformation-augmented learning objective tailored for time-series forecasting. The key idea is to transform the label sequence into decorrelated components ranked by significance. By aligning the most significant decorrelated components, Time-o1 mitigates label autocorrelation and reduces the number of tasks.

Our main contributions are summarized as follows:
\begin{itemize}[leftmargin=*]
    \item We formulate two critical challenges in designing objectives for time-series forecasting: label autocorrelation that induces bias, and the excessive number of tasks that impedes optimization.
    \item We propose Time-o1, which transforms labels into decorrelated components with discriminated significance. By aligning the significant components, it addresses the two challenges above.
    \item We conduct comprehensive experiments to demonstrate Time-o1's efficacy, consistently boosting the performance of state-of-the-art forecast models across diverse datasets.
\end{itemize}

\section{Preliminaries}
This paper focuses on the time-series forecasting problem~\cite{qiu2025easytime}. By the way of preface, uppercase letters (e.g., $Y$) denote random variables, and bolded letters (e.g., $\mathbf{Y}$) denote matrices containing data or parameters. One key distinction warrants emphasis: we are concentrating on the design of learning objectives for training forecast models~\cite{wang2025iclrfredf,le2020probabilistic,le2019shape}, rather than on the design of neural network architectures to implement the forecast models~\cite{itransformer,DLinear}.

Suppose $X$ is a time-series dataset with $\D$ covariates, where $X_n$ denotes the observation at the $n$-th step. At an arbitrary $n$-th step, the historical sequence is defined as $L = [X_{n-\mathrm{H}+1}, \ldots, X_n] \in \mathbb{R}^{\mathrm{H} \times \D}$, the label sequence is defined as $Y = [X_{n+1}, \ldots, X_{n+\T}] \in \mathbb{R}^{\T \times \D}$, where $\mathrm{H}$ is the historical length and $\T$ is the forecast horizon. The target of time-series forecasting is to train a model $g: \mathbb{R}^{\mathrm{H}\times\mathrm{D}}\rightarrow\mathbb{R}^{\mathrm{T}\times\mathrm{D}}$ that generates accurate prediction sequence $\hat{Y}$ approximating the label sequence.

There are two aspects to building forecast models: (1) neural network architectures that effectively encode historical sequences, and (2) learning objectives for training these neural networks. While this paper focuses on the learning objective, we provide a brief review of both aspects for contextualization.

\subsection{Model architectures for time-series forecasting}
Neural networks have been pervasive in encoding historical sequences for their capability of automating feature interactions and capturing nonlinear correlations. Notable examples include RNNs (e.g., S4~\citep{S4}, Mamba), CNNs (e.g., TimesNet~\citep{Timesnet}), and GNNs (e.g., MTGNN~\citep{Mtgnn}), each tailored to encode the dynamics within input sequences.
The current progress centers on the comparison between Transformer-based and MLP-based architectures. Transformers (e.g., PatchTST~\citep{PatchTST}, iTransformer~\citep{itransformer}) exhibit substantial scalability with increasing data size but entail high computational costs. In contrast, MLPs (e.g., DLinear~\citep{DLinear}, TimeMixer~\citep{wang2024timemixer}) are generally more efficient but less scalable with larger datasets and struggle to handle varying input lengths.

\subsection{Learning objectives for time-series forecasting}
Modern time-series models predominantly adopt the direct forecast paradigm, generating T-step forecasts simultaneously using a multi-output head~\cite{Informer,itransformer,DLinear}. The learning objective is typically the temporal mean squared error (TMSE) between the forecast and label sequences, given by:
\begin{equation}\label{eq:tmp}
    \mathcal{L}_\mathrm{tmse}=\sum_{t=1}^\mathrm{T}\left(Y_t-\hat{Y}_t\right)^2,
\end{equation}
which is widely  employed in recent studies (e.g., FreTS~\citep{FreTS}, iTransformer~\citep{itransformer}, FredFormer~\citep{fredformer}, DUET~\cite{qiu2025duet}).
However, this objective has been shown to be biased due to the autocorrelation present in the label sequence~\cite{wang2025iclrfredf}. To address this bias, one line of research advocates for shape alignment between the forecast and label sequences to exploit autocorrelation (e.g., Dilate~\citep{le2019shape} and Soft-DTW~\citep{soft-dtw}). However, these methods lack rigorous theoretical guarantees for unbiased objective and empirical evidence of improved performance.
Another notable approach involves computing the forecast error in the frequency domain, which reduces bias with theoretical guarantees~\citep{wang2025iclrfredf}.

\section{Methodology}
\subsection{Motivation}
The learning objective is a fundamental component in training effective forecast models, yet its importance remains underexplored. Existing approaches predominantly employ the TMSE in \eqref{eq:tmp} as the objective~\citep{itransformer, PatchTST, OLinear}. This practice, however, encounters two fundamental limitations rooted in the characteristics of time-series forecasting task.

First, TMSE introduces bias due to autocorrelation. In time-series forecasting, observations exhibit strong dependencies on their past values~\citep{DLinear}, resulting in step-wise correlation in the label sequence. In contrast, TMSE treats the forecast of each step as an independent task, thereby neglecting these correlations. This mismatch makes TMSE biased with respect to the true likelihood of the label sequence, as presented in Theorem~\ref{thm:bias}.

\begin{theorem}[Autocorrelation bias]
\label{thm:bias}
Given label sequence $Y$ where $\Sigma\in\mathbb{R}^{\T\times\T}$ denotes the step-wise correlation coefficient, the TMSE in~\eqref{eq:tmp} is biased compared to the negative log-likelihood of the label sequence, which is given by:
% \begin{equation}
%     \mathrm{Bias} = \sum_{i=1}^\mathrm{T} \frac{1}{2\sigma^2} (Y_i - \hat{Y}_i)^2 - \sum_{i=1}^\mathrm{T} \frac{1}{2\sigma^2 (1 - \rho_i^2)} \left(Y_i - \left(\hat{Y}_i + \sum_{j=1}^{i-1} \rho_{ij} (Y_j - \hat{Y}_j)\right)\right)^2,
% \end{equation}
\begin{equation}
    \mathrm{Bias} = \left\|Y-\hat{Y}\right\|_{\Sigma^{-1}}^2 - \left\|Y-\hat{Y}\right\|^2 -\frac{1}{2}\log\left|\Sigma\right|.
\end{equation}
where $\|v\|_{\Sigma^{-1}}^2=v^\top\Sigma^{-1}v$. The bias vanishes if different steps in $Y$ are decorrelated.\footnote{The pioneering work~\citep{wang2025iclrfredf} identifies the bias under the first-order Markov assumption on the label sequence. This study generalizes this bias without the first-order Markov assumption.}
\end{theorem}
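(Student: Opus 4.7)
The plan is to recognize that the ``negative log-likelihood of the label sequence'' is the NLL under the natural Gaussian model $Y \mid \hat{Y} \sim \mathcal{N}(\hat{Y}, \Sigma)$, where $\Sigma$ encodes the step-wise correlation of the label process. Once this correspondence is made, the bias is simply the algebraic difference between the TMSE in~\eqref{eq:tmp} and this NLL (up to an additive constant independent of $Y$ and $\hat{Y}$), so the argument becomes a direct computation rather than a probabilistic one.

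Concretely, I would first write out the Gaussian NLL in closed form,
\begin{equation*}
-\log p(Y \mid \hat{Y}) \;=\; \tfrac{1}{2}\,(Y-\hat{Y})^\top \Sigma^{-1}(Y-\hat{Y}) \;+\; \tfrac{1}{2}\log|\Sigma| \;+\; \tfrac{\T}{2}\log(2\pi),
\end{equation*}
and identify the first quadratic term with $\tfrac{1}{2}\|Y-\hat{Y}\|_{\Sigma^{-1}}^2$ using the notation introduced in the statement. Next, I would recall that TMSE, as defined in~\eqref{eq:tmp}, coincides (up to a multiplicative factor and a dimension-dependent constant) with the NLL of the degenerate model $Y \mid \hat{Y} \sim \mathcal{N}(\hat{Y}, I)$, i.e., the model that ignores all step-wise correlations. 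Subtracting the two expressions and dropping the constant $\tfrac{\T}{2}\log(2\pi)$ yields exactly the expression claimed for $\mathrm{Bias}$, with the log-determinant term coming from the normalizing constant of the correlated Gaussian.

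Finally, for the decorrelation clause, I would specialize to $\Sigma = I$: then $\Sigma^{-1} = I$ so $\|Y-\hat{Y}\|_{\Sigma^{-1}}^2 = \|Y-\hat{Y}\|^2$, and $\log|\Sigma| = 0$, making all three terms cancel and the bias vanish. The main subtlety (and the place that deserves the most care in the writeup) is being explicit about which constants are absorbed and which scaling convention is used, since the statement drops the $\tfrac{1}{2}$ factor in front of the quadratic forms and treats the bias only up to terms independent of the parameters; this bookkeeping is the only real obstacle, as the underlying calculation is just the Gaussian NLL identity. I would also briefly note why this strengthens the pioneering result of~\cite{wang2025iclrfredf}: no first-order Markov structure on $Y$ is invoked anywhere in the derivation, only the covariance $\Sigma$, so the bias formula holds for arbitrary stationary or non-stationary correlation structures of the label sequence.
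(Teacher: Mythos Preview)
Your proposal is correct and follows essentially the same approach as the paper: model $Y$ as multivariate Gaussian with mean $\hat{Y}$ and covariance $\Sigma$, write out the negative log-likelihood, identify TMSE with the $\Sigma=I$ special case, and subtract to obtain the bias formula, then specialize to $\Sigma=I$ for the vanishing claim. If anything, you are more careful than the paper's own appendix proof, which drops the $\tfrac{1}{2}\log|\Sigma|$ term as ``unrelated to $\hat{Y}$'' even though the theorem statement retains it; your remark about bookkeeping of constants and the missing $\tfrac{1}{2}$ on the quadratic forms is exactly the right place to flag this.
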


Second, TMSE poses optimization difficulties as the forecast horizon grows. The large forecast horizon is crucial for applications such as manufacturing (enabling comprehensive production planning~\citep{wang2024taselsptd,wang2024tiispoti}) and transportation (enabling proactive traffic management~\citep{application_traffic}). As TMSE treats each forecasted step as an independent task, a large horizon results in excessive tasks. However, optimization is known to be difficult given excessive tasks~\citep{mtlsurvey,cagrad}, as gradients from different tasks often conflict~\citep{mtlsurvey, cagrad}, impeding convergence and leading to suboptimal model performance.

Designing effective learning objectives to handle the two limitations is challenging. The previous work \textbf{FreDF}~\citep{wang2025iclrfredf} proposes a frequency loss, which transforms the label and forecast sequences into frequency components and aligns them in the frequency domain. This approach is motivated by Theorem~\ref{thm:bias}: bias vanishes if different components are decorrelated. However, the decorrelation of frequency components holds only when the forecast horizon $\T\rightarrow\infty$ (see Theorem 3.3 in \citep{wang2025iclrfredf}). In real-world settings with finite horizon, frequency components remain correlated, rendering FreDF ineffective in eliminating bias. Additionally, the optimization difficulty remains, since transforming to the frequency domain retains the label length. \textit{Consequently, FreDF does not fully address the autocorrelation bias and the optimization difficulty.}

Given the critical role of objective in training forecast models and the limitations of existing methods, it is compelling to develop an innovative objective to address the limitations and advance forecast performance. Importantly, there are two questions that warrant investigation. \textit{How to devise an objective that eliminates autocorrelation and reduces task amount? Does it improve forecast performance?}

\begin{figure*}
\subfigure[Correlation in the label sequence and components.]{\includegraphics[width=0.23\linewidth]{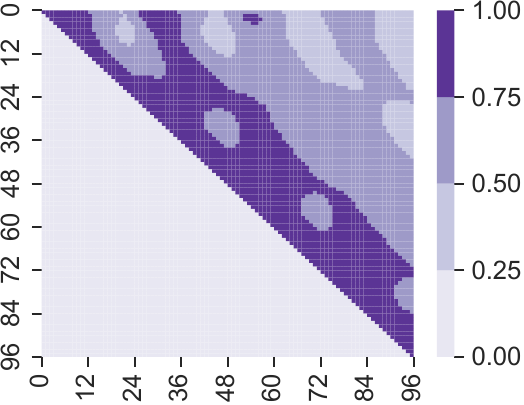}\quad \includegraphics[width=0.23\linewidth]{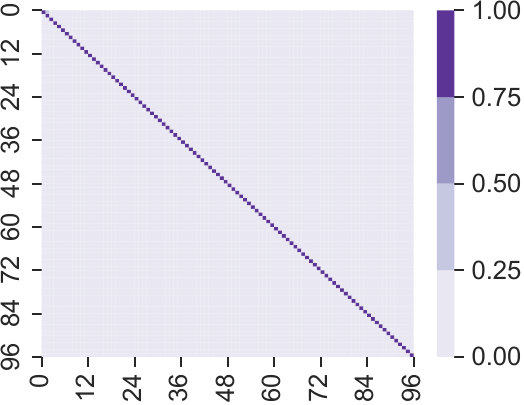}}
\hfill
    \raisebox{-0.0\height}{\rule{0.8pt}{2.4cm}} % Vertical line of 4cm height and 0.5pt width
\hfill
\subfigure[8 label sequences and their components.]{\includegraphics[width=0.23\linewidth]{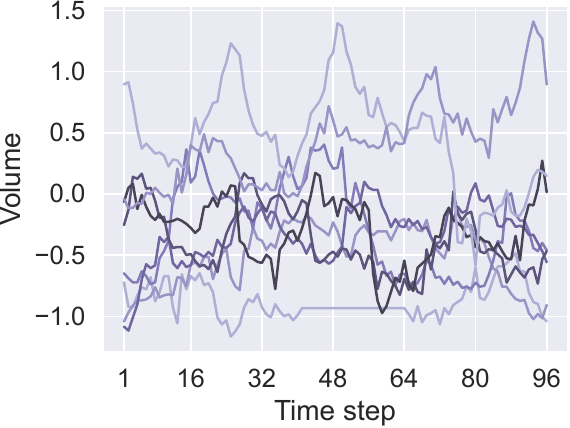}\quad \includegraphics[width=0.226\linewidth]{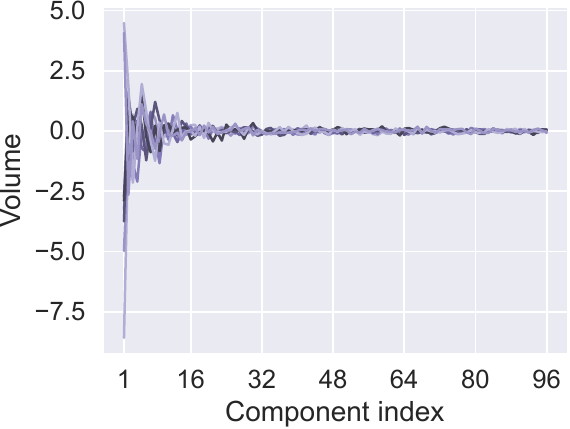}}
\caption{Comparison of label sequence and associated components. (a) shows the correlation volume within the label sequence (left panel) and components (right panel).  (b) visualizes 8 label sequences randomly from ETTh1 (left panel) and the associated components (right panel).}
\label{fig:auto}
\end{figure*}

% \section{Proposed Method}
\subsection{Transforming label sequence with optimized projection matrix}

In this section, we present a method for transforming label sequences into latent components to eliminate autocorrelation and distinguish significant components. 
Suppose $\mathbf{Y}\in\mathbb{R}^{\mathrm{m}\times\T}$ contains normalized label sequences for $\mathrm{m}$ samples, $\mathbf{P}=\left[\mathbf{P}_1,\mathbf{P}_2,...,\mathbf{P}_\mathrm{T}\right]$ is the projection matrix, and the components are produced as $\mathbf{Z}=\mathbf{Y}\mathbf{P}$. The target is for $\mathbf{Z}$ to be decorrelated and ranked by significance. For example, FreDF specifies $\mathbf{P}$ as a Fourier matrix, which does not adapt to specific data properties and thus fails to decorrelate the components and distinguish the significant components\footnote{In the subsequent paragraphs, we use the univariate case with $\mathrm{D}=1$ for clarity. In the multivariate case, different variates can be treated separately to produce decorrelated components.}.

A natural approach to obtaining the projection matrix $\mathbf{P}$ is solving optimization problem with constraints to ensure the desired properties. To find the $p$-th component, the projection vector can be calculated by solving the following problem:
\begin{equation}\label{eq:opt}
    \begin{aligned}  
\mathbf{P}_p^*= & \  \text{arg}\underset{\mathbf{P}_p}{\text{max}} & & \left(\mathbf{Y}\mathbf{P}_p\right)^\top \left(\mathbf{Y}\mathbf{P}_p\right) \\
&\ \text{subject to} &&  
\begin{cases}  
\|\mathbf{P}_p\|^2 = 1 & \\
\mathbf{P}_p^\top \mathbf{P}_j = 0,\ \forall\, j < p & \text{if } p > 1  
\end{cases}  
\end{aligned}  
\end{equation}
where $\mathbf{Z}_p=\mathbf{Y}\mathbf{P}_p$ is the $p$-th component, the normalization constraint $\|\mathbf{P}_p\|^2 = 1$  is imposed to avoid trivial solution: $\mathbf{P}_p\rightarrow\infty$. The optimization target is to maximize the variance of $\mathbf{Z}_p$, \textit{which is equivalent to maximizing its significance, as components with larger variance contain richer information}.  For $p>1$, the projection axis is required to be orthogonal to the previous axes to avoid redundancy. By solving the optimizations above from $p=1$ to $\mathrm{T}$ sequentially, we obtain the projection matrix $\mathbf{P}^* = \left[\mathbf{P}_1^*,...,\mathbf{P}_\T^*\right]$. The components are then produced as  $\mathbf{Z}=\mathbf{Y}\mathbf{P}^*$.

\begin{lemma}[Decorrelated components]\label{lem:decorrelation}
    Suppose $\mathbf{Y}\in\mathbb{R}^{\mathrm{m}\times\T}$ contains normalized label sequences for $\mathrm{m}$ samples, $\mathbf{Z}=[\mathbf{Z}_1,...,\mathbf{Z}_\mathrm{T}]$ are the obtained components; for any $p\neq p^\prime$, we have $\mathbf{Z}_p^\top\mathbf{Z}_{p^\prime}=0$. 
\end{lemma}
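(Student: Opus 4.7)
The plan is to recognize the sequential program \eqref{eq:opt} as principal component analysis performed on the Gram matrix $\mathbf{S}\triangleq\mathbf{Y}^{\top}\mathbf{Y}\in\mathbb{R}^{\mathrm{T}\times\mathrm{T}}$, which is symmetric positive semi-definite, and then exploit the orthonormality of its eigenbasis. Indeed, once I establish that the optimizers $\mathbf{P}_1^{*},\ldots,\mathbf{P}_\mathrm{T}^{*}$ can be taken to be mutually orthonormal eigenvectors of $\mathbf{S}$, the lemma follows in one line: for $p\neq p'$,
\begin{equation*}
    \mathbf{Z}_p^{\top}\mathbf{Z}_{p'}=\mathbf{P}_p^{*\top}\mathbf{Y}^{\top}\mathbf{Y}\mathbf{P}_{p'}^{*}=\mathbf{P}_p^{*\top}\mathbf{S}\mathbf{P}_{p'}^{*}=\lambda_{p'}\mathbf{P}_p^{*\top}\mathbf{P}_{p'}^{*}=0,
\end{equation*}
where $\lambda_{p'}$ is the eigenvalue paired with $\mathbf{P}_{p'}^{*}$, and the last equality uses orthogonality.

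The central step is therefore to prove by induction on $p$ that each $\mathbf{P}_p^{*}$ is an eigenvector of $\mathbf{S}$ orthogonal to the earlier optimizers. The base case is immediate: the Rayleigh quotient $\mathbf{P}_1^{\top}\mathbf{S}\mathbf{P}_1$ subject to $\|\mathbf{P}_1\|=1$ is maximized at a unit eigenvector associated with the top eigenvalue of $\mathbf{S}$ by the spectral theorem. For the inductive step, assume $\mathbf{P}_1^{*},\ldots,\mathbf{P}_{p-1}^{*}$ are orthonormal eigenvectors of $\mathbf{S}$ with eigenvalues $\lambda_1,\ldots,\lambda_{p-1}$, and form the Lagrangian
\begin{equation*}
    \mathcal{L}(\mathbf{P}_p,\nu,\mu_1,\ldots,\mu_{p-1})=\mathbf{P}_p^{\top}\mathbf{S}\mathbf{P}_p-\nu(\|\mathbf{P}_p\|^{2}-1)-\sum_{j<p}\mu_j\,\mathbf{P}_p^{\top}\mathbf{P}_j^{*}.
\end{equation*}
The stationarity condition reads $2\mathbf{S}\mathbf{P}_p^{*}-2\nu\mathbf{P}_p^{*}-\sum_{j<p}\mu_j\mathbf{P}_j^{*}=0$. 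Left-multiplying by $\mathbf{P}_k^{*\top}$ for any $k<p$ and invoking the inductive hypothesis $\mathbf{P}_k^{*\top}\mathbf{S}=\lambda_k\mathbf{P}_k^{*\top}$, the active orthogonality constraint $\mathbf{P}_k^{*\top}\mathbf{P}_p^{*}=0$, and the orthonormality $\mathbf{P}_k^{*\top}\mathbf{P}_j^{*}=\delta_{kj}$, every term vanishes except $-\mu_k$, forcing $\mu_k=0$. The stationarity condition thus collapses to the eigen-equation $\mathbf{S}\mathbf{P}_p^{*}=\nu\mathbf{P}_p^{*}$, identifying $\mathbf{P}_p^{*}$ as a unit eigenvector of $\mathbf{S}$ that is orthogonal to the previous ones. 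This closes the induction.

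The main obstacle I anticipate is the bookkeeping in the case of repeated eigenvalues of $\mathbf{S}$: within an eigenspace of dimension greater than one, the optimizer $\mathbf{P}_p^{*}$ is not unique and the Lagrangian analysis alone does not single out an orthogonal representative. This is handled by appealing to the spectral theorem to fix an orthonormal eigenbasis of $\mathbf{S}$ at the outset, and observing that the orthogonal complement $\mathrm{span}(\mathbf{P}_1^{*},\ldots,\mathbf{P}_{p-1}^{*})^{\perp}$ is an $\mathbf{S}$-invariant subspace (since each $\mathbf{P}_j^{*}$ is an eigenvector) on which $\mathbf{S}$ is still symmetric; one can therefore always select a unit eigenvector inside this complement that attains the constrained maximum, preserving both orthogonality and optimality. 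Beyond this subtlety the remainder is a direct Lagrangian plus spectral-theorem calculation, and I expect no other technical difficulties.
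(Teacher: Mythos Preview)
Your proposal is correct and follows essentially the same route as the paper: the paper also shows (via the same Lagrangian argument, carried out in its proof of Lemma~\ref{lem:svd}) that each $\mathbf{P}_p^{*}$ is an eigenvector of $\mathbf{S}=\mathbf{Y}^{\top}\mathbf{Y}$, and then in its proof of Lemma~\ref{lem:decorrelation} computes $\mathbf{Z}_p^{\top}\mathbf{Z}_{p'}=\lambda_{p'}\mathbf{P}_p^{*\top}\mathbf{P}_{p'}^{*}=0$ exactly as you do. Your write-up is slightly more self-contained (you fold the eigenvector derivation into this lemma rather than citing the SVD lemma) and more careful about the repeated-eigenvalue case, which the paper leaves implicit.
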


\begin{lemma}\label{lem:svd}
    The projection matrix $\mathbf{P}^*$ can be obtained via singular value decomposition (SVD): $\mathbf{Y}=\mathbf{U}\mathbf{\Lambda}(\mathbf{P}^*)^\top$, where $\mathbf{U}\in\mathbb{R^\mathrm{m\times m}}$ and $\mathbf{P}^*\in\mathbb{R^\mathrm{T\times T}}$ consist of singular vectors, and the diagonal of $\mathbf{\Lambda}\in\mathbb{R^\mathrm{m\times \T}}$ consists of singular values.
\end{lemma}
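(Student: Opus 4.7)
}
The plan is to recognize the sequential optimization in~\eqref{eq:opt} as the classical principal component analysis problem and then link its solution to the SVD of $\mathbf{Y}$. First I would rewrite the objective as a quadratic form $(\mathbf{Y}\mathbf{P}_p)^\top(\mathbf{Y}\mathbf{P}_p)=\mathbf{P}_p^\top(\mathbf{Y}^\top\mathbf{Y})\mathbf{P}_p$, so that $\mathbf{S}:=\mathbf{Y}^\top\mathbf{Y}\in\mathbb{R}^{\T\times\T}$ is a symmetric positive semidefinite matrix and the problem reduces to maximizing a Rayleigh quotient on the unit sphere, subject to orthogonality with previously selected directions.

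Next I would handle the base case $p=1$: applying the method of Lagrange multipliers to $\max \mathbf{P}_1^\top\mathbf{S}\mathbf{P}_1$ under $\|\mathbf{P}_1\|^2=1$ gives the stationarity condition $\mathbf{S}\mathbf{P}_1=\lambda_1\mathbf{P}_1$, so $\mathbf{P}_1^*$ is an eigenvector of $\mathbf{S}$; the maximum of the objective equals the eigenvalue, so $\mathbf{P}_1^*$ is the eigenvector associated with the \emph{largest} eigenvalue $\lambda_1$. For the inductive step I would argue that, once $\mathbf{P}_1^*,\ldots,\mathbf{P}_{p-1}^*$ are fixed to the top $p-1$ eigenvectors, the constraint $\mathbf{P}_p^\top\mathbf{P}_j=0$ for $j<p$ confines the optimization to the orthogonal complement of $\mathrm{span}\{\mathbf{P}_1^*,\ldots,\mathbf{P}_{p-1}^*\}$. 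Because $\mathbf{S}$ is symmetric, this complement is an $\mathbf{S}$-invariant subspace, and repeating the Lagrangian argument inside it shows that $\mathbf{P}_p^*$ is an eigenvector of $\mathbf{S}$ achieving the $p$-th largest eigenvalue $\lambda_p$.

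Finally I would invoke the SVD of $\mathbf{Y}$: write $\mathbf{Y}=\mathbf{U}\mathbf{\Lambda}\mathbf{V}^\top$ with singular values arranged in non-increasing order on the diagonal of $\mathbf{\Lambda}$. Then $\mathbf{Y}^\top\mathbf{Y}=\mathbf{V}\mathbf{\Lambda}^\top\mathbf{\Lambda}\mathbf{V}^\top$ is the eigendecomposition of $\mathbf{S}$, whose eigenvalues are the squared singular values in the same order and whose eigenvectors are the columns of $\mathbf{V}$. Matching this with the characterization obtained from~\eqref{eq:opt}, the $p$-th column of $\mathbf{V}$ equals $\mathbf{P}_p^*$ (up to a harmless sign), hence $\mathbf{P}^*=\mathbf{V}$ and the decomposition $\mathbf{Y}=\mathbf{U}\mathbf{\Lambda}(\mathbf{P}^*)^\top$ holds.

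The main obstacle I anticipate is the inductive argument with orthogonality constraints: one must show that restricting to the orthogonal complement of the first $p-1$ eigenvectors and maximizing the Rayleigh quotient there actually yields the next eigenvector (rather than some other stationary point). I would resolve this by invoking the spectral theorem to diagonalize $\mathbf{S}$ in an orthonormal eigenbasis, expressing any admissible $\mathbf{P}_p$ in terms of the remaining eigenvectors, and using the Courant--Fischer characterization to conclude that the maximum is attained at the eigenvector with the largest remaining eigenvalue. A secondary subtlety is non-uniqueness under repeated eigenvalues, which is immaterial for the lemma since any valid choice of orthonormal eigenbasis produces a matrix $\mathbf{P}^*$ satisfying $\mathbf{Y}=\mathbf{U}\mathbf{\Lambda}(\mathbf{P}^*)^\top$.
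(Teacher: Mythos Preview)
Your proposal is correct and follows essentially the same route as the paper: characterize each $\mathbf{P}_p^*$ as an eigenvector of $\mathbf{S}=\mathbf{Y}^\top\mathbf{Y}$ (Lagrange multipliers for $p=1$, an inductive argument under the orthogonality constraints for $p\geq 2$), and then identify these eigenvectors with the right singular vectors of $\mathbf{Y}$ via $\mathbf{Y}^\top\mathbf{Y}=\mathbf{V}\mathbf{\Lambda}^\top\mathbf{\Lambda}\mathbf{V}^\top$. The only tactical difference lies in the inductive step: the paper introduces explicit multipliers $\mu_j$ for the orthogonality constraints and shows they all vanish by taking inner products of the stationarity condition with the earlier $\mathbf{P}_k^*$, whereas you restrict to the $\mathbf{S}$-invariant orthogonal complement and invoke Courant--Fischer; both arrive at $\mathbf{S}\mathbf{P}_p^*=\lambda_p\mathbf{P}_p^*$ and the rest is identical.
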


\paragraph{Theoretical Justification. } According to Theorem~\ref{thm:bias}, the bias vanishes as the correlations between labels to be aligned are eliminated. The obtained components are  decorrelated (Lemma \ref{lem:decorrelation}), thereby mitigating autocorrelation-induced bias. Moreover, component significance decreases from $\mathbf{Z}_1$ to $\mathbf{Z}_\mathrm{T}$ as they are derived by maximizing significance under sequentially augmented constraints. Furthermore, $\mathbf{P}^*$ can be computed via SVD (Lemma~\ref{lem:svd}), offering an efficient alternative to sequentially solving the constrained optimization problems in \eqref{eq:opt}.

\textbf{Case study. } To showcase the implications of the obtained components, a case study was conducted on the ETTh1 dataset. Implementation details are provided in Appendix A. The results are illustrated in \autoref{fig:auto}, with key observations summarized as follows:
\begin{itemize}[leftmargin=*]
    \item \textbf{Decorrelation effect:} \autoref{fig:auto} (a) compares the correlation volume in the label sequence and the generated components. In the left panel, the value at row i and column j represents the correlation between the i-th and j-th steps. A large number of non-diagonal elements exhibit substantial values, with approximately 50.5\% exceeding 0.25, indicating notable autocorrelation in the label sequence. In contrast, the right panel shows negligible values for the non-diagonal elements. This demonstrates that transforming the label sequence into components effectively eliminates correlation, thereby corroborating  Theorem~\ref{lem:decorrelation}.
    
    \item \textbf{Significance discrimination:} \autoref{fig:auto} (b) compares the variance of the label sequence and associated components. In the left panel, the variance of different steps in the label sequence is relatively uniform, ranging from -1.5 to 1.5, suggesting that all steps are equally significant. In the right panel, however, only a few components exhibit large variance, while the others fluctuate within a narrow range. This indicates that the significance of different components can be clearly discerned, allowing for a trade-off between a slight loss of information and reduced optimization complexity by focusing on the most significant components.
\end{itemize}

The transformation is highly inspired by principal component analysis (PCA)~\citep{pca1,pca2}. However, one key distinction warrants emphasis. Existing works dominantly employ principal component analysis on \textit{input features} for obtaining informative representations~\citep{pca3,pca4}, in contrast, we adapt it to \textit{label sequence}, specifically aiming to reduce autocorrelation bias and simplify optimization for time-series forecasting. 
To our knowledge, this remains a technically innovative strategy.

\begin{wrapfigure}{r}{0.5\linewidth}
\centering
\vspace{-8mm}
% \vskip 0.2in
\begin{minipage}{\linewidth}
\begin{algorithm}[H]
% \setstretch{1}
% \algsetup{linenosize=\scriptsize}
\footnotesize
\caption{The workflow of Time-o1.}
\label{algo:transdf}
% \flushleft
\textbf{Input}: $\hat{\mathbf{Y}}$: forecast sequences, $\mathbf{Y}$: label sequences. \\
\textbf{Parameter}: 
$\alpha$: the relative weight of the transformed loss, $\gamma$: the ratio of involved significant components. \\
\textbf{Output}: $\mathcal{L}_{\alpha,\gamma}$: the obtained learning objective. \\
\begin{algorithmic}[1] %[1] enables line numbers
% \STATE $\hat{\mathbf{Y}}\leftarrow g(\mathbf{L})$.
\STATE $\mathbf{Y}\leftarrow\mathrm{Standardize}(\mathbf{Y})$.
\STATE $\mathbf{P}^* \leftarrow \mathrm{SVD (\mathbf{Y})}$
% \STATE $\mathrm{K}\leftarrow \mathrm{round}(\gamma \cdot \mathrm{T})$
\STATE $\mathbf{Z}\leftarrow\mathbf{Y}\mathbf{P}^*, \hat{\mathbf{Z}}\leftarrow\hat{\mathbf{Y}}\mathbf{P}^*$
\STATE $\mathrm{K}\leftarrow \mathrm{round}(\gamma \cdot \mathrm{T})$
\STATE $\mathcal{L}_{\mathrm{trans}, \gamma} \leftarrow \|\hat{\mathbf{Z}}_{\cdot,1:\mathrm{K}} - \mathbf{Z}_{\cdot,1:\mathrm{K}}\|_1$
\STATE $\mathcal{L}_{\mathrm{tmp}} \leftarrow \|\hat{\mathbf{Y}} - \mathbf{Y}\|_2^2$
\STATE $\mathcal{L}_{\alpha,\gamma}:=\alpha\cdot \mathcal{L}_{\mathrm{trans},\gamma} +(1-\alpha)\cdot \mathcal{L}_\mathrm{tmse}.$
\end{algorithmic}
% \skip -0.1in
\end{algorithm}
\label{fig:framework}
\vspace{-5mm}
\end{minipage}
\end{wrapfigure}

\subsection{Model implementation}

In this section, we present the implementation details of Time-o1. The approach centers on extracting the latent components from the label sequence, then optimizing the forecast model using the most significant components. 
% This design addresses the challenge of autocorrelation bias, while mitigating the optimization difficulty.

Given an input historical sequence, the forecast model predicts a sequence $\hat{\mathbf{Y}}$.  In line with prevailing preprocessing practices~\cite{itransformer,DLinear,fredformer}, label sequences are first standardized (step 1), which facilitates the decorrelation prerequisite specified in Lemma~\ref{lem:decorrelation}. Next, following Lemma~\ref{lem:svd}, we compute the optimal projection by applying SVD to the label sequence. The matrix $\mathbf{P}^*$, composed of the right singular vectors, provides the required projections described in \eqref{eq:opt}. Both forecasted and label sequences are then projected into the latent component space (step 3), where the first column carries the largest significance, successively diminishing across columns.

Suppose $\mathrm{K}$ is the number of retained components, the training objective using them is given by:
\begin{equation}\label{eq:freq}
\begin{aligned}
    \mathcal{L}_{\mathrm{trans}, \gamma} 
   :&= \left\|\hat{\mathbf{Z}}_{\cdot,1:\mathrm{K}} - \mathbf{Z}_{\cdot,1:\mathrm{K}}\right\|_1,
\end{aligned}
\end{equation}
where $\mathrm{K}=\mathrm{round}(\gamma\cdot\T)$, with $\gamma$ controlling the involution ratio, the $\ell_1$ norm $\left\| \cdot \right\|_1$ computes the sum of element-wise absolute differences. Typically,
we use the $\ell_1$ norm instead of the squared norm following \citep{wang2025iclrfredf}, considering that latent components typically vary greatly in scale (\autoref{fig:auto}), which makes the squared norm unstable in practice. The $\ell_1$ norm yields more stable and robust optimization.

Finally, the two objectives ($\mathcal{L}_\mathrm{tmse}$ and $\mathcal{L}_{\mathrm{trans},\gamma}$) are fused following \citep{wang2025iclrfredf}, with  $0\leq\alpha\leq1$ controlling the relative contribution:
\begin{equation}\label{eq:obj_final}
    \mathcal{L}_{\alpha,\gamma}:=\alpha\cdot \mathcal{L}_{\mathrm{trans},\gamma} +(1-\alpha)\cdot \mathcal{L}_\mathrm{tmse}.
\end{equation}

By projecting both forecasts and labels into decorrelated components, Time-o1 effectively reduces autocorrelation bias. By focusing exclusively on the most significant components, Time-o1 reduces optimization difficulty with minimal information loss. Time-o1 is model-agnostic, offering practitioners the flexibility to employ the most suitable forecast model for each specific scenario.

\section{Experiments}
To demonstrate the efficacy of Time-o1, there are six aspects empirically investigated:
\begin{enumerate}[leftmargin=*]
    \item \textbf{Performance:} \textit{Does Time-o1 work?} We compare Time-o1 with state-of-the-art baselines using public datasets on long-term forecasting in Section \ref{sec:overall} and short-term forecasting tasks in Appendix~\ref{sec:overall_app}. Moreover, we compare it with other learning objectives in Section~\ref{sec:compete}.
    \item \textbf{Gain:} \textit{How does it work?} Section \ref{sec:ablation} offers an ablative study to dissect the contributions of the individual factors of Time-o1, elucidating their roles in enhancing forecast accuracy.
    \item \textbf{Generality:} \textit{Does it support other forecast models?} Section \ref{sec:generalize} verifies the adaptability of Time-o1 across different forecast models, with additional results in Appendix \ref{sec:generalize_app}.
    \item \textbf{Flexibility:} \textit{Does it support alternative transformations?} Section \ref{sec:generalize} also investigates generating latent components with other transformations to showcase  flexibility of implementation.
    \item \textbf{Sensitivity:} \textit{Does it require careful fine-tuning?} Section \ref{sec:hyper} presents a sensitivity analysis of the hyperparameter $\alpha$, where Time-o1 maintains efficacy across a broad range of parameter values.
    \item \textbf{Efficiency:} \textit{Is Time-o1 computationally expensive?} Section~\ref{sec:comp_app} investigates the running cost of Time-o1 in diverse settings.
\end{enumerate}

\subsection{Setup}
\paragraph{Datasets.} 
In this work, we conduct experiments on ETT (4 subsets), ECL, Traffic, Weather, and PEMS~\citep{itransformer} for long-term forecasting task, and M4 for short-term forecasting task~\cite{Timesnet}. All datasets are split chronologically into training, validation, and testing sets following the established work~\cite{wang2025iclrfredf}.

\paragraph{Baselines.} We compare Time-o1 against several established methods, grouped as: (1) Transformer-based methods: Transformer~\citep{Transformer}, Autoformer~\citep{Autoformer}, FEDformer~\citep{fedformer}, iTransformer~\citep{itransformer}, and Fredformer~\citep{fredformer}; (2) MLP-based methods: DLinear~\citep{DLinear}, TiDE~\citep{das2023long}, and FreTS~\citep{FreTS}; and (3) other competitive models: TimesNet~\citep{Timesnet} and MICN~\citep{micn}.

\paragraph{Implementation.} The baseline models are reproduced using the scripts provided by  Fredformer~\citep{fredformer}. Notably, we disable the drop-last trick to ensure fair comparison following Qiu et al. \cite{qiu2024tfb}. They are trained using the Adam~\citep{Adam} optimizer to minimize the TMSE loss. Datasets are split chronologically into training, validation, and test sets. Following the protocol outlined in the comprehensive benchmark~\citep{qiu2024tfb}, the dropping-last trick is disabled during the test phase.
When integrating Time-o1 to enhance an established model, we adhere to the associated hyperparameter settings in the public benchmark~\citep{fredformer,itransformer}, only tuning $\alpha$, $\gamma$ and learning rate conservatively. Experiments are conducted on Intel(R) Xeon(R) Platinum 8383C CPUs and NVIDIA RTX H100 GPUs.

\subsection{Overall performance}\label{sec:overall}
\begin{table*}
  \caption{Long-term forecasting performance.}\label{tab:longterm}
  \renewcommand{\arraystretch}{1} 
  \setlength{\tabcolsep}{2.3pt}
  \centering
  \scriptsize
  \renewcommand{\multirowsetup}{\centering}
  \begin{threeparttable}
  \begin{tabular}{c|c|cc|cc|cc|cc|cc|cc|cc|cc|cc|cc|cc}
    \toprule
    \multicolumn{2}{l}{\multirow{2}{*}{\rotatebox{0}{\scaleb{Models}}}} & 
    \multicolumn{2}{c}{\rotatebox{0}{\scaleb{\textbf{Time-o1}}}} &
    \multicolumn{2}{c}{\rotatebox{0}{\scaleb{Fredformer}}} &
    \multicolumn{2}{c}{\rotatebox{0}{\scaleb{iTransformer}}} &
    \multicolumn{2}{c}{\rotatebox{0}{\scaleb{FreTS}}} &
    \multicolumn{2}{c}{\rotatebox{0}{\scaleb{TimesNet}}} &
    \multicolumn{2}{c}{\rotatebox{0}{\scaleb{MICN}}} &
    \multicolumn{2}{c}{\rotatebox{0}{\scaleb{TiDE}}} &
    \multicolumn{2}{c}{\rotatebox{0}{\scaleb{DLinear}}} &
    \multicolumn{2}{c}{\rotatebox{0}{\scaleb{FEDformer}}} &
    \multicolumn{2}{c}{\rotatebox{0}{\scaleb{Autoformer}}} &
    \multicolumn{2}{c}{\rotatebox{0}{\scaleb{Transformer}}} \\
    \multicolumn{2}{c}{} &
    \multicolumn{2}{c}{\scaleb{\textbf{(Ours)}}} & 
    \multicolumn{2}{c}{\scaleb{(2024)}} & 
    \multicolumn{2}{c}{\scaleb{(2024)}} & 
    \multicolumn{2}{c}{\scaleb{(2023)}} & 
    \multicolumn{2}{c}{\scaleb{(2023)}} &
    \multicolumn{2}{c}{\scaleb{(2023)}} & 
    \multicolumn{2}{c}{\scaleb{(2023)}} & 
    \multicolumn{2}{c}{\scaleb{(2023)}} & 
    \multicolumn{2}{c}{\scaleb{(2022)}} &
    \multicolumn{2}{c}{\scaleb{(2021)}} &
    \multicolumn{2}{c}{\scaleb{(2017)}} \\
    \cmidrule(lr){3-4} \cmidrule(lr){5-6}\cmidrule(lr){7-8} \cmidrule(lr){9-10}\cmidrule(lr){11-12} \cmidrule(lr){13-14} \cmidrule(lr){15-16} \cmidrule(lr){17-18} \cmidrule(lr){19-20} \cmidrule(lr){21-22} \cmidrule(lr){23-24}
    \multicolumn{2}{l}{\rotatebox{0}{\scaleb{Metrics}}}  & \scalea{MSE} & \scalea{MAE}  & \scalea{MSE} & \scalea{MAE}  & \scalea{MSE} & \scalea{MAE}  & \scalea{MSE} & \scalea{MAE}  & \scalea{MSE} & \scalea{MAE}  & \scalea{MSE} & \scalea{MAE} & \scalea{MSE} & \scalea{MAE} & \scalea{MSE} & \scalea{MAE} & \scalea{MSE} & \scalea{MAE} & \scalea{MSE} & \scalea{MAE} & \scalea{MSE} & \scalea{MAE} \\
    \midrule
    \multicolumn{2}{l}{\scalea{ETTm1}} & \bst{\scalea{0.380}} & \bst{\scalea{0.393}} & \subbst{\scalea{0.387}} & \subbst{\scalea{0.398}} & \scalea{0.411} & \scalea{0.414} & \scalea{0.414} & \scalea{0.421} & \scalea{0.438} & \scalea{0.430} & \scalea{0.396} & \scalea{0.421} & \scalea{0.413} & \scalea{0.407} & \scalea{0.403} & \scalea{0.407} & \scalea{0.442} & \scalea{0.457} & \scalea{0.526} & \scalea{0.491} & \scalea{0.799} & \scalea{0.648} \\
    \midrule
    \multicolumn{2}{l}{\scalea{ETTm2}} & \bst{\scalea{0.272}} & \bst{\scalea{0.317}} & \subbst{\scalea{0.280}} & \subbst{\scalea{0.324}} & \scalea{0.295} & \scalea{0.336} & \scalea{0.316} & \scalea{0.365} & \scalea{0.302} & \scalea{0.334} & \scalea{0.308} & \scalea{0.364} & \scalea{0.286} & \scalea{0.328} & \scalea{0.342} & \scalea{0.392} & \scalea{0.308} & \scalea{0.354} & \scalea{0.315} & \scalea{0.358} & \scalea{1.662} & \scalea{0.917} \\
    \midrule
    \multicolumn{2}{l}{\scalea{ETTh1}} & \bst{\scalea{0.431}} & \bst{\scalea{0.429}} & \subbst{\scalea{0.447}} & \subbst{\scalea{0.434}} & \scalea{0.452} & \scalea{0.448} & \scalea{0.489} & \scalea{0.474} & \scalea{0.472} & \scalea{0.463} & \scalea{0.533} & \scalea{0.519} & \scalea{0.448} & \scalea{0.435} & \scalea{0.456} & \scalea{0.453} & \scalea{0.447} & \scalea{0.470} & \scalea{0.477} & \scalea{0.483} & \scalea{0.983} & \scalea{0.774} \\
    \midrule
    \multicolumn{2}{l}{\scalea{ETTh2}} & \bst{\scalea{0.359}} & \bst{\scalea{0.388}} & \subbst{\scalea{0.377}} & \scalea{0.402} & \scalea{0.386} & \scalea{0.407} & \scalea{0.524} & \scalea{0.496} & \scalea{0.409} & \scalea{0.420} & \scalea{0.620} & \scalea{0.546} & \scalea{0.378} & \subbst{\scalea{0.401}} & \scalea{0.529} & \scalea{0.499} & \scalea{0.452} & \scalea{0.461} & \scalea{0.448} & \scalea{0.460} & \scalea{2.688} & \scalea{1.291} \\
    \midrule
    \multicolumn{2}{l}{\scalea{ECL}} & \bst{\scalea{0.170}} & \bst{\scalea{0.260}} & \scalea{0.191} & \scalea{0.284} & \subbst{\scalea{0.179}} & \subbst{\scalea{0.270}} & \scalea{0.199} & \scalea{0.288} & \scalea{0.212} & \scalea{0.306} & \scalea{0.192} & \scalea{0.302} & \scalea{0.215} & \scalea{0.292} & \scalea{0.212} & \scalea{0.301} & \scalea{0.214} & \scalea{0.328} & \scalea{0.249} & \scalea{0.354} & \scalea{0.265} & \scalea{0.358} \\
    \midrule
    \multicolumn{2}{l}{\scalea{Traffic}} & \bst{\scalea{0.419}} & \bst{\scalea{0.280}} & \scalea{0.486} & \scalea{0.336} & \subbst{\scalea{0.426}} & \subbst{\scalea{0.285}} & \scalea{0.538} & \scalea{0.330} & \scalea{0.631} & \scalea{0.338} & \scalea{0.529} & \scalea{0.312} & \scalea{0.624} & \scalea{0.373} & \scalea{0.625} & \scalea{0.384} & \scalea{0.640} & \scalea{0.398} & \scalea{0.662} & \scalea{0.416} & \scalea{0.692} & \scalea{0.379} \\
    \midrule
    \multicolumn{2}{l}{\scalea{Weather}} & \bst{\scalea{0.241}} & \bst{\scalea{0.280}} & \scalea{0.261} & \subbst{\scalea{0.282}} & \scalea{0.269} & \scalea{0.289} & \subbst{\scalea{0.249}} & \scalea{0.293} & \scalea{0.271} & \scalea{0.295} & \scalea{0.264} & \scalea{0.321} & \scalea{0.272} & \scalea{0.291} & \scalea{0.265} & \scalea{0.317} & \scalea{0.326} & \scalea{0.372} & \scalea{0.319} & \scalea{0.365} & \scalea{0.699} & \scalea{0.601} \\
    \midrule
    \multicolumn{2}{l}{\scalea{PEMS03}} & \bst{\scalea{0.097}} & \bst{\scalea{0.208}} & \scalea{0.146} & \scalea{0.260} & \scalea{0.122} & \scalea{0.233} & \scalea{0.149} & \scalea{0.261} & \scalea{0.126} & \scalea{0.230} & \subbst{\scalea{0.106}} & \subbst{\scalea{0.223}} & \scalea{0.316} & \scalea{0.370} & \scalea{0.216} & \scalea{0.322} & \scalea{0.152} & \scalea{0.275} & \scalea{0.411} & \scalea{0.475} & \scalea{0.122} & \scalea{0.226} \\
    \midrule
    \multicolumn{2}{l}{\scalea{PEMS08}} & \bst{\scalea{0.141}} & \bst{\scalea{0.237}} & \scalea{0.171} & \scalea{0.271} & \subbst{\scalea{0.149}} & \subbst{\scalea{0.247}} & \scalea{0.174} & \scalea{0.275} & \scalea{0.152} & \scalea{0.243} & \scalea{0.153} & \scalea{0.258} & \scalea{0.318} & \scalea{0.378} & \scalea{0.249} & \scalea{0.332} & \scalea{0.226} & \scalea{0.312} & \scalea{0.422} & \scalea{0.456} & \scalea{0.240} & \scalea{0.261} \\
    \bottomrule
  \end{tabular}
  \begin{tablenotes}
    \item  \scriptsize \textit{Note}:  We fix the input length as 96 following~\citep{itransformer}. \bst{Bold} and \subbst{underlined} denote best and second-best results, respectively. \emph{Avg} indicates average results over forecast horizons: T=96, 192, 336 and 720. Time-o1 employs the top-performing baseline on each dataset as its underlying forecast model.
\end{tablenotes}
\end{threeparttable}
  % \vskip -0.1in
\end{table*}

Table~\ref{tab:longterm} presents the long-term forecasting results. Time-o1 consistently improves base model performance. For example, on ETTh1, it reduces Fredformer's MSE by 0.016. Similar gains across other datasets further validate its effectiveness.
These results suggest that modifying the learning objective can yield improvements comparable to, or even exceeding, those from architectural advancements. We attribute this to two key aspects of Time-o1: its decorrelation effect for debiased training, and its discrimination on significant components, which simplifies optimization.

\begin{figure}
\begin{center}
\subfigure[The generated forecast with DF.]{\includegraphics[width=0.24\linewidth]{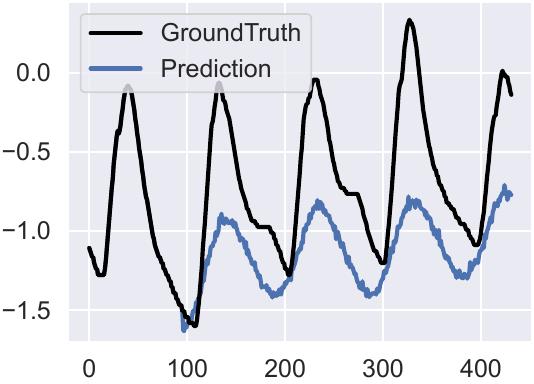}
\includegraphics[width=0.24\linewidth]{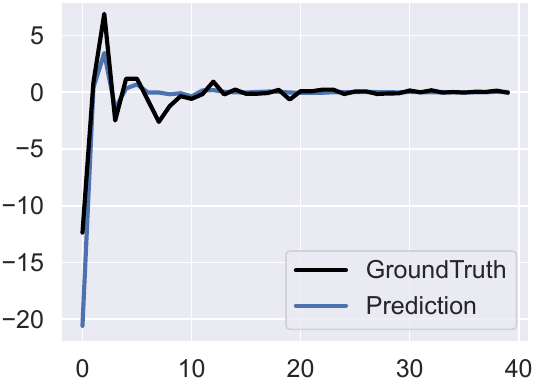}
}
\subfigure[The generated forecast with Time-o1.]{\includegraphics[width=0.24\linewidth]{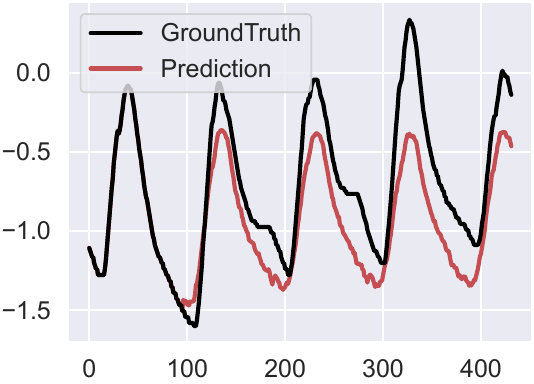}
\includegraphics[width=0.24\linewidth]{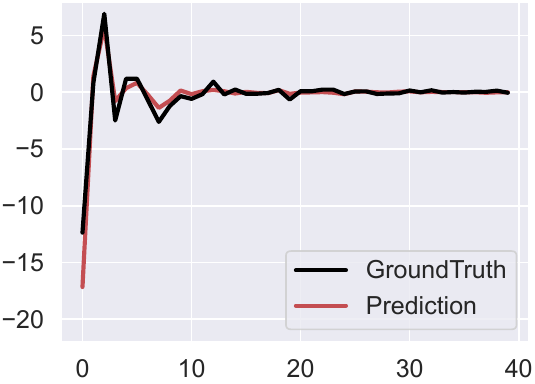}
}
\caption{The visualization of forecast sequence generated by DF and Time-o1. The left panels in (a) and (b) present label and forecast sequences, the right panels present the associated components.}\label{fig:case}
\end{center}
\end{figure}
\begin{table*}
  \caption{Comparable results with other objectives for time-series forecast.}\label{tab:loss_avg}
  \renewcommand{\arraystretch}{1.2} \setlength{\tabcolsep}{4.6pt} \scriptsize
  \centering
  \renewcommand{\multirowsetup}{\centering}
  \begin{threeparttable}
  \begin{tabular}{c|l|cc|cc|cc|cc|cc|cc|cc}
    \toprule
    \multicolumn{2}{l}{Loss} & 
    \multicolumn{2}{c}{\textbf{Time-o1}} &
    \multicolumn{2}{c}{FreDF} &
    \multicolumn{2}{c}{Koopman} &
    \multicolumn{2}{c}{Dilate} &
    \multicolumn{2}{c}{Soft-DTW} &
    \multicolumn{2}{c}{DPTA} &  % Dynamic Programming on Timing alignment
    \multicolumn{2}{c}{DF} \\
    \cmidrule(lr){3-4} \cmidrule(lr){5-6}\cmidrule(lr){7-8} \cmidrule(lr){9-10}\cmidrule(lr){11-12}\cmidrule(lr){13-14}\cmidrule(lr){15-16}
    \multicolumn{2}{l}{Metrics}  & MSE & MAE  & MSE & MAE  & MSE & MAE  & MSE & MAE  & MSE & MAE  & MSE & MAE  & MSE & MAE  \\
    \toprule
    
    \multirow{4}{*}{{\rotatebox{90}{\scaleb{Fredformer}}}}
    & ETTm1 & \bst{0.379} & \bst{0.393} & \subbst{0.384} & \subbst{0.394} & 0.389 & 0.400 & 0.389 & 0.400 & 0.397 & 0.402 & 0.396 & 0.402 & 0.387 & 0.398 \\
    & ETTh1 & \bst{0.431} & \bst{0.429} & \subbst{0.438} & \subbst{0.434} & 0.452 & 0.443 & 0.453 & 0.442 & 0.460 & 0.449 & 0.460 & 0.449 & 0.447 & 0.434 \\
    & ECL & \bst{0.178} & \bst{0.270} & \subbst{0.179} & \subbst{0.272} & 0.190 & 0.282 & 0.187 & 0.280 & 0.206 & 0.298 & 0.202 & 0.294 & 0.191 & 0.284 \\
    & Weather & \bst{0.255} & \bst{0.276} & \subbst{0.256} & \subbst{0.277} & 0.257 & 0.279 & 0.258 & 0.280 & 0.261 & 0.280 & 0.260 & 0.280 & 0.261 & 0.282 \\
    \midrule

    \multirow{4}{*}{{\rotatebox{90}{\scaleb{iTransformer}}}}
    & ETTm1 & \bst{0.395} & \bst{0.401} & \subbst{0.405} & \subbst{0.405} & 0.413 & 0.416 & 0.407 & 0.412 & 0.417 & 0.415 & 0.416 & 0.415 & 0.411 & 0.414 \\
    & ETTh1 & \bst{0.438} & \bst{0.434} & \subbst{0.442} & \subbst{0.437} & 0.455 & 0.451 & 0.452 & 0.448 & 0.470 & 0.457 & 0.463 & 0.454 & 0.452 & 0.448 \\
    & ECL & \bst{0.170} & \bst{0.260} & 0.176 & \subbst{0.264} & 0.178 & 0.269 & 0.178 & 0.269 & \subbst{0.175} & 0.266 & 0.177 & 0.267 & 0.179 & 0.270 \\
    & Weather & \bst{0.251} & \bst{0.272} & \subbst{0.257} & \subbst{0.276} & 0.289 & 0.313 & 0.286 & 0.309 & 0.292 & 0.316 & 0.291 & 0.313 & 0.269 & 0.289 \\
    \bottomrule
  \end{tabular}
  \begin{tablenotes}
    \item  \scriptsize \textit{Note}:  \bst{Bold} and \subbst{underlined} denote best and second-best results, respectively. The reported results are averaged over forecast horizons: T=96, 192, 336 and 720.
\end{tablenotes}
  \end{threeparttable}
\end{table*}

\paragraph{Showcases.}
We visualize the forecast sequences and the generated components to showcase the improvements of Time-o1 in forecast quality. 
A snapshot on ETTm2 with historical window $\mathrm{H}=96$ and forecast horizon $\T=336$ is depicted in \autoref{fig:case}.
Although the model trained using canonical DF captures general trends, its forecast struggles with large variations (e.g., peaks within steps 100-400). This reflects its difficulty in modeling significant, high-variance components. In contrast, Time-o1, by explicitly discriminating and aligning these significant components, generates a forecast that accurately captures these large variations, including the peaks within steps 100-400.

\subsection{Learning objective comparison}
\label{sec:compete}

Table~\ref{tab:loss_avg} compares Time-o1 against other time-series learning objectives: FreDF~\citep{wang2025iclrfredf}, Koopman~\citep{koopman}, Dilate~\citep{le2019shape}, Soft-DTW~\citep{soft-dtw}, and DPTA~\citep{dtw}. For fair evaluation, we integrated their official implementations into both Fredformer and iTransformer. 

Overall, shape alignment objectives (Dilate, Soft-DTW, DPTA) offer little performance gain over canonical DF (using TMSE loss), consistent with the findings in~\citep{le2019shape}. 
This phenomenon is rationalized by the fact that they do not mitigate the label autocorrelation nor reduce task amounts for simplifying optimization.
FreDF improves performance by partly addressing autocorrelation bias. However, as discussed in Section 3.1, FreDF does not fully eliminate this bias, nor does it distinguish significant components to simplify the optimization landscape. Time-o1 directly addresses these two limitations of FreDF, leading to its superior overall performance.

\subsection{Ablation studies}\label{sec:ablation}
\begin{table*}[t]
\caption{Ablation study results.}\label{tab:system_ablation_app}
\setlength{\tabcolsep}{4.pt} \tiny \centering
\begin{tabular}{llllccccccccccccccc}
    \toprule
    \multirow{2}{*}{Model} & \multirow{2}{*}{Decorrelation} & \multirow{2}{*}{Reduction} &\multirow{2}{*}{Data} && \multicolumn{2}{c}{T=96} && \multicolumn{2}{c}{T=192} && \multicolumn{2}{c}{T=336} && \multicolumn{2}{c}{T=720} && \multicolumn{2}{c}{Avg} \\
    \cmidrule{6-7} \cmidrule{9-10} \cmidrule{12-13} \cmidrule{15-16} \cmidrule{18-19}
    &&&&& MSE  & MAE && MSE & MAE && MSE & MAE && MSE & MAE && MSE & MAE \\
    \midrule
    
    \multirow{4}{*}{DF} & \multirow{4}{*}{\XSolidBrush} & \multirow{4}{*}{\XSolidBrush}
    &ETTm1&& 0.326 & 0.361 && 0.365 & 0.382 && 0.396 & 0.404 && 0.459 & 0.444 && 0.387 & 0.398 \\
    &&&ETTh1&& 0.377 & 0.396 && 0.437 & 0.425 && 0.486 & 0.449 && 0.488 & 0.467 && 0.447 & 0.434 \\
    &&&ECL&& 0.150 & 0.242 && 0.168 & 0.259 && 0.182 & 0.274 && 0.214 & 0.304 && 0.179 & 0.270 \\
    &&&Weather&& 0.174 & 0.228 && 0.213 & 0.266 && 0.270 & 0.316 && 0.337 & 0.362 && 0.249 & 0.293 \\
    \midrule

    \multirow{4}{*}{Time-o1$^\dagger$} & \multirow{4}{*}{\XSolidBrush} & \multirow{4}{*}{\Checkmark}
    &ETTm1&& 0.338 & 0.366 && 0.369 & 0.383 && 0.397 & 0.403 && 0.458 & 0.441 && 0.391 & 0.398 \\
    &&&ETTh1&& 0.376 & 0.395 && 0.437 & 0.430 && 0.478 & 0.450 && \subbst{0.469} & 0.467 && 0.440 & 0.436 \\
    &&&ECL&& 0.150 & 0.239 && 0.164 & 0.253 && 0.178 & 0.268 && 0.210 & 0.296 && 0.175 & 0.264 \\
    &&&Weather&& \subbst{0.170} & \subbst{0.216} && 0.213 & 0.259 && 0.262 & \subbst{0.300} && 0.332 & \subbst{0.351} && 0.244 & \subbst{0.281} \\
    \midrule
    
    \multirow{4}{*}{Time-o1$^\ddagger$} & \multirow{4}{*}{\Checkmark} & \multirow{4}{*}{\XSolidBrush}
    &ETTm1&& \subbst{0.324} & \subbst{0.359} && \subbst{0.362} & \subbst{0.379} && \subbst{0.390} & \subbst{0.400} && \subbst{0.451} & \subbst{0.438} && \subbst{0.382} & \subbst{0.394} \\
    &&&ETTh1&& \subbst{0.373} & \subbst{0.395} && \subbst{0.433} & \subbst{0.423} && \subbst{0.476} & \subbst{0.445} && 0.474 & \subbst{0.463} && 0.439 & \subbst{0.431} \\
    &&&ECL&& \subbst{0.147} & \subbst{0.238} && \subbst{0.162} & \subbst{0.252} && \subbst{0.174} & \subbst{0.267} && \subbst{0.205} & \subbst{0.294} && \subbst{0.172} & \subbst{0.263} \\
    &&&Weather&& 0.172 & 0.220 && \subbst{0.211} & \subbst{0.259} && \subbst{0.261} & 0.301 && \subbst{0.331} & 0.353 && \subbst{0.244} & 0.283 \\
    \midrule
    
    \multirow{4}{*}{Time-o1} & \multirow{4}{*}{\Checkmark} & \multirow{4}{*}{\Checkmark}
    &ETTm1&& \bst{0.321} & \bst{0.357} && \bst{0.360} & \bst{0.378} && \bst{0.389} & \bst{0.400} && \bst{0.447} & \bst{0.435} && \bst{0.379} & \bst{0.393} \\
    &&&ETTh1&& \bst{0.368} & \bst{0.391} && \bst{0.424} & \bst{0.422} && \bst{0.467} & \bst{0.441} && \bst{0.465} & \bst{0.463} && \bst{0.431} & \bst{0.429} \\
    &&&ECL&& \bst{0.145} & \bst{0.235} && \bst{0.159} & \bst{0.249} && \bst{0.173} & \bst{0.264} && \bst{0.203} & \bst{0.292} && \bst{0.170} & \bst{0.260} \\
    &&&Weather&& \bst{0.169} & \bst{0.219} && \bst{0.210} & \bst{0.258} && \bst{0.259} & \bst{0.297} && \bst{0.327} & \bst{0.349} && \bst{0.241} & \bst{0.280} \\
    \bottomrule
\end{tabular}
\begin{tablenotes}
    \item  \scriptsize \textit{Note}:  \bst{Bold} and \subbst{underlined} denote best and second-best results, respectively.
\end{tablenotes}
\end{table*}

\autoref{tab:system_ablation_app} presents an ablation study dissecting the contributions of critical factors in Time-o1: the decorrelation effect and the task reduction effect. The  main findings are summarized as follows.
\begin{itemize}[leftmargin=*]
    \item Time-o1$^\dagger$ improves DF by reducing the number of tasks to optimize. To this end, it employs a randomized matrix as the projection matrix to generate components and aligns only a subset of the obtained components. The involution ratio $\gamma$ is finetuned on the validation set. It consistently improves over DF (e.g., $-$0.012 MAE on Weather). This demonstrates that reducing tasks with a minimal loss of label information can reduce optimization difficulty and improve performance.
    \item Time-o1$^\ddagger$ improves DF by aligning decorrelated components. To this end,  the objective is calculated in \eqref{eq:obj_final} with $\gamma=1$. It also outperforms DF, achieving the second-best results overall. This demonstrates aligning decorrelated label components to mitigate bias benefits forecast performance.
    \item Time-o1 integrates both factors above by aligning the most significant decorrelated components, which achieves the best performance, demonstrating the synergistic effect of these two factors.
\end{itemize}

\subsection{Generalization studies}\label{sec:generalize}
In this section, we investigate the utility of Time-o1 with different transformation strategies and forecast models, to showcase the generality of Time-o1. In the bar-plots, the forecast errors are averaged over forecast lengths (96, 192, 336, 720), with error bars as 50\% confidence intervals.

\begin{figure}
\begin{center}
\subfigure[ECL with MSE]{\includegraphics[width=0.24\linewidth]{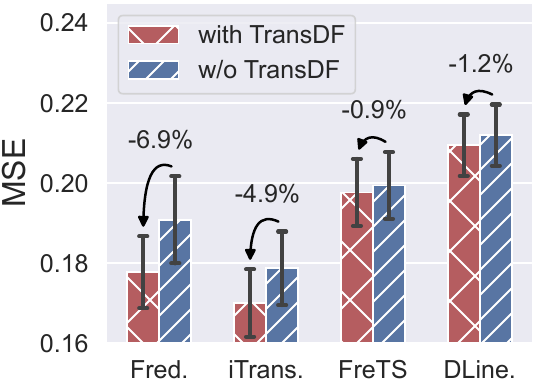}}
\subfigure[ECL with MAE]{\includegraphics[width=0.24\linewidth]{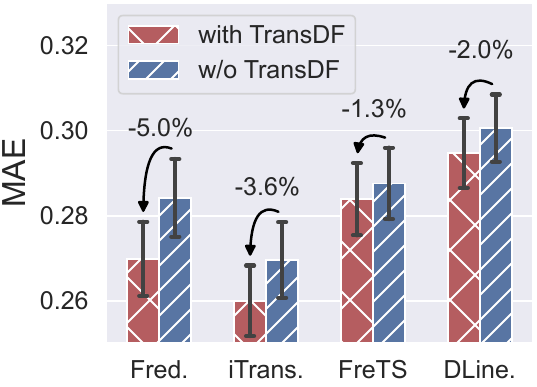}}
\subfigure[Weather with MSE]{\includegraphics[width=0.24\linewidth]{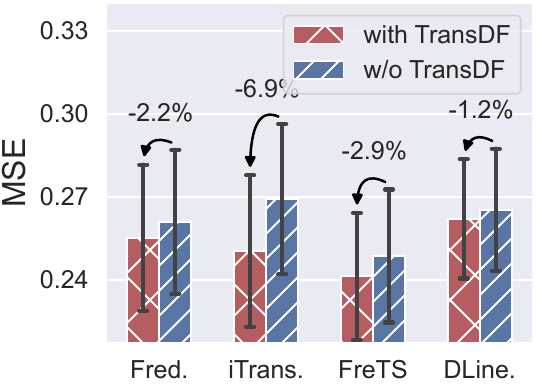}}
\subfigure[Weather with MAE]{\includegraphics[width=0.24\linewidth]{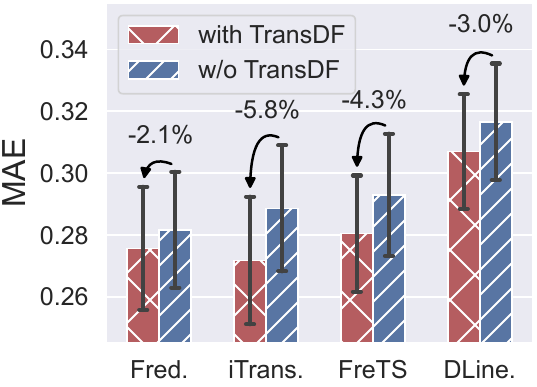}}
\caption{Improvement of Time-o1 applied to different forecast models, shown with colored bars for means over forecast lengths (96, 192, 336, 720) and error bars for 50\% confidence intervals. }
\label{fig:backbone}
\end{center}
\vspace{-3mm}
\end{figure}

\paragraph{Varying transformations.}

\begin{table}
\caption{Varying transformations results.}
\label{tab:trans_avg}
\centering
\renewcommand{\arraystretch}{1} 
\begin{threeparttable}
\setlength{\tabcolsep}{10pt}
\scriptsize
% \footnotesize
\begin{tabular}{l|l|cr|cr|cr|cr}
    \hline
    \multicolumn{2}{l|}{} & \multicolumn{4}{c|}{ECL} & \multicolumn{4}{c}{Weather} \\
    \hline
    \multicolumn{2}{l|}{Transformation} & MSE & \multicolumn{1}{c|}{$\Delta$} & MAE & \multicolumn{1}{c|}{$\Delta$} & MSE & \multicolumn{1}{c|}{$\Delta$} & MAE & \multicolumn{1}{c}{$\Delta$} \\
    \hline

    \multicolumn{2}{l|}{None}
    & 0.179 & \multicolumn{1}{c|}{-} & 0.270 & \multicolumn{1}{c|}{-} & 0.249 & \multicolumn{1}{c|}{-} & 0.293 & \multicolumn{1}{c}{-}  \\

    \multicolumn{2}{l|}{RPCA}
    & \subbst{0.171} & \subbst{4.31\%} $\downarrow$ & \subbst{0.261} & \subbst{3.16\%} $\downarrow$ & \subbst{0.244} & \subbst{1.78\%} $\downarrow$ & \subbst{0.286} & \subbst{2.38\%} $\downarrow$  \\

    \multicolumn{2}{l|}{SVD}
    & 0.175 & 2.24\% $\downarrow$ & 0.264 & 2.18\% $\downarrow$ & 0.248 & 0.34\% $\downarrow$ & 0.290 & 0.93\% $\downarrow$ \\

    \multicolumn{2}{l|}{FA}
    & 0.175 & 2.35\% $\downarrow$ & 0.265 & 1.82\% $\downarrow$ & 0.245 & 1.35\% $\downarrow$ & 0.287 & 1.97\% $\downarrow$ \\

    \multicolumn{2}{l|}{Ours}
    & \bst{0.170} & \bst{4.86\%}$\downarrow$ & \bst{0.260} & \bst{3.57\%}$\downarrow$ & \bst{0.241} & \bst{2.94\%}$\downarrow$ & \bst{0.280} & \bst{4.28\%}$\downarrow$  \\

    \hline
\end{tabular}
\begin{tablenotes}
    \item  \scriptsize \textit{Note}: $\Delta$ refers to the relative error reduction compared to the baseline (None). \bst{Bold} and \subbst{underlined} denote best and second-best results.
\end{tablenotes}
\end{threeparttable}
\end{table}
We select alternative approaches to transform the label sequence into latent components and report the forecast performance in \autoref{tab:trans_avg}. The selected transformation methods include robust principal component analysis (RPCA)~\citep{rpca}, SVD~\citep{svd}, and factor analysis~\citep{FA}. Noting that the output of SVD yields components here, not a projection matrix as in Section 3.2. Implementation details are in Appendix~\ref{sec:trans_detail}.
Overall, all these transformation methods outperform canonical DF without transformation. However, the components obtained by these methods, including RPCA, cannot be guaranteed to be decorrelated. Consequently, autocorrelation bias may persist. In contrast, our approach ensures full decorrelation of the derived components (see Lemma~\ref{lem:decorrelation}), effectively addressing autocorrelation bias and leading to the best overall performance.

\paragraph{Varying forecast models.}
We explore the versatility of Time-o1 in augmenting representative forecast models: Fredformer~\citep{fredformer}, iTransformer~\citep{itransformer}, FreTS~\citep{FreTS}, and DLinear~\citep{DLinear}. As illustrated in \autoref{fig:backbone}, Time-o1 improves forecast performance in all cases.
For instance, on the Weather dataset, iTransformer and FreTS with Time-o1 achieve substantial reductions in MSE—up to 6.9\% and 2.9\%, respectively.
Further evidence of Time-o1’s versatility can be found in Appendix~\ref{sec:generalize_app}. These results confirm Time-o1’s potential as a plug-and-play strategy to enhance adiverse forecast models.

\subsection{Hyperparameter sensitivity}\label{sec:hyper}

\begin{table}[t]
\begin{minipage}[t]{0.48\textwidth}
    \makeatletter\def\@captype{table}
    \caption{Hyperparameter results on $\alpha$.}\label{tab:sensi-alpha}
    \vspace{2mm}
    \renewcommand{\arraystretch}{1.5} \setlength{\tabcolsep}{4.7pt} \scriptsize
    \centering
    \renewcommand{\multirowsetup}{\centering}
    \begin{threeparttable}
    \begin{tabular}{l|cc|cc|cc}
        \hline
         & 
        \multicolumn{2}{c|}{\rotatebox{0}{ETTm1}} &
        \multicolumn{2}{c|}{\rotatebox{0}{ETTh2}} &
        \multicolumn{2}{c}{\rotatebox{0}{Weather}} \\
        \hline
        \rotatebox{0}{$\alpha$}  & MSE & MAE  & MSE & MAE  & MSE & MAE  \\
        \hline
        0   & 0.3867 & 0.3979 & 0.3766    & 0.4019    & 0.2486      & 0.2930      \\
        0.3 & 0.3871 & 0.3983 & 0.3742    & 0.3982    & 0.2439      & 0.2851      \\
        0.5 & 0.3864 & 0.3976 & 0.3703    & 0.3964    & \bst{0.2432}   & \bst{0.2833}   \\
        0.7 & \bst{0.3831} & \subbst{0.3959} & \subbst{0.3674} & \bst{0.3943} & \subbst{0.2433} & \subbst{0.2849} \\
        1   & \subbst{0.3850} & \bst{0.3933} & \bst{0.3606}    & \subbst{0.3890}   & 0.2753      & 0.3209      \\
        \hline
    \end{tabular}
    \begin{tablenotes}
    \item  \scriptsize \textit{Note}: \bst{Bold} and \subbst{underlined} denote best and second-best results.
    \end{tablenotes}
    \end{threeparttable}
\end{minipage}
\hfill
\begin{minipage}[t]{0.48\textwidth}
    \makeatletter\def\@captype{table}
    \caption{Hyperparameter results on $\gamma$.}\label{tab:sensi-gamma}
    \vspace{2mm}
    \renewcommand{\arraystretch}{1.5} \setlength{\tabcolsep}{4.7pt} \scriptsize
    \centering
    \renewcommand{\multirowsetup}{\centering}
    \begin{threeparttable}
    \begin{tabular}{l|cc|cc|cc}
        \hline
         & 
        \multicolumn{2}{c|}{\rotatebox{0}{ETTm1}} &
        \multicolumn{2}{c|}{\rotatebox{0}{ETTh2}} &
        \multicolumn{2}{c}{\rotatebox{0}{Weather}} \\
        \hline
        \rotatebox{0}{$\gamma$}  & MSE & MAE  & MSE & MAE  & MSE & MAE  \\
        \hline
        0.1 & 0.3915 & 0.4002 & 0.3816 & 0.4029 & \subbst{0.2437} & \subbst{0.2845} \\
        0.3 & 0.3849 & 0.3964 & 0.3694 & 0.3961 & \bst{0.2424} & \bst{0.2825} \\
        0.5 & 0.3817 & 0.3943 & 0.3651 & 0.3923 & 0.2466 & 0.2877 \\
        0.7 & \bst{0.3798} & \bst{0.3930} & \bst{0.3603} & \bst{0.3886} & 0.2443 & 0.2861 \\
        1   & \subbst{0.3814} & \subbst{0.3940} & \subbst{0.3624} & \subbst{0.3903} & 0.2491 & 0.2924 \\
        \hline
    \end{tabular}
    \begin{tablenotes}
    \item  \scriptsize \textit{Note}: \bst{Bold} and \subbst{underlined} denote best and second-best results.
    \end{tablenotes}
    \end{threeparttable}
\end{minipage}
% \vspace{-3mm}
\end{table}

In this section, we examine the impact of critical hyperparameters on the performance of Time-o1. The results are presented in \autoref{tab:sensi-alpha} and \autoref{tab:sensi-gamma}. Additional trends across different datasets and forecast lengths are provided in Appendix~\ref{sec:app_sense}.  The primary observations are summarized as follows:
\begin{itemize}[leftmargin=*]
    \item The coefficient $\alpha$ determines the relative importance of the transformed objective in \eqref{eq:obj_final}. When $\alpha$ is set to 1, Time-o1 exclusively uses the transformed objective. We observe that increasing $\alpha$ from 0 to 1 generally leads to improved forecasting accuracy, with the best results typically achieved when $\alpha$ is close to 1. The performance improvement is significant, e.g., MSE reduction on ETTh2 by 0.016, showcasing the utility of the transformed objective to improve forecast performance. 
    \item The coefficient $\gamma$ determines the ratio of involved components for alignment. When $\gamma$ is set to 1, Time-o1 aligns all obtained components for model training. The results demonstrate that setting $\gamma$ to 1, with all label information preserved, does not necessarily yield optimal performance. Instead, the best results are often obtained at $\gamma<1$, rendering some loss of label information.  For instance, $\gamma=0.7$ yields the best results on ETTm1 and ETTh2, while $\gamma=0.3$ is optimal for the Weather dataset. The rationale is that focusing on aligning the most significant components can reduce the task amount, thereby simplifying optimization. Since the majority of the information is contained in the most significant components, the information loss is minimal. Collectively, these factors contribute to improved forecast performance.
\end{itemize}

\section{Conclusion}
In this study, we highlight the importance of designing effective objectives for time-series forecasting. Two critical challenges are formulated: label autocorrelation, which induces bias, and the excessive number of tasks, which impedes optimization. To address these challenges, we introduce a model-agnostic learning objective called Time-o1. This method transforms the label sequence into decorrelated components with discernible significance. Forecast models are trained to align the most significant components, which effectively mitigates label autocorrelation due to the decorrelation between components and reduces task amount by discarding non-significant components.
Experiments demonstrate that Time-o1 improves the performance of forecast models across diverse datasets.

\textit{\textbf{Limitations \& future works.}} 
In this work, we investigate the challenges of label autocorrelation and excessive number of tasks in time-series forecasting. Nevertheless, these issues also manifest in areas such as speech generation, target recognition, and dense image prediction. Applying Time-o1 in these contexts is a promising avenue for future research. Additionally, historical sequence also exhibits autocorrelation and contains redundancy. Transforming inputs to derive decorrelated, compact representations could offer additional performance gains and also warrants investigation.
% Additionally, the issue of label autocorrelation extends beyond time series, affecting diverse contexts involving structural labels, such as 3D point clouds, speech, and images. 

\section*{Acknowledgments}
This project was supported by National Natural Science Foundation of China (62276004, 623B2002).

\small
% \bibliography{bib/abbr,bib/main,bib/timeseries,bib/supp}
\bibliography{main}
\bibliographystyle{plain}

\clearpage
%%%%%%%%%%%%%%%%%%%%%%%%%%%%%%%%%%%%%%%%%%%%%%%%%%%%%%%%%%%%
\newpage
\section*{NeurIPS Paper Checklist}

\begin{enumerate}

\item {\bf Claims}
    \item[] Question: Do the main claims made in the abstract and introduction accurately reflect the paper's contributions and scope?
    \item[] Answer: \answerYes{} % Replace by \answerYes{}, \answerNo{}, or \answerNA{}.
    \item[] Justification: The main claims made in the abstract and introduction accurately reflect our paper's contributions and scope. 
    \item[] Guidelines:
    \begin{itemize}
        \item The answer NA means that the abstract and introduction do not include the claims made in the paper.
        \item The abstract and/or introduction should clearly state the claims made, including the contributions made in the paper and important assumptions and limitations. A No or NA answer to this question will not be perceived well by the reviewers. 
        \item The claims made should match theoretical and experimental results, and reflect how much the results can be expected to generalize to other settings. 
        \item It is fine to include aspirational goals as motivation as long as it is clear that these goals are not attained by the paper. 
    \end{itemize}

\item {\bf Limitations}
    \item[] Question: Does the paper discuss the limitations of the work performed by the authors?
    \item[] Answer: \answerYes{} % Replace by \answerYes{}, \answerNo{}, or \answerNA{}.
    \item[] Justification: Our limitations are discussed in conclusions.
    \item[] Guidelines:
    \begin{itemize}
        \item The answer NA means that the paper has no limitation while the answer No means that the paper has limitations, but those are not discussed in the paper. 
        \item The authors are encouraged to create a separate "Limitations" section in their paper.
        \item The paper should point out any strong assumptions and how robust the results are to violations of these assumptions (e.g., independence assumptions, noiseless settings, model well-specification, asymptotic approximations only holding locally). The authors should reflect on how these assumptions might be violated in practice and what the implications would be.
        \item The authors should reflect on the scope of the claims made, e.g., if the approach was only tested on a few datasets or with a few runs. In general, empirical results often depend on implicit assumptions, which should be articulated.
        \item The authors should reflect on the factors that influence the performance of the approach. For example, a facial recognition algorithm may perform poorly when image resolution is low or images are taken in low lighting. Or a speech-to-text system might not be used reliably to provide closed captions for online lectures because it fails to handle technical jargon.
        \item The authors should discuss the computational efficiency of the proposed algorithms and how they scale with dataset size.
        \item If applicable, the authors should discuss possible limitations of their approach to address problems of privacy and fairness.
        \item While the authors might fear that complete honesty about limitations might be used by reviewers as grounds for rejection, a worse outcome might be that reviewers discover limitations that aren't acknowledged in the paper. The authors should use their best judgment and recognize that individual actions in favor of transparency play an important role in developing norms that preserve the integrity of the community. Reviewers will be specifically instructed to not penalize honesty concerning limitations.
    \end{itemize}

\item {\bf Theory assumptions and proofs}
    \item[] Question: For each theoretical result, does the paper provide the full set of assumptions and a complete (and correct) proof?
    \item[] Answer: \answerYes{} % Replace by \answerYes{}, \answerNo{}, or \answerNA{}.
    \item[] Justification: All proofs of theoretical results are involved in Appendix.
    \item[] Guidelines:
    \begin{itemize}
        \item The answer NA means that the paper does not include theoretical results. 
        \item All the theorems, formulas, and proofs in the paper should be numbered and cross-referenced.
        \item All assumptions should be clearly stated or referenced in the statement of any theorems.
        \item The proofs can either appear in the main paper or the supplemental material, but if they appear in the supplemental material, the authors are encouraged to provide a short proof sketch to provide intuition. 
        \item Inversely, any informal proof provided in the core of the paper should be complemented by formal proofs provided in appendix or supplemental material.
        \item Theorems and Lemmas that the proof relies upon should be properly referenced. 
    \end{itemize}

    \item {\bf Experimental result reproducibility}
    \item[] Question: Does the paper fully disclose all the information needed to reproduce the main experimental results of the paper to the extent that it affects the main claims and/or conclusions of the paper (regardless of whether the code and data are provided or not)?
    \item[] Answer: \answerYes{} % Replace by \answerYes{}, \answerNo{}, or \answerNA{}.
    \item[] Justification: We detail our training and evaluation protocols in the experimental setting section.
    \item[] Guidelines:
    \begin{itemize}
        \item The answer NA means that the paper does not include experiments.
        \item If the paper includes experiments, a No answer to this question will not be perceived well by the reviewers: Making the paper reproducible is important, regardless of whether the code and data are provided or not.
        \item If the contribution is a dataset and/or model, the authors should describe the steps taken to make their results reproducible or verifiable. 
        \item Depending on the contribution, reproducibility can be accomplished in various ways. For example, if the contribution is a novel architecture, describing the architecture fully might suffice, or if the contribution is a specific model and empirical evaluation, it may be necessary to either make it possible for others to replicate the model with the same dataset, or provide access to the model. In general. releasing code and data is often one good way to accomplish this, but reproducibility can also be provided via detailed instructions for how to replicate the results, access to a hosted model (e.g., in the case of a large language model), releasing of a model checkpoint, or other means that are appropriate to the research performed.
        \item While NeurIPS does not require releasing code, the conference does require all submissions to provide some reasonable avenue for reproducibility, which may depend on the nature of the contribution. For example
        \begin{enumerate}
            \item If the contribution is primarily a new algorithm, the paper should make it clear how to reproduce that algorithm.
            \item If the contribution is primarily a new model architecture, the paper should describe the architecture clearly and fully.
            \item If the contribution is a new model (e.g., a large language model), then there should either be a way to access this model for reproducing the results or a way to reproduce the model (e.g., with an open-source dataset or instructions for how to construct the dataset).
            \item We recognize that reproducibility may be tricky in some cases, in which case authors are welcome to describe the particular way they provide for reproducibility. In the case of closed-source models, it may be that access to the model is limited in some way (e.g., to registered users), but it should be possible for other researchers to have some path to reproducing or verifying the results.
        \end{enumerate}
    \end{itemize}

\item {\bf Open access to data and code}
    \item[] Question: Does the paper provide open access to the data and code, with sufficient instructions to faithfully reproduce the main experimental results, as described in supplemental material?
    \item[] Answer: \answerYes{} % Replace by \answerYes{}, \answerNo{}, or \answerNA{}.
    \item[] Justification: We use open access data, and the code is provided in an anonymous link.
    \item[] Guidelines:
    \begin{itemize}
        \item The answer NA means that paper does not include experiments requiring code.
        \item Please see the NeurIPS code and data submission guidelines (\url{https://nips.cc/public/guides/CodeSubmissionPolicy}) for more details.
        \item While we encourage the release of code and data, we understand that this might not be possible, so “No” is an acceptable answer. Papers cannot be rejected simply for not including code, unless this is central to the contribution (e.g., for a new open-source benchmark).
        \item The instructions should contain the exact command and environment needed to run to reproduce the results. See the NeurIPS code and data submission guidelines (\url{https://nips.cc/public/guides/CodeSubmissionPolicy}) for more details.
        \item The authors should provide instructions on data access and preparation, including how to access the raw data, preprocessed data, intermediate data, and generated data, etc.
        \item The authors should provide scripts to reproduce all experimental results for the new proposed method and baselines. If only a subset of experiments are reproducible, they should state which ones are omitted from the script and why.
        \item At submission time, to preserve anonymity, the authors should release anonymized versions (if applicable).
        \item Providing as much information as possible in supplemental material (appended to the paper) is recommended, but including URLs to data and code is permitted.
    \end{itemize}

\item {\bf Experimental setting/details}
    \item[] Question: Does the paper specify all the training and test details (e.g., data splits, hyperparameters, how they were chosen, type of optimizer, etc.) necessary to understand the results?
    \item[] Answer: \answerYes{} % Replace by \answerYes{}, \answerNo{}, or \answerNA{}.
    \item[] Justification: We detail our training and evaluation protocols in the experimental setting section.
    \item[] Guidelines:
    \begin{itemize}
        \item The answer NA means that the paper does not include experiments.
        \item The experimental setting should be presented in the core of the paper to a level of detail that is necessary to appreciate the results and make sense of them.
        \item The full details can be provided either with the code, in appendix, or as supplemental material.
    \end{itemize}

\item {\bf Experiment statistical significance}
    \item[] Question: Does the paper report error bars suitably and correctly defined or other appropriate information about the statistical significance of the experiments?
    \item[] Answer: \answerYes{} % Replace by \answerYes{}, \answerNo{}, or \answerNA{}.
    \item[] Justification: In Figure 3 which are crucial for evaluating the efficacy of TransDF, we report the error bars to make the results more rigorous and comprehensive.
    \item[] Guidelines:
    \begin{itemize}
        \item The answer NA means that the paper does not include experiments.
        \item The authors should answer "Yes" if the results are accompanied by error bars, confidence intervals, or statistical significance tests, at least for the experiments that support the main claims of the paper.
        \item The factors of variability that the error bars are capturing should be clearly stated (for example, train/test split, initialization, random drawing of some parameter, or overall run with given experimental conditions).
        \item The method for calculating the error bars should be explained (closed form formula, call to a library function, bootstrap, etc.)
        \item The assumptions made should be given (e.g., Normally distributed errors).
        \item It should be clear whether the error bar is the standard deviation or the standard error of the mean.
        \item It is OK to report 1-sigma error bars, but one should state it. The authors should preferably report a 2-sigma error bar than state that they have a 96\% CI, if the hypothesis of Normality of errors is not verified.
        \item For asymmetric distributions, the authors should be careful not to show in tables or figures symmetric error bars that would yield results that are out of range (e.g. negative error rates).
        \item If error bars are reported in tables or plots, The authors should explain in the text how they were calculated and reference the corresponding figures or tables in the text.
    \end{itemize}

\item {\bf Experiments compute resources}
    \item[] Question: For each experiment, does the paper provide sufficient information on the computer resources (type of compute workers, memory, time of execution) needed to reproduce the experiments?
    \item[] Answer: \answerYes{} % Replace by \answerYes{}, \answerNo{}, or \answerNA{}.
    \item[] Justification: We report the type of compute workers, memory in Appendix.
    \item[] Guidelines:
    \begin{itemize}
        \item The answer NA means that the paper does not include experiments.
        \item The paper should indicate the type of compute workers CPU or GPU, internal cluster, or cloud provider, including relevant memory and storage.
        \item The paper should provide the amount of compute required for each of the individual experimental runs as well as estimate the total compute. 
        \item The paper should disclose whether the full research project required more compute than the experiments reported in the paper (e.g., preliminary or failed experiments that didn't make it into the paper). 
    \end{itemize}
    
\item {\bf Code of ethics}
    \item[] Question: Does the research conducted in the paper conform, in every respect, with the NeurIPS Code of Ethics \url{https://neurips.cc/public/EthicsGuidelines}?
    \item[] Answer: \answerYes{} % Replace by \answerYes{}, \answerNo{}, or \answerNA{}.
    \item[] Justification: The research conducted in the paper conform, in every respect, with the NeurIPS Code of Ethics. 
    \item[] Guidelines:
    \begin{itemize}
        \item The answer NA means that the authors have not reviewed the NeurIPS Code of Ethics.
        \item If the authors answer No, they should explain the special circumstances that require a deviation from the Code of Ethics.
        \item The authors should make sure to preserve anonymity (e.g., if there is a special consideration due to laws or regulations in their jurisdiction).
    \end{itemize}

\item {\bf Broader impacts}
    \item[] Question: Does the paper discuss both potential positive societal impacts and negative societal impacts of the work performed?
    \item[] Answer: \answerNA{} % Replace by \answerYes{}, \answerNo{}, or \answerNA{}.
    \item[] Justification: Since it is an algorithm-oriented research, there is no societal impact of the work performed.
    \item[] Guidelines:
    \begin{itemize}
        \item The answer NA means that there is no societal impact of the work performed.
        \item If the authors answer NA or No, they should explain why their work has no societal impact or why the paper does not address societal impact.
        \item Examples of negative societal impacts include potential malicious or unintended uses (e.g., disinformation, generating fake profiles, surveillance), fairness considerations (e.g., deployment of technologies that could make decisions that unfairly impact specific groups), privacy considerations, and security considerations.
        \item The conference expects that many papers will be foundational research and not tied to particular applications, let alone deployments. However, if there is a direct path to any negative applications, the authors should point it out. For example, it is legitimate to point out that an improvement in the quality of generative models could be used to generate deepfakes for disinformation. On the other hand, it is not needed to point out that a generic algorithm for optimizing neural networks could enable people to train models that generate Deepfakes faster.
        \item The authors should consider possible harms that could arise when the technology is being used as intended and functioning correctly, harms that could arise when the technology is being used as intended but gives incorrect results, and harms following from (intentional or unintentional) misuse of the technology.
        \item If there are negative societal impacts, the authors could also discuss possible mitigation strategies (e.g., gated release of models, providing defenses in addition to attacks, mechanisms for monitoring misuse, mechanisms to monitor how a system learns from feedback over time, improving the efficiency and accessibility of ML).
    \end{itemize}
    
\item {\bf Safeguards}
    \item[] Question: Does the paper describe safeguards that have been put in place for responsible release of data or models that have a high risk for misuse (e.g., pretrained language models, image generators, or scraped datasets)?
    \item[] Answer: \answerNA{} % Replace by \answerYes{}, \answerNo{}, or \answerNA{}.
    \item[] Justification: This paper does not involve these issues.
    \item[] Guidelines:
    \begin{itemize}
        \item The answer NA means that the paper poses no such risks.
        \item Released models that have a high risk for misuse or dual-use should be released with necessary safeguards to allow for controlled use of the model, for example by requiring that users adhere to usage guidelines or restrictions to access the model or implementing safety filters. 
        \item Datasets that have been scraped from the Internet could pose safety risks. The authors should describe how they avoided releasing unsafe images.
        \item We recognize that providing effective safeguards is challenging, and many papers do not require this, but we encourage authors to take this into account and make a best faith effort.
    \end{itemize}

\item {\bf Licenses for existing assets}
    \item[] Question: Are the creators or original owners of assets (e.g., code, data, models), used in the paper, properly credited and are the license and terms of use explicitly mentioned and properly respected?
    \item[] Answer: \answerYes{} % Replace by \answerYes{}, \answerNo{}, or \answerNA{}.
    \item[] Justification: The creators or original owners of assets (e.g., code, data, models), used in the paper, are properly credited. The license and terms of use are explicitly mentioned and properly respected.
    \item[] Guidelines:
    \begin{itemize}
        \item The answer NA means that the paper does not use existing assets.
        \item The authors should cite the original paper that produced the code package or dataset.
        \item The authors should state which version of the asset is used and, if possible, include a URL.
        \item The name of the license (e.g., CC-BY 4.0) should be included for each asset.
        \item For scraped data from a particular source (e.g., website), the copyright and terms of service of that source should be provided.
        \item If assets are released, the license, copyright information, and terms of use in the package should be provided. For popular datasets, \url{paperswithcode.com/datasets} has curated licenses for some datasets. Their licensing guide can help determine the license of a dataset.
        \item For existing datasets that are re-packaged, both the original license and the license of the derived asset (if it has changed) should be provided.
        \item If this information is not available online, the authors are encouraged to reach out to the asset's creators.
    \end{itemize}

\item {\bf New assets}
    \item[] Question: Are new assets introduced in the paper well documented and is the documentation provided alongside the assets?
    \item[] Answer: \answerNA{} % Replace by \answerYes{}, \answerNo{}, or \answerNA{}.
    \item[] Justification: We do not introduce new datasets and benchmarks in the paper.
    \item[] Guidelines:
    \begin{itemize}
        \item The answer NA means that the paper does not release new assets.
        \item Researchers should communicate the details of the dataset/code/model as part of their submissions via structured templates. This includes details about training, license, limitations, etc. 
        \item The paper should discuss whether and how consent was obtained from people whose asset is used.
        \item At submission time, remember to anonymize your assets (if applicable). You can either create an anonymized URL or include an anonymized zip file.
    \end{itemize}

\item {\bf Crowdsourcing and research with human subjects}
    \item[] Question: For crowdsourcing experiments and research with human subjects, does the paper include the full text of instructions given to participants and screenshots, if applicable, as well as details about compensation (if any)? 
    \item[] Answer: \answerNA{} % Replace by \answerYes{}, \answerNo{}, or \answerNA{}.
    \item[] Justification: We use open-access datasets and do not involve human subjects.
    \item[] Guidelines:
    \begin{itemize}
        \item The answer NA means that the paper does not involve crowdsourcing nor research with human subjects.
        \item Including this information in the supplemental material is fine, but if the main contribution of the paper involves human subjects, then as much detail as possible should be included in the main paper. 
        \item According to the NeurIPS Code of Ethics, workers involved in data collection, curation, or other labor should be paid at least the minimum wage in the country of the data collector. 
    \end{itemize}

\item {\bf Institutional review board (IRB) approvals or equivalent for research with human subjects}
    \item[] Question: Does the paper describe potential risks incurred by study participants, whether such risks were disclosed to the subjects, and whether Institutional Review Board (IRB) approvals (or an equivalent approval/review based on the requirements of your country or institution) were obtained?
    \item[] Answer: \answerNA{} % Replace by \answerYes{}, \answerNo{}, or \answerNA{}.
    \item[] Justification:  We use open-access datasets and do not involve human subjects.
    \item[] Guidelines:
    \begin{itemize}
        \item The answer NA means that the paper does not involve crowdsourcing nor research with human subjects.
        \item Depending on the country in which research is conducted, IRB approval (or equivalent) may be required for any human subjects research. If you obtained IRB approval, you should clearly state this in the paper. 
        \item We recognize that the procedures for this may vary significantly between institutions and locations, and we expect authors to adhere to the NeurIPS Code of Ethics and the guidelines for their institution. 
        \item For initial submissions, do not include any information that would break anonymity (if applicable), such as the institution conducting the review.
    \end{itemize}

\item {\bf Declaration of LLM usage}
    \item[] Question: Does the paper describe the usage of LLMs if it is an important, original, or non-standard component of the core methods in this research? Note that if the LLM is used only for writing, editing, or formatting purposes and does not impact the core methodology, scientific rigorousness, or originality of the research, declaration is not required.
    %this research? 
    \item[] Answer: \answerNA{} % Replace by \answerYes{}, \answerNo{}, or \answerNA{}.
    \item[] Justification: We do not involve LLMs as any important, original, or non-standard components in this work.
    \item[] Guidelines:
    \begin{itemize}
        \item The answer NA means that the core method development in this research does not involve LLMs as any important, original, or non-standard components.
        \item Please refer to our LLM policy (\url{https://neurips.cc/Conferences/2025/LLM}) for what should or should not be described.
    \end{itemize}

\end{enumerate}

\clearpage
\appendix\normalsize

\section{On the Implementation Details of Label Correlation Estimation}

In this section, we introduce the motivation and implementation details of the label autocorrelation estimation techniques in \autoref{fig:auto}. 
Measuring label autocorrelation $Y_t\rightarrow Y_{t^\prime}$ is indeed challenging  due to the presence of confounding effect~\cite{wang2025kddcfrpro,wang2023nipsescfr,li2024icmlrelaxing}. Specifically, the fork structure $Y_t \leftarrow L \rightarrow Y_{t^\prime}$ introduces spurious correlations between $Y_t$ and $Y_{t^\prime}$, thereby distorting the true strength of the label autocorrelation $Y_t\rightarrow Y_{t^\prime}$ of interest. This structural confounding undermines the validity of traditional measures such as Pearson correlation for quantifying label autocorrelation. 

The previous work \cite{wang2025iclrfredf} involved the double machine learning (DML) method to estimate the ground-truth correlation while mitigating the influence of the fork structure. We adopt this in our experiments.
DML is a statistical technique designed to estimate the causal effect of a treatment on an outcome while controlling for fork variables. Specifically, suppose we have a treatment variable $\mathcal{T}$, an outcome variable $\mathcal{Y}$, and a set of fork variables $\mathcal{X}$. The goal is to estimate the causal effect of $\mathcal{T}$ on $\mathcal{Y}$ while controlling for the influence of $\mathcal{X}$. 
To this end, DML first orthogonalizes both the treatment and outcome with respect to the fork variables. Two parametric models are employed to predict the treatment and outcome based on the fork variables. These predictions capture the impact of $\mathcal{X}$ on $\mathcal{Y}$ and $\mathcal{T}$. Subsequently, such impact of $\mathcal{X}$ is eliminated by calculating the residuals. Finally, the DML method regresses the outcome residuals on the treatment residuals, thereby measuring the causal effect of $\mathrm{T}$ on $\mathrm{Y}$ while removing the influence of the fork variables. 

In our experiments, we measure label autocorrelation by treating the input sequence $L$ as the fork variable and different steps of the label sequences $Y_t$ and $Y_{t^\prime}$ as the treatment and outcome variables, respectively. Then, we estimate the treatment effect of $Y_t$ on $Y_{t^\prime}$ controlling $L$. Similarly, when measuring the correlation between different components, we use different components $Z_k$ and $Z_{k^\prime}$ as the treatment and outcome variables. Linear regression model is employed as the parametric model for both the treatment and outcome variables for efficiency, which is consistent to \cite{wang2025iclrfredf}.

\begin{figure*}[h!]
\subfigure[Correlation between different steps in the label sequence.]{\includegraphics[width=0.23\linewidth]{fig/case/etth1_time_96.pdf}\quad \includegraphics[width=0.23\linewidth]{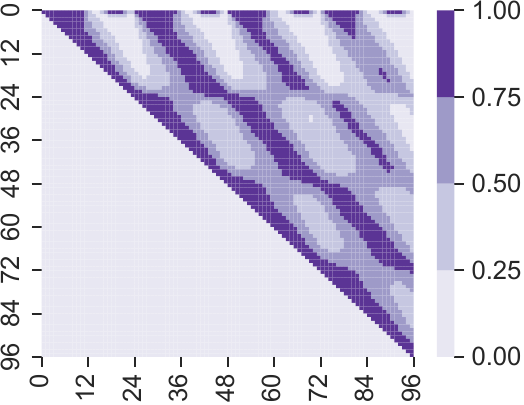}\quad \includegraphics[width=0.23\linewidth]{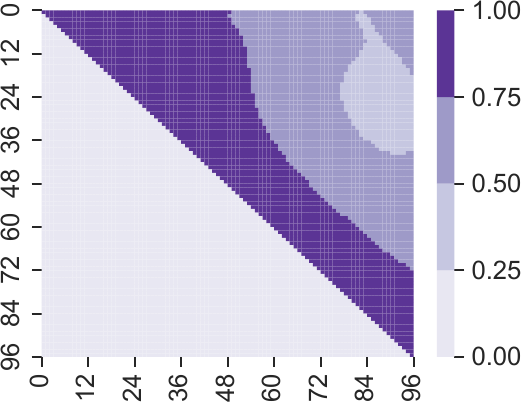}\quad \includegraphics[width=0.23\linewidth]{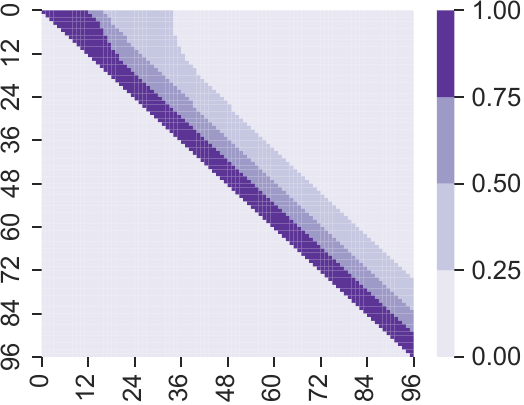}}
\hfill
    \raisebox{-0.0\height}{\rule{0.8pt}{2.4cm}} % Vertical line of 4cm height and 0.5pt width
\hfill
\subfigure[Correlation between different extracted components.]{\includegraphics[width=0.23\linewidth]{fig/case/etth1_pca_96.pdf}\quad \includegraphics[width=0.23\linewidth]{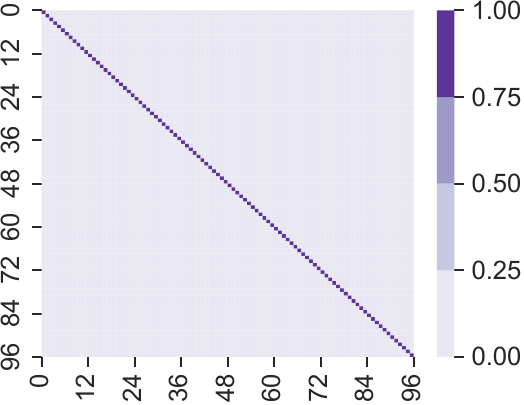}\quad \includegraphics[width=0.23\linewidth]{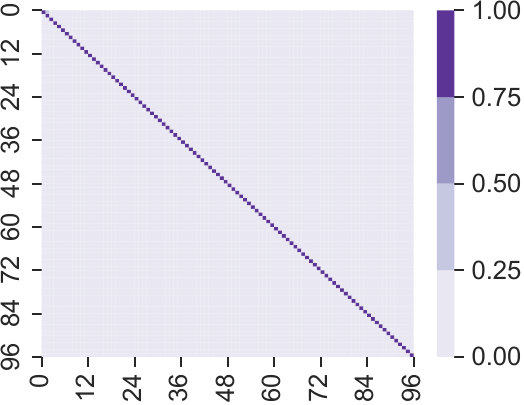}\quad \includegraphics[width=0.23\linewidth]{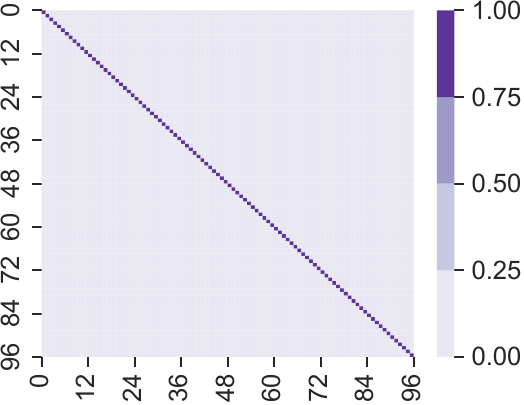}}
\caption{The label autocorrelation in the original label sequence and the extracted components. The datasets are ETTh1, ETTh2, ETTm1, and Weather from left to right. The forecast length is set to 96. }
\label{fig:autoapp}
\end{figure*}

To further complement the case study in \autoref{fig:auto}, we analyzed the correlation matrices of the label sequences and the extracted components across multiple datasets, with the results presented in \autoref{fig:autoapp}. The main observations are summarized as follows.
\begin{itemize}[leftmargin=*]
    \item Panel (a) displays the correlation matrix of the label sequence, characterized by substantial non-diagonal elements, which highlight the strong autocorrelation among the labels. In contrast, panel (b) shows the correlation matrix of the extracted components, where the non-diagonal elements are nearly zero, indicating effective decorrelation.
    \item Compared to the results reported in \cite{wang2025iclrfredf}, where some obtained components remain correlated, the non-diagonal elements in panel (b) are fully eliminated. This difference arises because the Fourier transform in \citep{wang2025iclrfredf} achieves decorrelation only when the original label sequence is nearly infinitely long ($\T→\infty$), a condition that is not met in real-world applications with finite forecast horizons. This limitation stems from the predefined nature of the projection matrix, which lacks adaptation to the specific properties of the data. In contrast, our method ensures decorrelation by solving a constrained optimization problem, without relying on an infinitely long forecast horizon, thereby providing a more reliable approach for handling autocorrelation bias.
\end{itemize}

\section{Theoretical Justification}

\begin{theorem}[Autocorrelation bias, Theorem~\ref{thm:bias} in the main text]
Given label sequence $Y$ where $\Sigma\in\mathbb{R}^{\T\times\T}$ denotes the step-wise correlation coefficient, the TMSE in~\eqref{eq:tmp} is biased compared to the negative log-likelihood of the label sequence, which is given by:
\begin{equation}
    \mathrm{Bias} = \left\|Y-\hat{Y}\right\|_{\Sigma^{-1}}^2 - \left\|Y-\hat{Y}\right\|^2 -\frac{1}{2}\log\left|\Sigma\right|.
\end{equation}
where $\|v\|_{\Sigma^{-1}}^2=v^\top\Sigma^{-1}v$. The bias vanishes if different steps in $Y$ are decorrelated.\footnote{The pioneering work~\citep{wang2025iclrfredf} identifies the bias under the first-order Markov assumption on the label sequence. This study generalizes this bias without the first-order Markov assumption.}
\end{theorem}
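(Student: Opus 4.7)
The plan is to model the label sequence probabilistically and then compare the induced negative log-likelihood to the TMSE loss. Concretely, I would posit that, conditional on the forecast $\hat{Y}$, the label $Y \in \mathbb{R}^{\T}$ follows a multivariate Gaussian with mean $\hat{Y}$ and covariance $\Sigma$, the step-wise correlation matrix. This is the natural parametric assumption that respects both the squared-error form of TMSE (which corresponds to the isotropic case $\Sigma = I$) and the hypothesized inter-step correlation structure. Unlike the pioneering work of \citep{wang2025iclrfredf}, which imposes a first-order Markov factorization $p(Y) = \prod_t p(Y_t \mid Y_{t-1})$ and then derives a tridiagonal precision matrix, I would let $\Sigma$ be a general positive-definite matrix, thereby covering arbitrary autocorrelation structure across steps.

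The core computation then is routine: write
\begin{equation}
-\log p(Y \mid \hat{Y}) \;=\; \tfrac{1}{2}\,(Y-\hat{Y})^\top \Sigma^{-1}(Y-\hat{Y}) \;+\; \tfrac{1}{2}\log|\Sigma| \;+\; \tfrac{\T}{2}\log(2\pi),
\end{equation}
and compare to $\mathcal{L}_{\mathrm{tmse}} = \|Y-\hat{Y}\|^2$. Dropping the additive constant $\tfrac{\T}{2}\log(2\pi)$ (which is independent of $\hat{Y}$ and $\Sigma$ and thus cannot be learned away), the bias of TMSE relative to the true negative log-likelihood reads
\begin{equation}
\mathrm{Bias} \;=\; \bigl\|Y-\hat{Y}\bigr\|_{\Sigma^{-1}}^2 \;-\; \bigl\|Y-\hat{Y}\bigr\|^2 \;-\; \tfrac{1}{2}\log|\Sigma|,
\end{equation}
matching the claimed expression (with the $\tfrac{1}{2}$ factor absorbed into the scale convention of the Mahalanobis and squared norms).

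Finally, I would verify the decorrelation corollary: if the steps of $Y$ are mutually decorrelated, then $\Sigma$ is diagonal with unit entries after the standardization already applied in Section~3.3, so $\Sigma^{-1} = I$ and $\log|\Sigma| = 0$. Substituting these into the bias expression yields $\mathrm{Bias} = 0$, establishing that TMSE coincides with the negative log-likelihood exactly in the decorrelated regime and justifying the Time-o1 design. The main obstacle I anticipate is purely notational rather than mathematical: reconciling the factor-of-two and additive-constant conventions so that the bias expression matches the theorem verbatim, since the Gaussian log-likelihood produces a $\tfrac{1}{2}$ on both the quadratic and log-determinant terms while the statement absorbs these into the norm definitions.
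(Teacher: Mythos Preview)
Your proposal is correct and follows essentially the same route as the paper: posit a multivariate Gaussian $Y \sim \mathcal{N}(\hat{Y}, \Sigma)$, write out the negative log-likelihood, drop the $\tfrac{\T}{2}\log(2\pi)$ constant, and subtract the TMSE to obtain the stated bias, then check that $\Sigma = I$ kills all terms. If anything, you are slightly more careful than the paper's own proof, which drops the $\log|\Sigma|$ term when defining its ``practical negative log-likelihood'' and thus does not explicitly reproduce the $-\tfrac{1}{2}\log|\Sigma|$ summand appearing in the theorem statement; your version tracks it and your caveat about the factor-of-two convention is exactly the right place to flag the residual ambiguity.
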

\begin{proof}
The proof follows our previous work \citep{wang2025iclrfredf} but relaxes the first-order Markov assumption. 

Suppose the label sequence follows a multivariate normal distribution with mean vector $\mu = [\hat{Y}_1, \hat{Y}_2, \ldots, \hat{Y}_\mathrm{T}]$ and covariance matrix $\Sigma$, where the off-diagonal entries are $\Sigma_{ij} = \rho_{ij} \sigma^2$ for $i \neq j$. Here, $\rho_{ij}$ denotes the partial correlation between $Y_i$ and $Y_j$ given the input sequence $L$.
The log-likelihood of the label sequence $Y$ can be expressed as:
\begin{equation*}
    \log p(Y) = \frac{1}{2} \left( \T\log(2\pi) + \log|\Sigma| + (Y-\hat{Y})^\top\Sigma^{-1}(Y-\hat{Y}) \right).
\end{equation*}
Removing the constant terms unrelated to $\hat{Y}$, we obtain the practical negative log-likelihood (PNLL):
\begin{equation*}
    \mathrm{PNLL} =  (Y-\hat{Y})^\top\Sigma^{-1}(Y-\hat{Y}).
\end{equation*}
On the other hand, the TMSE loss can be expressed as:
\begin{equation*}
    \mathrm{TMSE} = \left\|Y-\hat{Y}\right\|_2^2 =  (Y-\hat{Y})^\top I^{-1}(Y-\hat{Y}).
\end{equation*}
where $I$ is the identity matrix. The difference between TMSE and PNLL can be expressed as:
\begin{equation*}
    \mathrm{Bias} = \mathrm{PNLL}-\mathrm{TMSE} = (Y-\hat{Y})^\top\Sigma^{-1}(Y-\hat{Y}) - (Y-\hat{Y})^\top I(Y-\hat{Y}),
\end{equation*}
which immediately vanishes if the label sequence is decorrelated, i.e., $\Sigma=I$. The proof is completed.
\end{proof}

\begin{lemma}[Lemma~\ref{lem:svd} in the main text]\label{lem:svd_app}
    The projection matrix $\mathbf{P}^*$ can be obtained via singular value decomposition (SVD): $\mathbf{Y}=\mathbf{U}\mathbf{\Lambda}(\mathbf{P}^*)^\top$, where $\mathbf{U}\in\mathbb{R^\mathrm{m\times m}}$ and $\mathbf{P}^*\in\mathbb{R^\mathrm{T\times T}}$ consist of singular vectors, and the diagonal of $\mathbf{\Lambda}\in\mathbb{R^\mathrm{m\times \T}}$ consists of singular values.
\end{lemma}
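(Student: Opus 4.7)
The plan is to recognize that the sequential optimization in equation (3) is exactly the classical principal component analysis formulation, whose solutions are known to coincide with the right singular vectors of $\mathbf{Y}$. I would therefore proceed by (i) solving the constrained optimization explicitly for each $p$, (ii) showing by induction that the $p$-th solution is the $p$-th dominant eigenvector of $\mathbf{Y}^\top \mathbf{Y}$, and (iii) matching these eigenvectors to the right singular vectors in the SVD of $\mathbf{Y}$.

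First I would rewrite the objective as $(\mathbf{Y}\mathbf{P}_p)^\top(\mathbf{Y}\mathbf{P}_p) = \mathbf{P}_p^\top \mathbf{Y}^\top \mathbf{Y} \mathbf{P}_p$, a Rayleigh quotient in the symmetric positive semidefinite matrix $\mathbf{M}:=\mathbf{Y}^\top\mathbf{Y}\in\mathbb{R}^{\T\times\T}$. Let $\mathbf{M}$ admit the spectral decomposition $\mathbf{M}=\sum_{k=1}^{\T}\lambda_k \mathbf{v}_k\mathbf{v}_k^\top$ with $\lambda_1\geq\lambda_2\geq\cdots\geq\lambda_\T\geq 0$ and orthonormal eigenvectors $\{\mathbf{v}_k\}$. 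For $p=1$, the Courant–Fischer characterization (equivalently, Lagrange multipliers on the constraint $\|\mathbf{P}_1\|^2=1$) immediately gives $\mathbf{P}_1^*=\mathbf{v}_1$ with optimal value $\lambda_1$.

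Next I would carry out the inductive step for $p>1$. Assuming $\mathbf{P}_j^*=\mathbf{v}_j$ for $j<p$, the orthogonality constraints $\mathbf{P}_p^\top\mathbf{v}_j=0$ restrict $\mathbf{P}_p$ to the orthogonal complement $\mathrm{span}(\mathbf{v}_p,\ldots,\mathbf{v}_\T)$. Writing $\mathbf{P}_p=\sum_{k\geq p}c_k\mathbf{v}_k$ with $\sum_{k\geq p}c_k^2=1$, the objective becomes $\sum_{k\geq p}c_k^2\lambda_k$, which is maximized by concentrating all mass on the largest remaining eigenvalue, i.e.\ $c_p=1$, $c_k=0$ for $k>p$. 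This yields $\mathbf{P}_p^*=\mathbf{v}_p$ (up to sign), completing the induction and showing that $\mathbf{P}^*=[\mathbf{v}_1,\ldots,\mathbf{v}_\T]$ is the matrix of eigenvectors of $\mathbf{Y}^\top\mathbf{Y}$ ordered by decreasing eigenvalue.

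Finally, I would connect this to SVD. If $\mathbf{Y}=\mathbf{U}\mathbf{\Lambda}\mathbf{V}^\top$ is the SVD of $\mathbf{Y}$ with singular values ordered in decreasing magnitude on the diagonal of $\mathbf{\Lambda}$, then $\mathbf{Y}^\top\mathbf{Y}=\mathbf{V}\mathbf{\Lambda}^\top\mathbf{\Lambda}\mathbf{V}^\top$, so the columns of $\mathbf{V}$ are precisely the eigenvectors of $\mathbf{Y}^\top\mathbf{Y}$ ordered by decreasing eigenvalues $\lambda_k=\sigma_k^2$. Hence $\mathbf{V}=\mathbf{P}^*$, giving the claimed factorization $\mathbf{Y}=\mathbf{U}\mathbf{\Lambda}(\mathbf{P}^*)^\top$. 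The main obstacle is the inductive step: one must carefully verify that restricting to the orthogonal complement of the previously chosen axes does not alter the Rayleigh-quotient argument, and one should address the nonuniqueness arising from repeated singular values by noting that any orthonormal basis of the corresponding eigenspace is an admissible optimizer, which does not affect the conclusion.
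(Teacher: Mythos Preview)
Your proposal is correct and follows essentially the same route as the paper: both arguments reduce the sequential optimization to identifying the eigenvectors of $\mathbf{Y}^\top\mathbf{Y}$ ordered by eigenvalue, and then read off $\mathbf{P}^*=\mathbf{V}$ from the SVD $\mathbf{Y}=\mathbf{U}\mathbf{\Lambda}\mathbf{V}^\top$. The only cosmetic difference is in the inductive step: the paper writes out the full Lagrangian with multipliers $\mu_j$ for the orthogonality constraints and shows $\mu_j=0$ by taking inner products with the previously obtained $\mathbf{P}_k^*$, whereas you expand directly in the eigenbasis restricted to the orthogonal complement; both are standard and equivalent, and your added remark on repeated singular values is a welcome clarification that the paper omits.
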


\begin{proof}
We first consider the case with $p=1$, where the orthogonal constrains are not involved. The Lagrangian can be written as:
\begin{equation}
\mathcal{L}(\mathbf{P}_1, \lambda_1) = \mathbf{P}_1^\top \mathbf{S} \mathbf{P}_1 - \lambda_1(\mathbf{P}_1^\top \mathbf{P}_1 - 1),
\end{equation}
where $\lambda_1$ is the Lagrangian multiplier. 
According to the first-order condition, the derivative with respect to $\mathbf{P}_1$ should be zero:
\begin{equation}
\frac{\partial \mathcal{L}}{\partial \mathbf{P}_1}\Big|_{\mathbf{P}_1=\mathbf{P}_1^*} = 2\mathbf{S}\mathbf{P}_1^* - 2\lambda_1 \mathbf{P}_1^* = 0,
\end{equation}
which immediately follows by $\mathbf{S}\mathbf{P}_1^* = \lambda_1\mathbf{P}_1^*$. Apparently, $\mathbf{P}_1^*$ is an eigenvector of $\mathbf{S}$, with corresponding eigenvalue $\lambda_1$.  
Moreover, $\mathbf{P}_1^*$ is the leading eigenvector of $\mathbf{S}$ associated with the largest eigenvalue, which follows from the maximization objective is $\mathbf{P}_1{^*\top} \mathbf{S} \mathbf{P}_1^* = \lambda_1$, 

We further consider the case with $p \geq 2$, impose orthogonality to all previous projection vectors. Defining Lagrangian multipliers $\lambda_p$ and $\{\mu_j\}_{j=1}^{p-1}$,  we write the Lagrangian as follow
\begin{equation}
\mathcal{L}(\mathbf{P}_p, \lambda_p, \{\mu_j\}) = \mathbf{P}_p^\top \mathbf{S} \mathbf{P}_p - \lambda_p(\mathbf{P}_p^\top \mathbf{P}_p - 1) - \sum_{j=1}^{p-1} \mu_j \mathbf{P}_p^\top \mathbf{P}_j.
\end{equation}

According to the first-order condition, the derivative with respect to $\mathbf{P}_p$ should be zero:
\begin{equation}\label{eq:lag}
\frac{\partial \mathcal{L}}{\partial \mathbf{P}_p}\Big|_{\mathbf{P}_p=\mathbf{P}_p^*} = 2\mathbf{S}\mathbf{P}_p^* - 2\lambda_p \mathbf{P}_p^* - \sum_{j =1}^{p-1} \mu_j \mathbf{P}_j = 0.
\end{equation}

To resolve $\{\mu_j\}$, we take the inner product of both sides using $\mathbf{P}_k^{*\top}$ with $k=1,2,..., p-1$:
\begin{equation}
\mathbf{P}_k^{*\top}\mathbf{S}\mathbf{P}_p^* - \lambda_p \mathbf{P}_k^{*\top} \mathbf{P}_p^* - \frac{1}{2}\sum_{j < p} \mu_j \mathbf{P}_k^{*\top} \mathbf{P}_j = 0.
\end{equation}
Since $\mathbf{P}_k^*$ is previously obtained that satisfies $\mathbf{S} \mathbf{P}_k^* = \lambda_k \mathbf{P}_k^*$ and $\mathbf{P}_k^{*\top} \mathbf{P}_k^*=1$, we have $\mathbf{P}_k^{*\top}\mathbf{S}\mathbf{P}_p^*=\lambda_k$. Due to the orthogonal constraint, the current projection vector should be orthogonal to the previously derived ones, we have $\mathbf{P}_k^{*\top} \mathbf{P}_p^*=0$ and $\mathbf{P}_k^{*\top} \mathbf{P}_j=\delta_{k,j}$, where $\delta_{k,j}=1$ if $j=k$ and 0 otherwise. Putting together, the equation simplifies to: $\mu_k=0$. 

Since $\mu_k=0$ holds for all $k=1,2,...,p-1$, we have $\mu_1=\mu_2=...=\mu_{p-1}=0$.
Plugging back to \eqref{eq:lag}, the optimal condition becomes:
\begin{equation}\label{eq:eig}
\mathbf{S}\mathbf{P}_p^* = \lambda_p \mathbf{P}_p^*,
\end{equation}
with the additional restriction that $\mathbf{P}_p^*$ is orthogonal to all previous directions.
Therefore, $\mathbf{P}_p^*$ must be the eigenvector of $\mathbf{S}$ corresponding to the $p$-th largest eigenvalue.
That is, $\mathbf{P}$ can be derived by performing eigenvector decomposition on $\mathbf{S}$.

Moving forward, consider SVD: \(\mathbf{Y}=\mathbf{U} \mathbf{\Lambda} \mathbf{V}^\top\), we can represent $\mathbf{S}$ as 
\begin{equation}
    \mathbf{S}=\mathbf{Y}^\top \mathbf{Y} = \mathbf{V} \mathbf{\Lambda}^\top \mathbf{U}^\top \mathbf{U} \mathbf{\Lambda} \mathbf{V}^\top = \mathbf{V} \mathbf{\Lambda}^2 \mathbf{V}^\top,
\end{equation}
which implies that each column of $\mathbf{V}$ is the eigenvector of $\mathbf{S}$, i.e., $\mathbf{V}=\mathbf{P}^*$. Therefore, the projection matrix $\mathbf{P}^*$ can be obtained by performing SVD on $\mathbf{Y}$ as: $\mathbf{Y}=\mathbf{U}\mathbf{\Lambda}\mathbf{V}^\top$, where $\mathbf{V}$ is exactly the optimum projection matrix $\mathbf{P}^*$. The proof is therefore completed.
\end{proof}

% Let $\mathbf{P} = [\mathbf{P}_1, \dots, \mathbf{P}_T] \in \mathbb{R}^{d \times T}$ collect all $T$ loading vectors. Define:
% \begin{equation}
% \max_{\mathbf{P}^\top\mathbf{P} = \mathbf{I}} \operatorname{tr}(\mathbf{P}^\top \mathbf{S} \mathbf{P})
% \end{equation}

% The Lagrangian is:
% \begin{equation}
% \mathcal{L}(\mathbf{P}, \Lambda) = \operatorname{tr}(\mathbf{P}^\top \mathbf{S} \mathbf{P}) - \operatorname{tr}[\Lambda (\mathbf{P}^\top \mathbf{P} - \mathbf{I})]
% \end{equation}
% Stationarity yields:
% \begin{equation}
% \frac{\partial \mathcal{L}}{\partial \mathbf{P}} = 2\mathbf{S}\mathbf{P} - 2\mathbf{P}\Lambda = 0
% \quad\implies\quad
% \mathbf{S}\mathbf{P} = \mathbf{P}\Lambda
% \end{equation}
% The solution: columns of $\mathbf{P}$ are the eigenvectors of $\mathbf{S}$, $\Lambda$ is diagonal with the corresponding eigenvalues.

\begin{lemma}[Decorrelated components, Lemma~\ref{lem:decorrelation} in the main text]
    Suppose $\mathbf{Y}\in\mathbb{R}^{\mathrm{m}\times\T}$ contains normalized label sequences for $\mathrm{m}$ samples, $\mathbf{Z}=[\mathbf{Z}_1,...,\mathbf{Z}_\mathrm{T}]$ are the obtained components; for any $p\neq p^\prime$, we have $\mathbf{Z}_p^\top\mathbf{Z}_{p^\prime}=0$. 
\end{lemma}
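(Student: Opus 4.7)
\textbf{Proof proposal for Lemma \ref{lem:decorrelation}.} The plan is to leverage Lemma~\ref{lem:svd} to obtain a closed-form expression for the component matrix $\mathbf{Z}$ and then directly compute $\mathbf{Z}^\top\mathbf{Z}$, showing it is diagonal. Since columns of $\mathbf{Z}$ are exactly the components $\mathbf{Z}_p$, the off-diagonal entries of $\mathbf{Z}^\top\mathbf{Z}$ are the inner products $\mathbf{Z}_p^\top\mathbf{Z}_{p'}$ that must vanish.

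First, I would invoke Lemma~\ref{lem:svd} to write $\mathbf{Y}=\mathbf{U}\mathbf{\Lambda}(\mathbf{P}^*)^\top$, where $\mathbf{U}\in\mathbb{R}^{\mathrm{m}\times\mathrm{m}}$ has orthonormal columns ($\mathbf{U}^\top\mathbf{U}=\mathbf{I}_\mathrm{m}$), $\mathbf{P}^*\in\mathbb{R}^{\T\times\T}$ is orthogonal ($(\mathbf{P}^*)^\top\mathbf{P}^*=\mathbf{I}_\T$), and $\mathbf{\Lambda}\in\mathbb{R}^{\mathrm{m}\times\T}$ is rectangular-diagonal with singular values on its main diagonal.

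Next, I would compute $\mathbf{Z}=\mathbf{Y}\mathbf{P}^*$ by substitution:
\begin{equation*}
\mathbf{Z}=\mathbf{U}\mathbf{\Lambda}(\mathbf{P}^*)^\top\mathbf{P}^*=\mathbf{U}\mathbf{\Lambda}.
\end{equation*}
Then
\begin{equation*}
\mathbf{Z}^\top\mathbf{Z}=\mathbf{\Lambda}^\top\mathbf{U}^\top\mathbf{U}\mathbf{\Lambda}=\mathbf{\Lambda}^\top\mathbf{\Lambda},
\end{equation*}
which is a $\T\times\T$ diagonal matrix whose $p$-th diagonal entry equals the squared $p$-th singular value. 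Consequently, for any $p\neq p'$, the $(p,p')$ entry of $\mathbf{Z}^\top\mathbf{Z}$ is zero, i.e., $\mathbf{Z}_p^\top\mathbf{Z}_{p'}=0$, which is exactly the claim.

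There is no real obstacle here beyond citing Lemma~\ref{lem:svd}; the entire argument reduces to the orthogonality of the singular-vector matrices. The only subtlety worth flagging is the rectangular shape of $\mathbf{\Lambda}$ when $\mathrm{m}\neq\T$, but this does not affect the argument since $\mathbf{\Lambda}^\top\mathbf{\Lambda}$ remains diagonal regardless. One could alternatively give a more primal derivation directly from the optimization in \eqref{eq:opt} by showing that the orthogonality constraint $(\mathbf{P}_p^*)^\top\mathbf{P}_{p'}^*=0$ combined with the eigenvector relation $\mathbf{S}\mathbf{P}_p^*=\lambda_p\mathbf{P}_p^*$ (established in the proof of Lemma~\ref{lem:svd_app}) yields $\mathbf{Z}_p^\top\mathbf{Z}_{p'}=(\mathbf{P}_p^*)^\top\mathbf{S}\mathbf{P}_{p'}^*=\lambda_{p'}(\mathbf{P}_p^*)^\top\mathbf{P}_{p'}^*=0$, but the SVD route is cleaner.
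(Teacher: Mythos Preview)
Your proposal is correct. Your primary route (substituting the SVD of $\mathbf{Y}$ to obtain $\mathbf{Z}=\mathbf{U}\mathbf{\Lambda}$ and then reading off that $\mathbf{Z}^\top\mathbf{Z}=\mathbf{\Lambda}^\top\mathbf{\Lambda}$ is diagonal) differs from the paper's proof, which instead expands $\mathbf{Z}_p^\top\mathbf{Z}_{p'}=(\mathbf{P}_p^*)^\top\mathbf{Y}^\top\mathbf{Y}\,\mathbf{P}_{p'}^*$ and then applies the eigenvector relation $\mathbf{Y}^\top\mathbf{Y}\,\mathbf{P}_{p'}^*=\lambda_{p'}\mathbf{P}_{p'}^*$ together with the orthogonality $(\mathbf{P}_p^*)^\top\mathbf{P}_{p'}^*=0$. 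In other words, the paper uses exactly the ``more primal'' alternative you sketch at the end of your proposal. Both arguments rest on the same underlying fact (orthogonality of the right singular vectors of $\mathbf{Y}$); your matrix-level SVD computation is arguably tidier and handles all pairs $(p,p')$ at once, while the paper's eigenvector argument stays closer to the optimization formulation~\eqref{eq:opt} and avoids invoking Lemma~\ref{lem:svd} as a black box.
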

\begin{proof}
For any two latent components $\mathbf{Z}_p$ and $\mathbf{Z}_{p'}$ with $p \neq p'$, we have:
\begin{equation}
\mathbf{Z}_p^\top \mathbf{Z}_{p'} = (\mathbf{Y} \mathbf{P}_p)^\top (\mathbf{Y} \mathbf{P}_{p'}) = \mathbf{P}_p^\top \mathbf{Y}^\top \mathbf{Y} \mathbf{P}_{p'}
\end{equation}

According to \eqref{eq:eig}, $\mathbf{P}_p$ and $\mathbf{P}_{p'}$ are eigenvectors of $\mathbf{Y}^\top \mathbf{Y}$, we have
\begin{equation}
\mathbf{Y}^\top \mathbf{Y} \mathbf{P}_p = \lambda_p \mathbf{P}_p,\qquad
\mathbf{Y}^\top \mathbf{Y} \mathbf{P}_{p'} = \lambda_{p'} \mathbf{P}_{p'},
\end{equation}
which immediately follows by $\mathbf{Z}_p^\top \mathbf{Z}_{p'}=\lambda_{p'} \mathbf{P}_p^\top \mathbf{P}_{p'}$. Recalling that different projection bases are constrained to orthogonal, i.e., $\mathbf{P}_p^\top \mathbf{P}_{p'} = 0$ for $p \neq p'$, we have
\begin{equation}
\mathbf{Z}_p^\top \mathbf{Z}_{p'} = 0 \qquad \text{for all } p \neq p'.
\end{equation}

The proof is completed.
\end{proof}
\section{Generalized Orthogonalization Methods}
\label{sec:trans_detail}
In this section, we introduce alternative transform methods for obtaining latent components, each with distinct characteristics such as dimensionality reduction and noise isolation. We discuss their implications for transforming label sequences in time-series forecasting, with a comparative study detailed in Section~\ref{sec:generalize}.

\paragraph{RPCA.}  
The robust principal component analysis decomposes the data into a low-rank informative component and a sparse noise component, effectively separating structured signals from noise. Specifically,  given $\mathbf{Y} \in \mathbb{R}^{\mathrm{m} \times \T}$, it is achieved by solving:
\begin{equation}
    \min_{\mathbf{V}, \mathbf{S}} \|\mathbf{V}\|_* + \lambda \|\mathbf{S}\|_1, \quad \text{subject to} \quad \mathbf{Y} = \mathbf{V} + \mathbf{S},
\end{equation}
where $\|\cdot\|_*$ is the nuclear norm, $\|\cdot\|_1$ is the element-wise $\ell_1$ norm, and $\lambda$ is a regularization parameter. Afterwards, it performs the principal component analysis on the obtained informative component $\mathbf{V}$ to derive the projection matrix $\mathbf{P}$. The latent components are generated by $\mathbf{Z}=\mathbf{Y}\mathbf{P}$. While this approach enhances noise elimination, it does not guarantee decorrelation of the derived components, as the projection matrix $\mathbf{P}$ is derived from $\mathbf{V}$ instead of the original data matrix $\mathbf{Y}$.

\paragraph{SVD.}  The singular value decomposition provides a method to decompose the matrix into different components. Given $\mathbf{Y} \in \mathbb{R}^{\mathrm{m} \times \T}$, we have:
\begin{equation}
    \mathbf{Y} = \mathbf{U} \mathbf{\Lambda} \mathbf{V}^\top,
\end{equation}
where $\mathbf{U} \in \mathbb{R}^{\mathrm{m} \times \mathrm{r}}$ and $\mathbf{V} \in \mathbb{R}^{\mathrm{T} \times \mathrm{R}}$ are singular vectors, $\mathbf{\Lambda} \in \mathbb{R}^\mathrm{r \times r}$ is diagonal with rank $\mathrm{r}$. The right singular vector is used as the projection matrix, and the latent components are generated by $\mathbf{Z}=\mathbf{Y}\mathbf{V}$. One key distinction here needs to be highlighted. Unlike the workflow in the main text (Algorithm 1), the label sequence is not normalized after window generation before computing SVD here, resulting in non-decorrelated components.

\paragraph{FA.}
Factor analysis models the observed data as linear combinations of a small number of latent factors plus noise, capturing the covariance structure through these unobserved factors. Specifically, given mean-centered $\mathbf{Y} \in \mathbb{R}^{\mathrm{m} \times \T}$, the model assumes:
\begin{equation}
    \mathbf{Y} = \mathbf{V} \mathbf{F}^\top + \mathbf{E},
\end{equation}
where $\mathbf{V} \in \mathbb{R}^{\mathrm{m} \times \mathrm{K}}$ is the factor loading matrix, $\mathrm{K}$ is the number of latent factors ($\mathrm{K} \ll \mathrm{m}$), $\mathbf{F} \in \mathbb{R}^{\T \times \mathrm{K}}$ contains the latent factor scores for each sample, and $\mathbf{E} \in \mathbb{R}^{\mathrm{m} \times \T}$ is the noise matrix. The standard assumption is that each factor $f_i \sim \mathcal{N}(0, \mathbf{I})$ and noise $\epsilon_i \sim \mathcal{N}(0, \Psi)$, where $\Psi$ is a diagonal covariance matrix. The loadings $\mathbf{V}$ and factor scores $\mathbf{F}$ are typically estimated via maximum likelihood. The latent components are given by the estimated factor scores, \ie $\mathbf{Z} = \mathbf{Y}\Psi^{-1}\mathbf{F}(\mathbf{I} + \mathbf{F}^\top \Psi^{-1} \mathbf{F})^{-1}:=\mathbf{Y}\mathbf{P}$~\footnote{Adapted from source code of sklearn:~\url{https://github.com/scikit-learn/scikit-learn/blob/98ed9dc73/sklearn/decomposition/}}. This approach captures the covariance structure of $\mathbf{Y}$ via a small number of factors, but does not necessarily guarantee uncorrelated or noise-isolated components.

\section{Complexity Analysis}\label{sec:comp_app}
In this section, we analyze the running cost of Time-o1. The core computation of Time-o1 involves (a) calculating the projection matrix $\mathbf{P}^*$ via SVD, and (b) performing transformation on both predicted and label sequences, followed by calculating their point-wise MAE loss. 
Given the target matrix $\mathbf{Y} \in \mathbb{R}^{\mathrm{m} \times \T}$, the SVD step decomposes $\mathbf{Y}$ with an established complexity of $\mathcal{O}(\mathrm{m} \T^2)$ (assuming $\mathrm{m} \geq \T$). For the sequence transformation, each sample (row) in $\mathbf{Y}$ is multiplied by the projection matrix $\mathbf{P}^* \in \mathbb{R}^{\T \times \T}$, resulting in a total complexity of $\mathcal{O}(\mathrm{m} \T^2)$. The computation of point-wise MAE loss across all samples and forecast steps is $\mathcal{O}(\mathrm{m}\T)$, which is negligible compared to the complexity of previous steps. Thus, the overall complexity per batch is dominated by the SVD and projection operations, both scaling as $\mathcal{O}(\mathrm{m} \T^2)$. The main findings from the empirical evaluations are as follows.
\begin{itemize}[leftmargin=*]
    \item \autoref{fig:running_cost} (a) presents the computational cost for calculating the projection matrix. Overall, it increases linearly with the sample size and quadratically with the prediction length, which aligns with the theoretical complexity. Importantly, this operation is performed only once before training begins, rendering the associated overhead acceptable.
    \item \autoref{fig:running_cost} (b) presents the computational cost for the sequence transformation. The cost increases quadratically with the prediction length, but remains below 2 ms. This cost is comparable to that of a linear projection. Furthermore, sequence transformation is not required during inference. 
\end{itemize}

\textit{In conclusion, Time-o1 does not add complexity to model inference, and the additional complexity during the training stage is negligible. }

\begin{figure}[ht]
\begin{center}
\subfigure[Running cost of SVD.]{
    \includegraphics[width=0.235\linewidth]{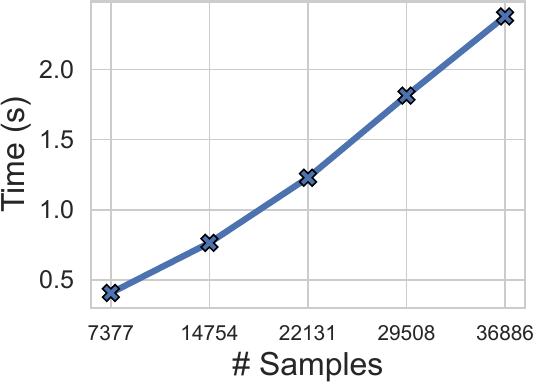}
    \includegraphics[width=0.235\linewidth]{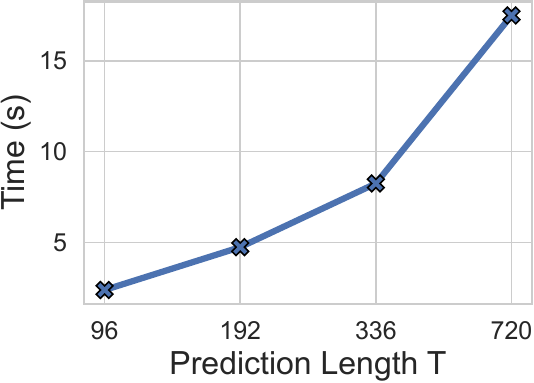}
}
\hfill
\subfigure[Running cost of transformation.]{
    \includegraphics[width=0.235\linewidth]{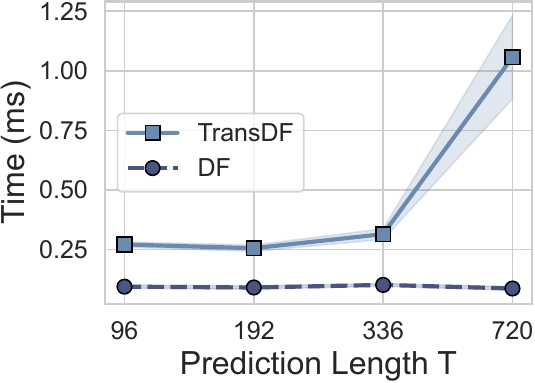}
    \includegraphics[width=0.235\linewidth]{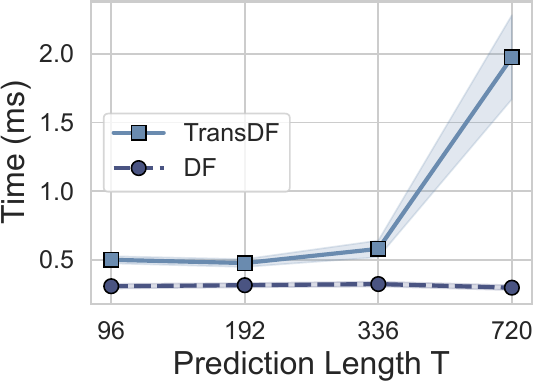}
}
\caption{Running cost for projection matrix calculation (left panel with varying number of samples, right panel with varying prediction length) and sequence transformation (left panel for forward pass, right panel for backward pass, with average and shaded areas for 95\% confidence intervals).}
\label{fig:running_cost}
\end{center}
\end{figure}

\section{More Experimental Results}\label{sec:results_app}

\begin{table}
  \caption{The comprehensive results on the long-term forecasting task. }\label{tab:longterm_app}
  \renewcommand{\arraystretch}{1} \setlength{\tabcolsep}{2.4pt} \scriptsize
  \centering
  \renewcommand{\multirowsetup}{\centering}
  \begin{threeparttable}
  \begin{tabular}{c|c|cc|cc|cc|cc|cc|cc|cc|cc|cc|cc|cc}
    \toprule
    \multicolumn{2}{l}{\multirow{2}{*}{\rotatebox{0}{\scaleb{Models}}}} & 
    \multicolumn{2}{c}{\rotatebox{0}{\scaleb{\textbf{Time-o1}}}} &
    \multicolumn{2}{c}{\rotatebox{0}{\scaleb{Fredformer}}} &
    \multicolumn{2}{c}{\rotatebox{0}{\scaleb{iTransformer}}} &
    \multicolumn{2}{c}{\rotatebox{0}{\scaleb{FreTS}}} &
    \multicolumn{2}{c}{\rotatebox{0}{\scaleb{TimesNet}}} &
    \multicolumn{2}{c}{\rotatebox{0}{\scaleb{MICN}}} &
    \multicolumn{2}{c}{\rotatebox{0}{\scaleb{TiDE}}} &
    \multicolumn{2}{c}{\rotatebox{0}{\scaleb{DLinear}}} &
    \multicolumn{2}{c}{\rotatebox{0}{\scaleb{FEDformer}}} &
    \multicolumn{2}{c}{\rotatebox{0}{\scaleb{Autoformer}}} &
    \multicolumn{2}{c}{\rotatebox{0}{\scaleb{Transformer}}}  \\
    \multicolumn{2}{c}{} &
    \multicolumn{2}{c}{\scaleb{\textbf{(Ours)}}} & 
    \multicolumn{2}{c}{\scaleb{(2024)}} & 
    \multicolumn{2}{c}{\scaleb{(2024)}} & 
    \multicolumn{2}{c}{\scaleb{(2023)}} & 
    \multicolumn{2}{c}{\scaleb{(2023)}} &
    \multicolumn{2}{c}{\scaleb{(2023)}} & 
    \multicolumn{2}{c}{\scaleb{(2023)}} & 
    \multicolumn{2}{c}{\scaleb{(2023)}} & 
    \multicolumn{2}{c}{\scaleb{(2022)}} &
    \multicolumn{2}{c}{\scaleb{(2021)}} &
    \multicolumn{2}{c}{\scaleb{(2017)}}  \\
    \cmidrule(lr){3-4} \cmidrule(lr){5-6}\cmidrule(lr){7-8} \cmidrule(lr){9-10}\cmidrule(lr){11-12} \cmidrule(lr){13-14} \cmidrule(lr){15-16} \cmidrule(lr){17-18} \cmidrule(lr){19-20} \cmidrule(lr){21-22} \cmidrule(lr){23-24}
    \multicolumn{2}{l}{\rotatebox{0}{\scaleb{Metrics}}}  & \scalea{MSE} & \scalea{MAE}  & \scalea{MSE} & \scalea{MAE}  & \scalea{MSE} & \scalea{MAE}  & \scalea{MSE} & \scalea{MAE}  & \scalea{MSE} & \scalea{MAE}  & \scalea{MSE} & \scalea{MAE} & \scalea{MSE} & \scalea{MAE} & \scalea{MSE} & \scalea{MAE} & \scalea{MSE} & \scalea{MAE} & \scalea{MSE} & \scalea{MAE} & \scalea{MSE} & \scalea{MAE} \\
    \toprule

    % ETTm1
    \multirow{5}{*}{{\rotatebox{90}{\scalebox{0.95}{ETTm1}}}}
    & \scalea{96} & \subbst{\scalea{0.321}} & \bst{\scalea{0.357}} & \scalea{0.326} & \subbst{\scalea{0.361}} & \scalea{0.338} & \scalea{0.372} & \scalea{0.342} & \scalea{0.375} & \scalea{0.368} & \scalea{0.394} & \bst{\scalea{0.319}} & \scalea{0.366} & \scalea{0.353} & \scalea{0.374} & \scalea{0.346} & \scalea{0.373} & \scalea{0.401} & \scalea{0.434} & \scalea{0.485} & \scalea{0.468} & \scalea{0.503} & \scalea{0.482} \\
    & \scalea{192} & \bst{\scalea{0.360}} & \bst{\scalea{0.378}} & \scalea{0.365} & \subbst{\scalea{0.382}} & \scalea{0.382} & \scalea{0.396} & \scalea{0.385} & \scalea{0.400} & \scalea{0.406} & \scalea{0.409} & \subbst{\scalea{0.364}} & \scalea{0.395} & \scalea{0.391} & \scalea{0.393} & \scalea{0.380} & \scalea{0.390} & \scalea{0.415} & \scalea{0.446} & \scalea{0.504} & \scalea{0.482} & \scalea{0.807} & \scalea{0.664} \\
    & \scalea{336} & \bst{\scalea{0.389}} & \bst{\scalea{0.400}} & \scalea{0.396} & \subbst{\scalea{0.404}} & \scalea{0.427} & \scalea{0.424} & \scalea{0.416} & \scalea{0.421} & \scalea{0.454} & \scalea{0.444} & \subbst{\scalea{0.395}} & \scalea{0.425} & \scalea{0.423} & \scalea{0.414} & \scalea{0.413} & \scalea{0.414} & \scalea{0.432} & \scalea{0.450} & \scalea{0.520} & \scalea{0.489} & \scalea{0.847} & \scalea{0.678} \\
    & \scalea{720} & \bst{\scalea{0.447}} & \bst{\scalea{0.435}} & \subbst{\scalea{0.459}} & \subbst{\scalea{0.444}} & \scalea{0.496} & \scalea{0.463} & \scalea{0.513} & \scalea{0.489} & \scalea{0.527} & \scalea{0.474} & \scalea{0.505} & \scalea{0.499} & \scalea{0.486} & \scalea{0.448} & \scalea{0.472} & \scalea{0.450} & \scalea{0.522} & \scalea{0.500} & \scalea{0.594} & \scalea{0.523} & \scalea{1.037} & \scalea{0.771} \\
    \cmidrule(lr){2-24}
    & \scalea{Avg} & \bst{\scalea{0.379}} & \bst{\scalea{0.393}} & \subbst{\scalea{0.387}} & \subbst{\scalea{0.398}} & \scalea{0.411} & \scalea{0.414} & \scalea{0.414} & \scalea{0.421} & \scalea{0.438} & \scalea{0.430} & \scalea{0.396} & \scalea{0.421} & \scalea{0.413} & \scalea{0.407} & \scalea{0.403} & \scalea{0.407} & \scalea{0.442} & \scalea{0.457} & \scalea{0.526} & \scalea{0.491} & \scalea{0.799} & \scalea{0.648} \\
    \midrule

    % ETTm2
    \multirow{5}{*}{{\rotatebox{90}{\scalebox{0.95}{ETTm2}}}}
    & \scalea{96} & \bst{\scalea{0.172}} & \bst{\scalea{0.251}} & \subbst{\scalea{0.177}} & \subbst{\scalea{0.260}} & \scalea{0.182} & \scalea{0.265} & \scalea{0.188} & \scalea{0.279} & \scalea{0.184} & \scalea{0.262} & \scalea{0.178} & \scalea{0.277} & \scalea{0.182} & \scalea{0.265} & \scalea{0.188} & \scalea{0.283} & \scalea{0.205} & \scalea{0.289} & \scalea{0.218} & \scalea{0.300} & \scalea{0.386} & \scalea{0.441} \\
    & \scalea{192} & \bst{\scalea{0.235}} & \bst{\scalea{0.294}} & \subbst{\scalea{0.242}} & \subbst{\scalea{0.300}} & \scalea{0.257} & \scalea{0.315} & \scalea{0.264} & \scalea{0.329} & \scalea{0.257} & \scalea{0.308} & \scalea{0.266} & \scalea{0.343} & \scalea{0.247} & \scalea{0.304} & \scalea{0.280} & \scalea{0.356} & \scalea{0.271} & \scalea{0.334} & \scalea{0.282} & \scalea{0.340} & \scalea{1.410} & \scalea{0.881} \\
    & \scalea{336} & \bst{\scalea{0.293}} & \bst{\scalea{0.333}} & \scalea{0.302} & \subbst{\scalea{0.340}} & \scalea{0.320} & \scalea{0.354} & \scalea{0.322} & \scalea{0.369} & \scalea{0.315} & \scalea{0.345} & \subbst{\scalea{0.299}} & \scalea{0.354} & \scalea{0.307} & \scalea{0.343} & \scalea{0.375} & \scalea{0.420} & \scalea{0.327} & \scalea{0.366} & \scalea{0.335} & \scalea{0.370} & \scalea{1.940} & \scalea{1.070} \\
    & \scalea{720} & \bst{\scalea{0.388}} & \bst{\scalea{0.389}} & \subbst{\scalea{0.399}} & \subbst{\scalea{0.397}} & \scalea{0.423} & \scalea{0.411} & \scalea{0.489} & \scalea{0.482} & \scalea{0.452} & \scalea{0.421} & \scalea{0.489} & \scalea{0.482} & \scalea{0.408} & \scalea{0.398} & \scalea{0.526} & \scalea{0.508} & \scalea{0.428} & \scalea{0.425} & \scalea{0.423} & \scalea{0.420} & \scalea{2.914} & \scalea{1.276} \\
    \cmidrule(lr){2-24}
    & \scalea{Avg} & \bst{\scalea{0.272}} & \bst{\scalea{0.317}} & \subbst{\scalea{0.280}} & \subbst{\scalea{0.324}} & \scalea{0.295} & \scalea{0.336} & \scalea{0.316} & \scalea{0.365} & \scalea{0.302} & \scalea{0.334} & \scalea{0.308} & \scalea{0.364} & \scalea{0.286} & \scalea{0.328} & \scalea{0.342} & \scalea{0.392} & \scalea{0.308} & \scalea{0.354} & \scalea{0.315} & \scalea{0.358} & \scalea{1.662} & \scalea{0.917} \\
    \midrule

    % ETTh1
    \multirow{5}{*}{\rotatebox{90}{{\scalebox{0.95}{ETTh1}}}}
    & \scalea{96} & \bst{\scalea{0.368}} & \bst{\scalea{0.391}} & \subbst{\scalea{0.377}} & \scalea{0.396} & \scalea{0.385} & \scalea{0.405} & \scalea{0.398} & \scalea{0.409} & \scalea{0.399} & \scalea{0.418} & \scalea{0.381} & \scalea{0.416} & \scalea{0.387} & \subbst{\scalea{0.395}} & \scalea{0.389} & \scalea{0.404} & \scalea{0.391} & \scalea{0.433} & \scalea{0.449} & \scalea{0.465} & \scalea{1.028} & \scalea{0.778} \\
    & \scalea{192} & \bst{\scalea{0.424}} & \bst{\scalea{0.422}} & \subbst{\scalea{0.437}} & \scalea{0.425} & \scalea{0.440} & \scalea{0.437} & \scalea{0.451} & \scalea{0.442} & \scalea{0.452} & \scalea{0.451} & \scalea{0.497} & \scalea{0.489} & \scalea{0.439} & \subbst{\scalea{0.425}} & \scalea{0.442} & \scalea{0.440} & \scalea{0.418} & \scalea{0.448} & \scalea{0.459} & \scalea{0.469} & \scalea{1.010} & \scalea{0.789} \\
    & \scalea{336} & \bst{\scalea{0.467}} & \bst{\scalea{0.441}} & \scalea{0.486} & \scalea{0.449} & \subbst{\scalea{0.480}} & \scalea{0.457} & \scalea{0.501} & \scalea{0.472} & \scalea{0.488} & \scalea{0.469} & \scalea{0.589} & \scalea{0.555} & \scalea{0.482} & \subbst{\scalea{0.447}} & \scalea{0.488} & \scalea{0.467} & \scalea{0.487} & \scalea{0.484} & \scalea{0.511} & \scalea{0.500} & \scalea{0.908} & \scalea{0.743} \\
    & \scalea{720} & \bst{\scalea{0.465}} & \bst{\scalea{0.463}} & \scalea{0.488} & \subbst{\scalea{0.467}} & \scalea{0.504} & \scalea{0.492} & \scalea{0.608} & \scalea{0.571} & \scalea{0.549} & \scalea{0.515} & \scalea{0.665} & \scalea{0.617} & \subbst{\scalea{0.484}} & \scalea{0.471} & \scalea{0.505} & \scalea{0.502} & \scalea{0.494} & \scalea{0.514} & \scalea{0.488} & \scalea{0.498} & \scalea{0.987} & \scalea{0.785} \\
    \cmidrule(lr){2-24}
    & \scalea{Avg} & \bst{\scalea{0.431}} & \bst{\scalea{0.429}} & \subbst{\scalea{0.447}} & \subbst{\scalea{0.434}} & \scalea{0.452} & \scalea{0.448} & \scalea{0.489} & \scalea{0.474} & \scalea{0.472} & \scalea{0.463} & \scalea{0.533} & \scalea{0.519} & \scalea{0.448} & \scalea{0.435} & \scalea{0.456} & \scalea{0.453} & \scalea{0.447} & \scalea{0.470} & \scalea{0.477} & \scalea{0.483} & \scalea{0.983} & \scalea{0.774} \\
    \midrule

    % ETTh2
    \multirow{5}{*}{\rotatebox{90}{{\scalebox{0.95}{ETTh2}}}}
    & \scalea{96} & \bst{\scalea{0.282}} & \bst{\scalea{0.330}} & \scalea{0.293} & \scalea{0.344} & \scalea{0.301} & \scalea{0.349} & \scalea{0.315} & \scalea{0.374} & \scalea{0.321} & \scalea{0.358} & \scalea{0.351} & \scalea{0.398} & \subbst{\scalea{0.291}} & \subbst{\scalea{0.340}} & \scalea{0.330} & \scalea{0.383} & \scalea{0.351} & \scalea{0.391} & \scalea{0.355} & \scalea{0.397} & \scalea{1.485} & \scalea{0.959} \\
    & \scalea{192} & \bst{\scalea{0.359}} & \bst{\scalea{0.381}} & \subbst{\scalea{0.372}} & \subbst{\scalea{0.391}} & \scalea{0.383} & \scalea{0.397} & \scalea{0.466} & \scalea{0.467} & \scalea{0.418} & \scalea{0.417} & \scalea{0.492} & \scalea{0.489} & \scalea{0.376} & \scalea{0.392} & \scalea{0.439} & \scalea{0.450} & \scalea{0.456} & \scalea{0.456} & \scalea{0.478} & \scalea{0.471} & \scalea{4.218} & \scalea{1.585} \\
    & \scalea{336} & \bst{\scalea{0.394}} & \bst{\scalea{0.414}} & \scalea{0.420} & \scalea{0.433} & \scalea{0.425} & \scalea{0.432} & \scalea{0.522} & \scalea{0.502} & \scalea{0.464} & \scalea{0.454} & \scalea{0.656} & \scalea{0.582} & \subbst{\scalea{0.417}} & \subbst{\scalea{0.427}} & \scalea{0.589} & \scalea{0.538} & \scalea{0.477} & \scalea{0.492} & \scalea{0.459} & \scalea{0.469} & \scalea{2.775} & \scalea{1.361} \\
    & \scalea{720} & \bst{\scalea{0.400}} & \bst{\scalea{0.427}} & \subbst{\scalea{0.421}} & \subbst{\scalea{0.439}} & \scalea{0.436} & \scalea{0.448} & \scalea{0.792} & \scalea{0.643} & \scalea{0.434} & \scalea{0.450} & \scalea{0.981} & \scalea{0.718} & \scalea{0.429} & \scalea{0.446} & \scalea{0.757} & \scalea{0.626} & \scalea{0.522} & \scalea{0.505} & \scalea{0.499} & \scalea{0.502} & \scalea{2.274} & \scalea{1.257} \\
    \cmidrule(lr){2-24}
    & \scalea{Avg} & \bst{\scalea{0.359}} & \bst{\scalea{0.388}} & \subbst{\scalea{0.377}} & \scalea{0.402} & \scalea{0.386} & \scalea{0.407} & \scalea{0.524} & \scalea{0.496} & \scalea{0.409} & \scalea{0.420} & \scalea{0.620} & \scalea{0.546} & \scalea{0.378} & \subbst{\scalea{0.401}} & \scalea{0.529} & \scalea{0.499} & \scalea{0.452} & \scalea{0.461} & \scalea{0.448} & \scalea{0.460} & \scalea{2.688} & \scalea{1.291} \\
    \midrule

    % ECL
    \multirow{5}{*}{{\rotatebox{90}{\scalebox{0.95}{ECL}}}} 
    & \scalea{96} & \bst{\scalea{0.145}} & \bst{\scalea{0.235}} & \scalea{0.161} & \scalea{0.258} & \subbst{\scalea{0.150}} & \subbst{\scalea{0.242}} & \scalea{0.180} & \scalea{0.266} & \scalea{0.170} & \scalea{0.272} & \scalea{0.170} & \scalea{0.281} & \scalea{0.197} & \scalea{0.274} & \scalea{0.197} & \scalea{0.282} & \scalea{0.187} & \scalea{0.302} & \scalea{0.189} & \scalea{0.304} & \scalea{0.253} & \scalea{0.350} \\
    & \scalea{192} & \bst{\scalea{0.159}} & \bst{\scalea{0.249}} & \scalea{0.174} & \scalea{0.269} & \subbst{\scalea{0.168}} & \subbst{\scalea{0.259}} & \scalea{0.184} & \scalea{0.272} & \scalea{0.183} & \scalea{0.282} & \scalea{0.185} & \scalea{0.297} & \scalea{0.197} & \scalea{0.277} & \scalea{0.197} & \scalea{0.286} & \scalea{0.207} & \scalea{0.322} & \scalea{0.271} & \scalea{0.371} & \scalea{0.262} & \scalea{0.356} \\
    & \scalea{336} & \bst{\scalea{0.173}} & \bst{\scalea{0.264}} & \scalea{0.194} & \scalea{0.290} & \subbst{\scalea{0.182}} & \subbst{\scalea{0.274}} & \scalea{0.199} & \scalea{0.290} & \scalea{0.203} & \scalea{0.302} & \scalea{0.190} & \scalea{0.298} & \scalea{0.212} & \scalea{0.292} & \scalea{0.209} & \scalea{0.301} & \scalea{0.211} & \scalea{0.326} & \scalea{0.243} & \scalea{0.352} & \scalea{0.269} & \scalea{0.363} \\
    & \scalea{720} & \bst{\scalea{0.203}} & \bst{\scalea{0.292}} & \scalea{0.235} & \scalea{0.319} & \subbst{\scalea{0.214}} & \subbst{\scalea{0.304}} & \scalea{0.234} & \scalea{0.322} & \scalea{0.294} & \scalea{0.366} & \scalea{0.221} & \scalea{0.329} & \scalea{0.254} & \scalea{0.325} & \scalea{0.245} & \scalea{0.334} & \scalea{0.253} & \scalea{0.361} & \scalea{0.295} & \scalea{0.388} & \scalea{0.277} & \scalea{0.365} \\
    \cmidrule(lr){2-24}
    & \scalea{Avg} & \bst{\scalea{0.170}} & \bst{\scalea{0.260}} & \scalea{0.191} & \scalea{0.284} & \subbst{\scalea{0.179}} & \subbst{\scalea{0.270}} & \scalea{0.199} & \scalea{0.288} & \scalea{0.212} & \scalea{0.306} & \scalea{0.192} & \scalea{0.302} & \scalea{0.215} & \scalea{0.292} & \scalea{0.212} & \scalea{0.301} & \scalea{0.214} & \scalea{0.328} & \scalea{0.249} & \scalea{0.354} & \scalea{0.265} & \scalea{0.358} \\
    \midrule

    % Traffic
    \multirow{5}{*}{{\rotatebox{90}{\scalebox{0.95}{Traffic}}}} 
    & \scalea{96} & \bst{\scalea{0.393}} & \bst{\scalea{0.265}} & \scalea{0.461} & \scalea{0.327} & \subbst{\scalea{0.397}} & \subbst{\scalea{0.271}} & \scalea{0.531} & \scalea{0.323} & \scalea{0.590} & \scalea{0.316} & \scalea{0.498} & \scalea{0.298} & \scalea{0.646} & \scalea{0.386} & \scalea{0.649} & \scalea{0.397} & \scalea{0.588} & \scalea{0.367} & \scalea{0.575} & \scalea{0.356} & \scalea{0.689} & \scalea{0.396} \\
    & \scalea{192} & \bst{\scalea{0.410}} & \bst{\scalea{0.275}} & \scalea{0.470} & \scalea{0.326} & \subbst{\scalea{0.416}} & \subbst{\scalea{0.279}} & \scalea{0.519} & \scalea{0.321} & \scalea{0.624} & \scalea{0.336} & \scalea{0.521} & \scalea{0.309} & \scalea{0.599} & \scalea{0.362} & \scalea{0.598} & \scalea{0.371} & \scalea{0.613} & \scalea{0.377} & \scalea{0.647} & \scalea{0.394} & \scalea{0.710} & \scalea{0.388} \\
    & \scalea{336} & \bst{\scalea{0.421}} & \bst{\scalea{0.280}} & \scalea{0.492} & \scalea{0.338} & \subbst{\scalea{0.429}} & \subbst{\scalea{0.286}} & \scalea{0.529} & \scalea{0.327} & \scalea{0.641} & \scalea{0.345} & \scalea{0.529} & \scalea{0.314} & \scalea{0.606} & \scalea{0.363} & \scalea{0.605} & \scalea{0.373} & \scalea{0.640} & \scalea{0.398} & \scalea{0.694} & \scalea{0.446} & \scalea{0.687} & \scalea{0.366} \\
    & \scalea{720} & \bst{\scalea{0.451}} & \bst{\scalea{0.298}} & \scalea{0.521} & \scalea{0.353} & \subbst{\scalea{0.462}} & \subbst{\scalea{0.303}} & \scalea{0.573} & \scalea{0.346} & \scalea{0.670} & \scalea{0.356} & \scalea{0.567} & \scalea{0.326} & \scalea{0.643} & \scalea{0.383} & \scalea{0.646} & \scalea{0.395} & \scalea{0.718} & \scalea{0.450} & \scalea{0.731} & \scalea{0.468} & \scalea{0.681} & \scalea{0.366} \\
    \cmidrule(lr){2-24}
    & \scalea{Avg} & \bst{\scalea{0.419}} & \bst{\scalea{0.280}} & \scalea{0.486} & \scalea{0.336} & \subbst{\scalea{0.426}} & \subbst{\scalea{0.285}} & \scalea{0.538} & \scalea{0.330} & \scalea{0.631} & \scalea{0.338} & \scalea{0.529} & \scalea{0.312} & \scalea{0.624} & \scalea{0.373} & \scalea{0.625} & \scalea{0.384} & \scalea{0.640} & \scalea{0.398} & \scalea{0.662} & \scalea{0.416} & \scalea{0.692} & \scalea{0.379} \\
    \midrule

    % Weather
    \multirow{5}{*}{{\rotatebox{90}{\scalebox{0.95}{Weather}}}}
    & \scalea{96} & \bst{\scalea{0.169}} & \subbst{\scalea{0.219}} & \scalea{0.180} & \scalea{0.220} & \subbst{\scalea{0.171}} & \bst{\scalea{0.210}} & \scalea{0.174} & \scalea{0.228} & \scalea{0.183} & \scalea{0.229} & \scalea{0.179} & \scalea{0.244} & \scalea{0.192} & \scalea{0.232} & \scalea{0.194} & \scalea{0.253} & \scalea{0.235} & \scalea{0.310} & \scalea{0.233} & \scalea{0.306} & \scalea{0.423} & \scalea{0.448} \\
    & \scalea{192} & \bst{\scalea{0.210}} & \bst{\scalea{0.258}} & \scalea{0.222} & \subbst{\scalea{0.258}} & \scalea{0.246} & \scalea{0.278} & \subbst{\scalea{0.213}} & \scalea{0.266} & \scalea{0.242} & \scalea{0.276} & \scalea{0.242} & \scalea{0.310} & \scalea{0.240} & \scalea{0.270} & \scalea{0.238} & \scalea{0.296} & \scalea{0.295} & \scalea{0.353} & \scalea{0.286} & \scalea{0.347} & \scalea{0.664} & \scalea{0.585} \\
    & \scalea{336} & \bst{\scalea{0.259}} & \bst{\scalea{0.297}} & \scalea{0.283} & \subbst{\scalea{0.301}} & \scalea{0.296} & \scalea{0.313} & \subbst{\scalea{0.270}} & \scalea{0.316} & \scalea{0.293} & \scalea{0.312} & \scalea{0.273} & \scalea{0.330} & \scalea{0.292} & \scalea{0.307} & \scalea{0.282} & \scalea{0.332} & \scalea{0.364} & \scalea{0.397} & \scalea{0.346} & \scalea{0.385} & \scalea{0.848} & \scalea{0.686} \\
    & \scalea{720} & \bst{\scalea{0.327}} & \subbst{\scalea{0.349}} & \scalea{0.358} & \bst{\scalea{0.348}} & \scalea{0.362} & \scalea{0.353} & \subbst{\scalea{0.337}} & \scalea{0.362} & \scalea{0.366} & \scalea{0.361} & \scalea{0.360} & \scalea{0.399} & \scalea{0.364} & \scalea{0.353} & \scalea{0.347} & \scalea{0.385} & \scalea{0.411} & \scalea{0.429} & \scalea{0.412} & \scalea{0.420} & \scalea{0.861} & \scalea{0.685} \\
    \cmidrule(lr){2-24}
    & \scalea{Avg} & \bst{\scalea{0.241}} & \bst{\scalea{0.280}} & \scalea{0.261} & \subbst{\scalea{0.282}} & \scalea{0.269} & \scalea{0.289} & \subbst{\scalea{0.249}} & \scalea{0.293} & \scalea{0.271} & \scalea{0.295} & \scalea{0.264} & \scalea{0.321} & \scalea{0.272} & \scalea{0.291} & \scalea{0.265} & \scalea{0.317} & \scalea{0.326} & \scalea{0.372} & \scalea{0.319} & \scalea{0.365} & \scalea{0.699} & \scalea{0.601} \\
    \midrule

    % PEMS03
    \multirow{5}{*}{{\rotatebox{90}{\scalebox{0.95}{PEMS03}}}}
    & \scalea{12} & \bst{\scalea{0.070}} & \bst{\scalea{0.176}} & \scalea{0.081} & \scalea{0.191} & \subbst{\scalea{0.072}} & \subbst{\scalea{0.179}} & \scalea{0.085} & \scalea{0.198} & \scalea{0.094} & \scalea{0.201} & \scalea{0.096} & \scalea{0.217} & \scalea{0.117} & \scalea{0.226} & \scalea{0.105} & \scalea{0.220} & \scalea{0.108} & \scalea{0.229} & \scalea{0.233} & \scalea{0.366} & \scalea{0.106} & \scalea{0.206} \\
    & \scalea{24} & \bst{\scalea{0.087}} & \bst{\scalea{0.198}} & \scalea{0.121} & \scalea{0.240} & \scalea{0.104} & \scalea{0.217} & \scalea{0.129} & \scalea{0.244} & \scalea{0.116} & \scalea{0.221} & \subbst{\scalea{0.095}} & \subbst{\scalea{0.210}} & \scalea{0.233} & \scalea{0.322} & \scalea{0.183} & \scalea{0.297} & \scalea{0.131} & \scalea{0.255} & \scalea{0.405} & \scalea{0.485} & \scalea{0.117} & \scalea{0.221} \\
    & \scalea{36} & \bst{\scalea{0.105}} & \bst{\scalea{0.219}} & \scalea{0.180} & \scalea{0.292} & \scalea{0.137} & \scalea{0.251} & \scalea{0.173} & \scalea{0.286} & \scalea{0.134} & \scalea{0.237} & \subbst{\scalea{0.107}} & \subbst{\scalea{0.223}} & \scalea{0.379} & \scalea{0.418} & \scalea{0.258} & \scalea{0.361} & \scalea{0.159} & \scalea{0.285} & \scalea{0.327} & \scalea{0.415} & \scalea{0.127} & \scalea{0.233} \\
    & \scalea{48} & \bst{\scalea{0.124}} & \bst{\scalea{0.238}} & \scalea{0.201} & \scalea{0.316} & \scalea{0.174} & \scalea{0.285} & \scalea{0.207} & \scalea{0.315} & \scalea{0.161} & \scalea{0.262} & \subbst{\scalea{0.125}} & \subbst{\scalea{0.242}} & \scalea{0.535} & \scalea{0.516} & \scalea{0.319} & \scalea{0.410} & \scalea{0.209} & \scalea{0.331} & \scalea{0.679} & \scalea{0.634} & \scalea{0.139} & \scalea{0.245} \\
    \cmidrule(lr){2-24}
    & \scalea{Avg} & \bst{\scalea{0.097}} & \bst{\scalea{0.208}} & \scalea{0.146} & \scalea{0.260} & \scalea{0.122} & \scalea{0.233} & \scalea{0.149} & \scalea{0.261} & \scalea{0.126} & \scalea{0.230} & \subbst{\scalea{0.106}} & \subbst{\scalea{0.223}} & \scalea{0.316} & \scalea{0.370} & \scalea{0.216} & \scalea{0.322} & \scalea{0.152} & \scalea{0.275} & \scalea{0.411} & \scalea{0.475} & \scalea{0.122} & \scalea{0.226} \\
    \midrule

    % PEMS08
    \multirow{5}{*}{{\rotatebox{90}{\scalebox{0.95}{PEMS08}}}}
    & \scalea{12} & \bst{\scalea{0.081}} & \bst{\scalea{0.183}} & \scalea{0.091} & \scalea{0.199} & \subbst{\scalea{0.084}} & \subbst{\scalea{0.187}} & \scalea{0.096} & \scalea{0.205} & \scalea{0.111} & \scalea{0.208} & \scalea{0.161} & \scalea{0.274} & \scalea{0.121} & \scalea{0.233} & \scalea{0.113} & \scalea{0.225} & \scalea{0.163} & \scalea{0.258} & \scalea{0.232} & \scalea{0.334} & \scalea{0.204} & \scalea{0.232} \\
    & \scalea{24} & \bst{\scalea{0.117}} & \bst{\scalea{0.218}} & \scalea{0.138} & \scalea{0.245} & \subbst{\scalea{0.123}} & \subbst{\scalea{0.227}} & \scalea{0.151} & \scalea{0.258} & \scalea{0.139} & \scalea{0.232} & \scalea{0.127} & \scalea{0.237} & \scalea{0.232} & \scalea{0.325} & \scalea{0.199} & \scalea{0.302} & \scalea{0.197} & \scalea{0.288} & \scalea{0.545} & \scalea{0.550} & \scalea{0.232} & \scalea{0.251} \\
    & \scalea{36} & \bst{\scalea{0.157}} & \bst{\scalea{0.253}} & \scalea{0.199} & \scalea{0.303} & \scalea{0.170} & \scalea{0.268} & \scalea{0.203} & \scalea{0.303} & \subbst{\scalea{0.168}} & \subbst{\scalea{0.260}} & \scalea{0.148} & \scalea{0.252} & \scalea{0.376} & \scalea{0.427} & \scalea{0.295} & \scalea{0.371} & \scalea{0.241} & \scalea{0.326} & \scalea{0.379} & \scalea{0.436} & \scalea{0.246} & \scalea{0.263} \\
    & \scalea{48} & \subbst{\scalea{0.207}} & \subbst{\scalea{0.294}} & \scalea{0.255} & \scalea{0.338} & \scalea{0.218} & \scalea{0.306} & \scalea{0.247} & \scalea{0.334} & \bst{\scalea{0.189}} & \bst{\scalea{0.272}} & \scalea{0.175} & \scalea{0.270} & \scalea{0.543} & \scalea{0.527} & \scalea{0.389} & \scalea{0.429} & \scalea{0.302} & \scalea{0.375} & \scalea{0.531} & \scalea{0.502} & \scalea{0.278} & \scalea{0.297} \\
    \cmidrule(lr){2-24}
    & \scalea{Avg} & \bst{\scalea{0.141}} & \bst{\scalea{0.237}} & \scalea{0.171} & \scalea{0.271} & \subbst{\scalea{0.149}} & \scalea{0.247} & \scalea{0.174} & \scalea{0.275} & \scalea{0.152} & \subbst{\scalea{0.243}} & \scalea{0.153} & \scalea{0.258} & \scalea{0.318} & \scalea{0.378} & \scalea{0.249} & \scalea{0.332} & \scalea{0.226} & \scalea{0.312} & \scalea{0.422} & \scalea{0.456} & \scalea{0.240} & \scalea{0.261} \\
    \midrule

    \multicolumn{2}{c|}{\scalea{{$1^{\text{st}}$ Count}}} & \bst{\scalea{43}} & \bst{\scalea{42}} & \scalea{0} & \subbst{\scalea{1}} & \scalea{0} & \subbst{\scalea{1}} & \scalea{0} & \scalea{0} & \subbst{\scalea{1}} & \subbst{\scalea{1}} & \subbst{\scalea{1}} & \scalea{0} & \scalea{0} & \scalea{0} & \scalea{0} & \scalea{0} & \scalea{0} & \scalea{0} & \scalea{0} & \scalea{0} & \scalea{0} & \scalea{0} \\
    \bottomrule
  \end{tabular}
   \begin{tablenotes}
    \item  \scriptsize \textit{Note}:  We fix the input length as 96 following~\citep{itransformer}. \bst{Bold} typeface highlights the top performance for each metric, while \subbst{underlined} text denotes the second-best results. \emph{Avg} indicates the results averaged over forecasting lengths: T=96, 192, 336 and 720.
\end{tablenotes}
\end{threeparttable}
\end{table}

\begin{table}
\caption{The comprehensive results on the short-term forecasting task. }\label{tab:short_app}
\renewcommand{\arraystretch}{1} \setlength{\tabcolsep}{1.3pt} \scriptsize
\centering
\renewcommand{\multirowsetup}{\centering}
\begin{threeparttable}
\begin{tabular}{l|ccc|ccc|ccc|ccc|ccc|ccc|ccc}
    \toprule
    \multicolumn{1}{l}{\multirow{2}{*}{\rotatebox{0}{\scaleb{Models}}}} & 
    \multicolumn{3}{c}{\rotatebox{0}{\scaleb{\textbf{Time-o1}}}} &
    \multicolumn{3}{c}{\rotatebox{0}{\scaleb{Fredformer}}} &
    \multicolumn{3}{c}{\rotatebox{0}{\scaleb{{iTransformer}}}} &
    \multicolumn{3}{c}{\rotatebox{0}{\scaleb{{FreTS}}}} &
    \multicolumn{3}{c}{\rotatebox{0}{\scaleb{MICN}}} &
    \multicolumn{3}{c}{\rotatebox{0}{\scaleb{DLinear}}}  &
    \multicolumn{3}{c}{\rotatebox{0}{\scaleb{Fedformer}}} \\
    \multicolumn{1}{c}{} &
    \multicolumn{3}{c}{\scaleb{\textbf{(Ours)}}} & 
    \multicolumn{3}{c}{\scaleb{(2024)}} & 
    \multicolumn{3}{c}{\scaleb{(2024)}} & 
    \multicolumn{3}{c}{\scaleb{(2023)}} & 
    \multicolumn{3}{c}{\scaleb{(2023)}} & 
    \multicolumn{3}{c}{\scaleb{(2023)}} & 
    \multicolumn{3}{c}{\scaleb{(2023)}} \\
    \cmidrule(lr){2-4} \cmidrule(lr){5-7}\cmidrule(lr){8-10} \cmidrule(lr){11-13}\cmidrule(lr){14-16}\cmidrule(lr){17-19} \cmidrule(lr){20-22}
    \rotatebox{0}{\scaleb{Metric}}  & \scalea{SMAPE} & \scalea{MASE}  & \scalea{OWA}  & \scalea{SMAPE} & \scalea{MASE}  & \scalea{OWA}  & \scalea{SMAPE} & \scalea{MASE}  & \scalea{OWA}  & \scalea{SMAPE} & \scalea{MASE}  & \scalea{OWA}  & \scalea{SMAPE} & \scalea{MASE}  & \scalea{OWA}  & \scalea{SMAPE} & \scalea{MASE}  & \scalea{OWA}  & \scalea{SMAPE} & \scalea{MASE}  & \scalea{OWA}   \\
    \toprule
    
    \scaleb{Yearly} & \bst{\scalea{13.485}} & \bst{\scalea{3.010}} & \bst{\scalea{0.791}} & \subbst{\scalea{13.509}} & \subbst{\scalea{3.028}} & \subbst{\scalea{0.794}} & \scalea{13.797} & \scalea{3.143} & \scalea{0.818} & \scalea{13.576} & \scalea{3.068} & \scalea{0.801} & \scalea{14.594} & \scalea{3.392} & \scalea{0.873} & \scalea{14.307} & \scalea{3.094} & \scalea{0.827} & \scalea{13.648} & \scalea{3.089} & \scalea{0.806}  \\
    
    \scaleb{Quarterly} & \bst{\scalea{10.105}} & \bst{\scalea{1.180}} & \bst{\scalea{0.889}} & \subbst{\scalea{10.140}} & \subbst{\scalea{1.185}} & \subbst{\scalea{0.893}} & \scalea{10.503} & \scalea{1.248} & \scalea{0.932} & \scalea{10.361} & \scalea{1.223} & \scalea{0.916} & \scalea{11.417} & \scalea{1.385} & \scalea{1.023} & \scalea{10.500} & \scalea{1.237} & \scalea{0.928} & \scalea{10.612} & \scalea{1.246} & \scalea{0.936} \\
    
    \scaleb{Monthly} & \bst{\scalea{12.649}} & \bst{\scalea{0.930}} & \bst{\scalea{0.875}} & \subbst{\scalea{12.696}} & \subbst{\scalea{0.931}} & \subbst{\scalea{0.878}} & \scalea{13.227} & \scalea{1.013} & \scalea{0.935} & \scalea{13.088} & \scalea{0.990} & \scalea{0.919} & \scalea{13.834} & \scalea{1.080} & \scalea{0.987} & \scalea{13.362} & \scalea{1.007} & \scalea{0.937} & \scalea{14.181} & \scalea{1.105} & \scalea{1.011} \\
    
    \scaleb{Others} & \scalea{4.852} & \scalea{3.274} & \scalea{1.027} & \subbst{\scalea{4.848}} & \bst{\scalea{3.230}} & \bst{\scalea{1.019}} & \scalea{\subbst{5.101}} & \scalea{\subbst{3.419}} & \scalea{\subbst{1.076}} & \scalea{5.563} & \scalea{3.71} & \scalea{1.17} & \scalea{6.137} & \scalea{4.201} & \scalea{1.308} & \scalea{5.12} & \scalea{3.649} & \scalea{1.114} & \bst{\scalea{4.823}} & \subbst{\scalea{3.243}} & \subbst{\scalea{1.019}} \\
    
    \scaleb{Average} & \bst{\scalea{11.841}} & \bst{\scalea{1.585}} & \bst{\scalea{0.851}} & \subbst{\scalea{11.879}} & \subbst{\scalea{1.590}} & \subbst{\scalea{0.854}} & \scalea{12.298} & \scalea{1.68} & \scalea{0.893} & \scalea{12.169} & \scalea{1.66} & \scalea{0.883} & \scalea{13.044} & \scalea{1.841} & \scalea{0.962} & \scalea{12.48} & \scalea{1.674} & \scalea{0.898} & \scalea{12.734} & \scalea{1.702} & \scalea{0.914} \\
    \midrule
    
    \scalea{{$1^{\text{st}}$ Count}} & \bst{\scalea{4}} & \bst{\scalea{4}} & \bst{\scalea{4}}  & \scalea{0} & \subbst{\scalea{1}} & \subbst{\scalea{1}} & \scalea{0} & \scalea{0} & \scalea{0} & \scalea{0} & \scalea{0} & \scalea{0} & \scalea{0} & \scalea{0} & \scalea{0}   & \scalea{0} & \scalea{0} & \scalea{0}  & \subbst{\scalea{1}} & \scalea{0} & \scalea{0}\\
    
    \bottomrule    
\end{tabular}
\begin{tablenotes}
    \item  \scriptsize \textit{Note}:  \bst{Bold} typeface highlights the top performance for each metric, while \subbst{underlined} text denotes the second-best results. \emph{Avg} indicates the results averaged over forecasting lengths: yearly, quarterly, and monthly. 
\end{tablenotes}
\end{threeparttable}
\end{table}
\subsection{Long-term forecast performance}\label{sec:overall_app}

Additional results on long-term forecast performance are available in \autoref{tab:longterm_app}.

\subsection{Short-term forecast performance}
Additional results on short-term forecast performance are available in \autoref{tab:short_app}, where Fredformer~\cite{fredformer} serves as the forecast model.
\subsection{Showcases}

Additional results on showcases are available in \autoref{fig:pred_app_ettm2_336} and \autoref{fig:pred_app_ecl_192}.

\begin{figure*}
\begin{center}
\subfigure[Fredformer with ETTm2 case 1]{
    \includegraphics[width=0.24\linewidth]{fig/pred/Fred_ETTm2_336_2994_wo_PDF.pdf}
    \includegraphics[width=0.24\linewidth]{fig/pred/Fred_ETTm2_336_2994_w_PDF.pdf}
    \includegraphics[width=0.24\linewidth]{fig/pred/Fred_ETTm2_336_2994_trans_wo_PDF.pdf}
    \includegraphics[width=0.24\linewidth]{fig/pred/Fred_ETTm2_336_2994_trans_w_PDF.pdf}
}

\subfigure[iTransformer with ETTm2 case 1]{
    \includegraphics[width=0.24\linewidth]{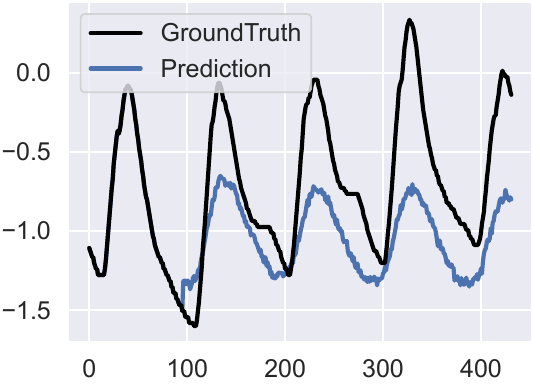}
    \includegraphics[width=0.24\linewidth]{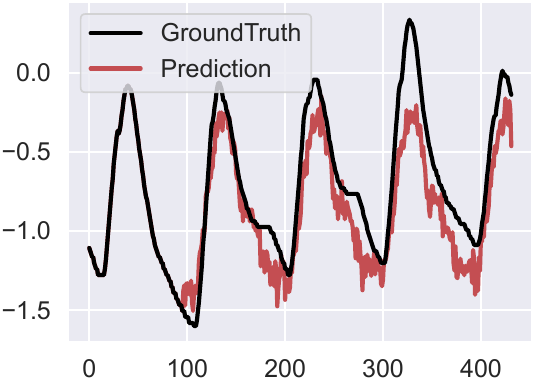}
    \includegraphics[width=0.24\linewidth]{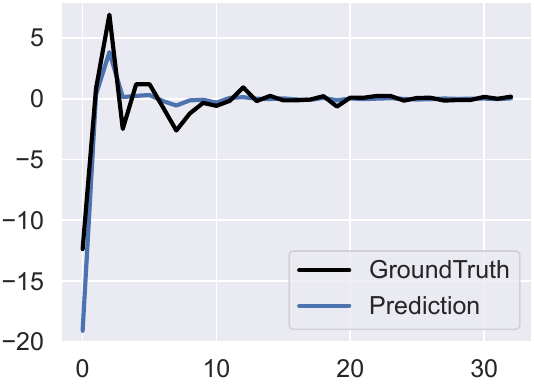}
    \includegraphics[width=0.24\linewidth]{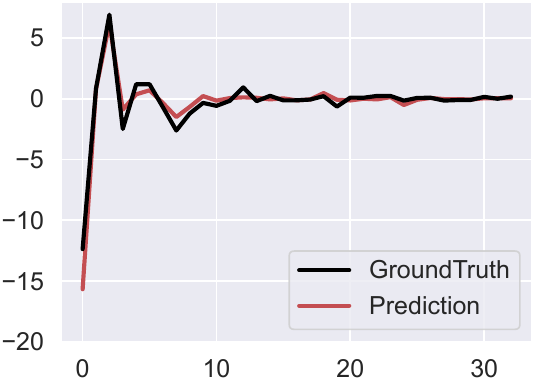}
}

\subfigure[Fredformer with ETTm2 case 2]{
    \includegraphics[width=0.24\linewidth]{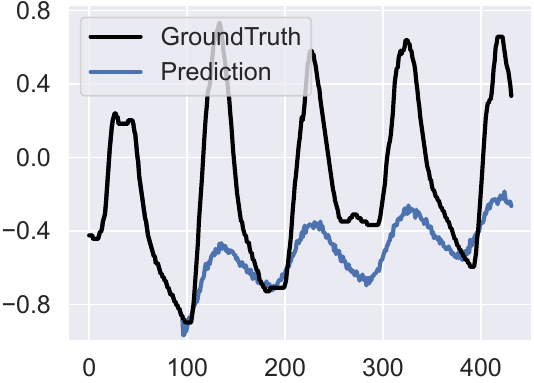}
    \includegraphics[width=0.24\linewidth]{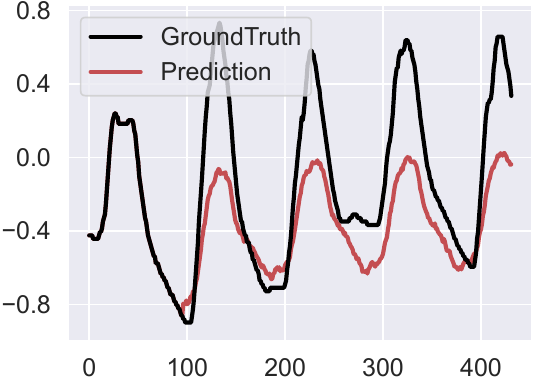}
    \includegraphics[width=0.24\linewidth]{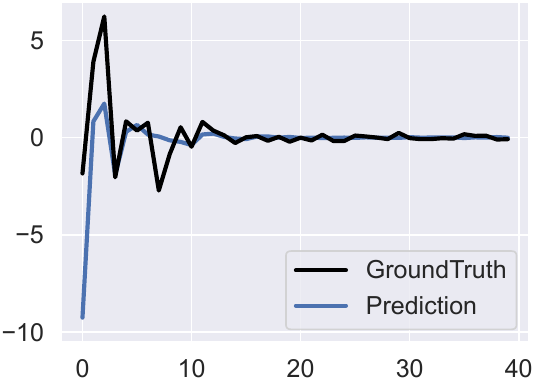}
    \includegraphics[width=0.24\linewidth]{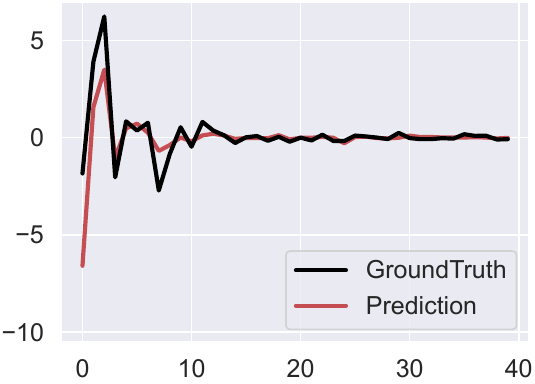}
}

\subfigure[iTransformer with ETTm2 case 2]{
    \includegraphics[width=0.24\linewidth]{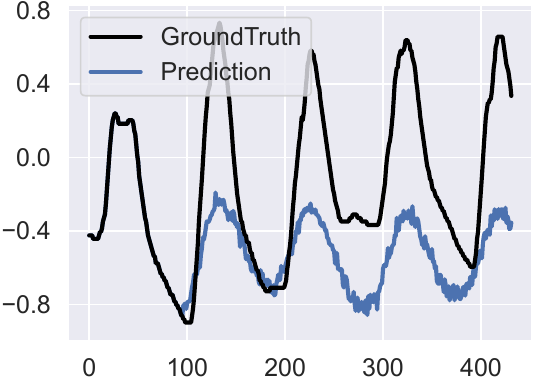}
    \includegraphics[width=0.24\linewidth]{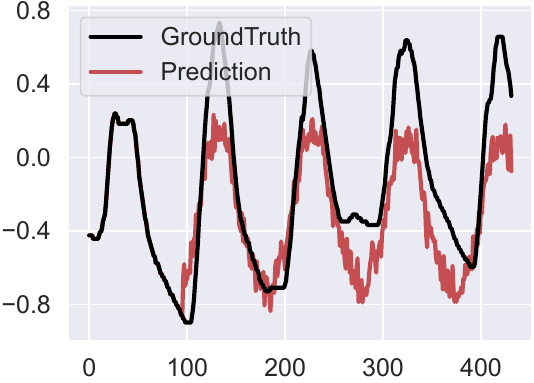}
    \includegraphics[width=0.24\linewidth]{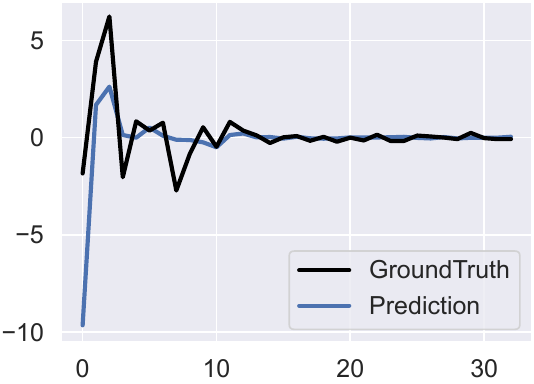}
    \includegraphics[width=0.24\linewidth]{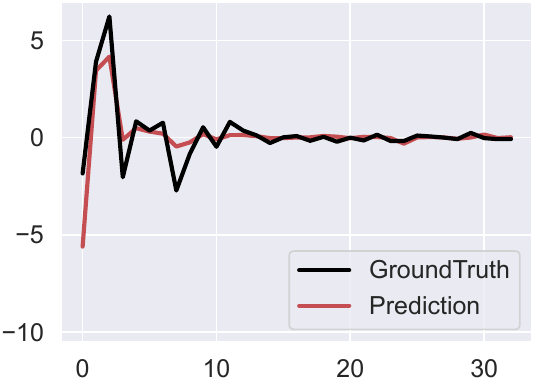}
}
\caption{The forecast sequences generated with DF and Time-o1. The forecast length is set to 336 and the experiment is conducted on ETTm2.}
\label{fig:pred_app_ettm2_336}
\end{center}
\end{figure*}

\begin{figure*}
\begin{center}
\subfigure[Fredformer with ECL case 1]{
    \includegraphics[width=0.24\linewidth]{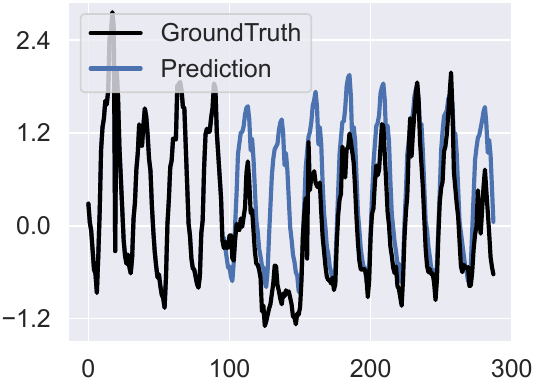}
    \includegraphics[width=0.24\linewidth]{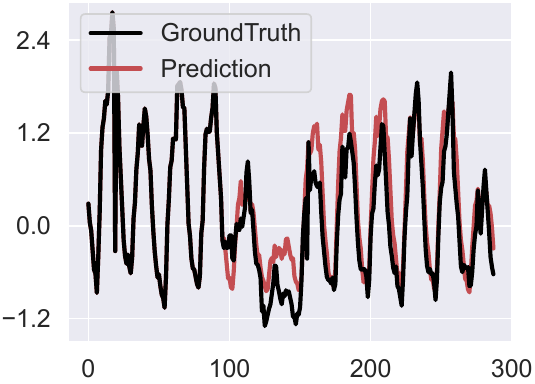}
    \includegraphics[width=0.24\linewidth]{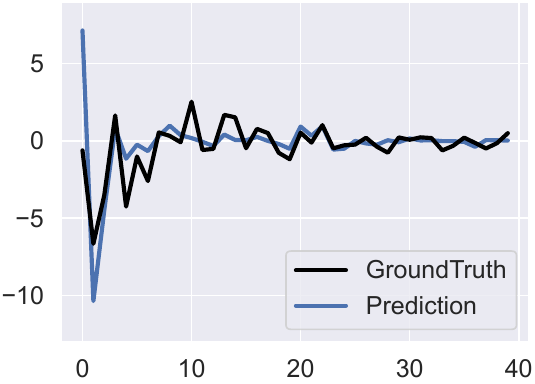}
    \includegraphics[width=0.24\linewidth]{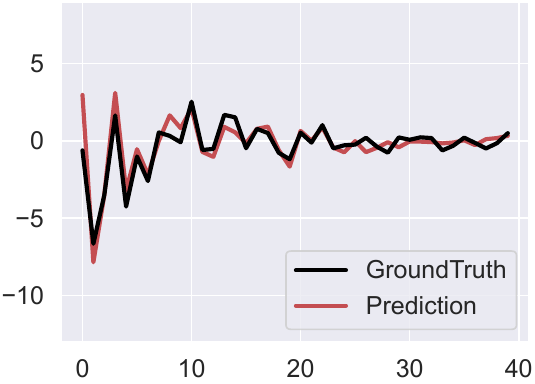}
}

\subfigure[iTransformer with ECL case 1]{
    \includegraphics[width=0.24\linewidth]{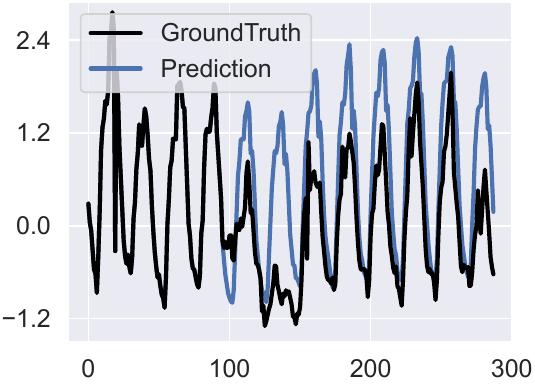}
    \includegraphics[width=0.24\linewidth]{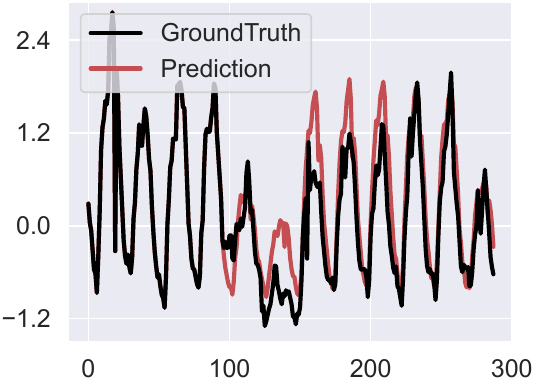}
    \includegraphics[width=0.24\linewidth]{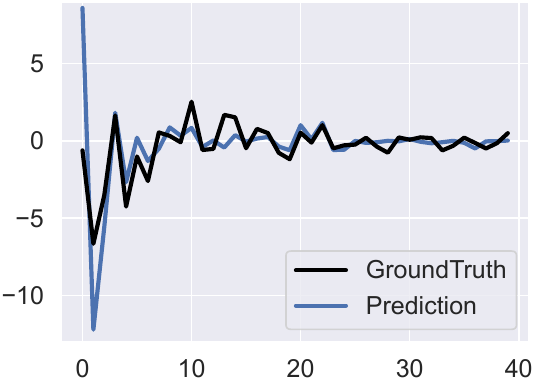}
    \includegraphics[width=0.24\linewidth]{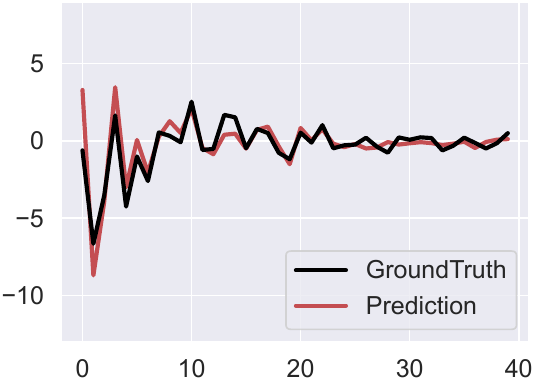}
}

\subfigure[Fredformer with ECL case 2]{
    \includegraphics[width=0.24\linewidth]{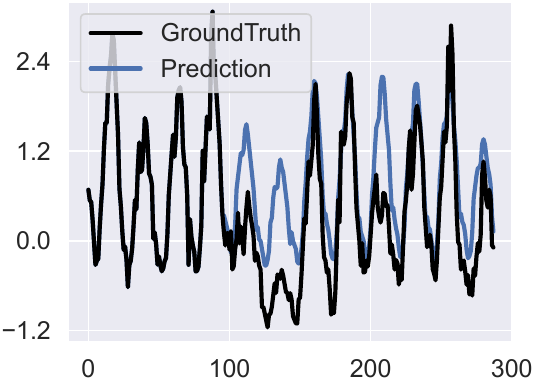}
    \includegraphics[width=0.24\linewidth]{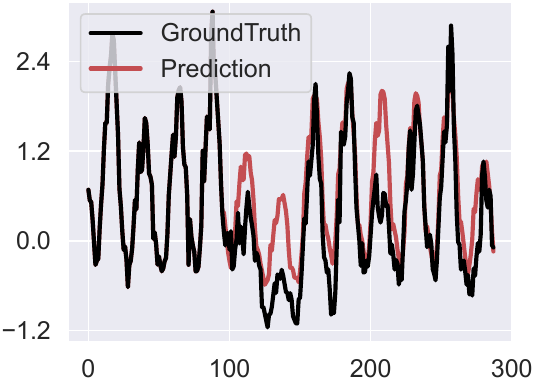}
    \includegraphics[width=0.24\linewidth]{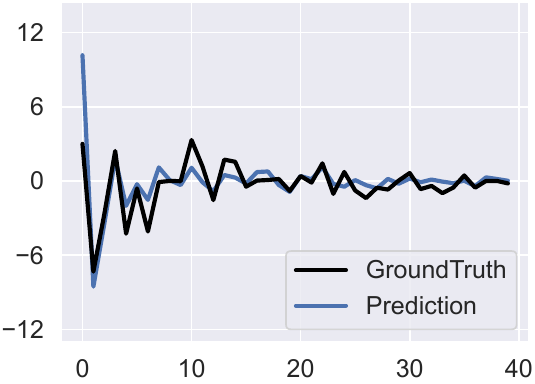}
    \includegraphics[width=0.24\linewidth]{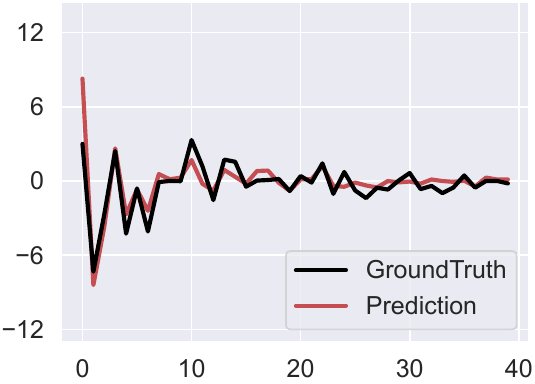}
}

\subfigure[iTransformer with ECL case 2]{
    \includegraphics[width=0.24\linewidth]{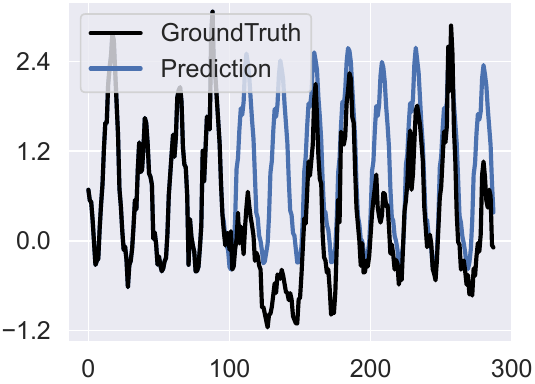}
    \includegraphics[width=0.24\linewidth]{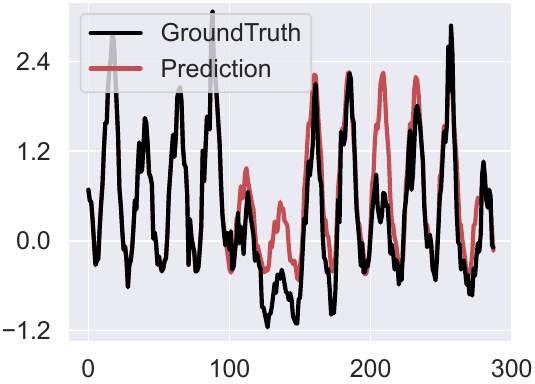}
    \includegraphics[width=0.24\linewidth]{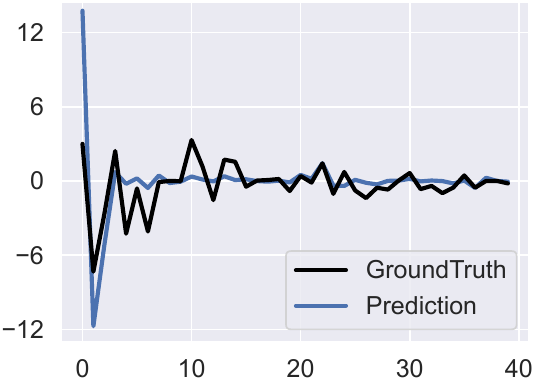}
    \includegraphics[width=0.24\linewidth]{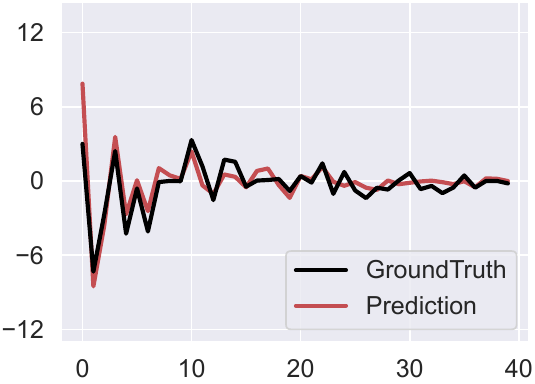}
}
\caption{The forecast sequences generated with DF and Time-o1. The forecast length is set to 192 and the experiment is conducted on ECL.}
\label{fig:pred_app_ecl_192}
\end{center}
\end{figure*}

\subsection{Generalization studies}\label{sec:generalize_app}

Additional results on varying forecast models and transformations are available in \autoref{fig:backbone_app} and \autoref{tab:trans}.

\begin{figure*}
\begin{center}
\includegraphics[width=0.245\linewidth]{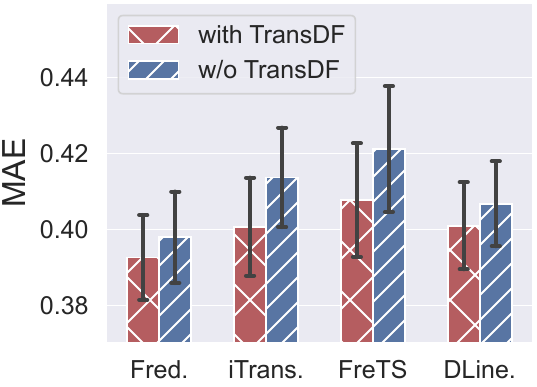}
\includegraphics[width=0.245\linewidth]{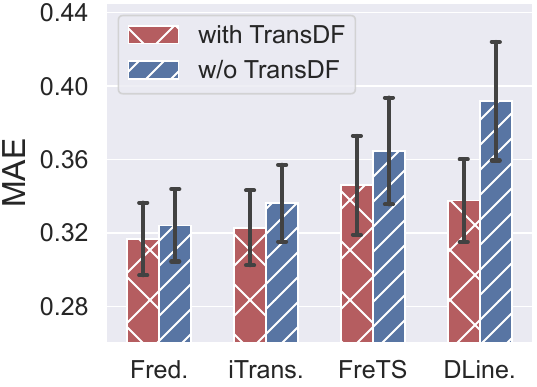}
\includegraphics[width=0.245\linewidth]{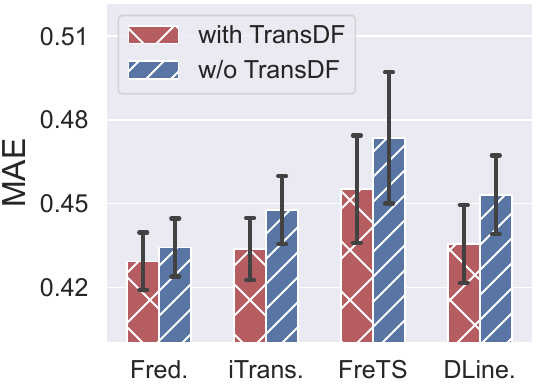}
\includegraphics[width=0.245\linewidth]{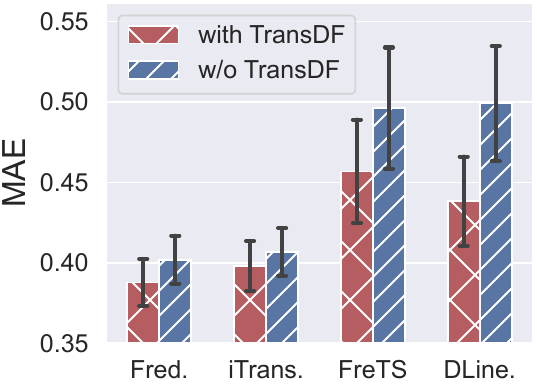}
\subfigure[ETTm1]{\includegraphics[width=0.245\linewidth]{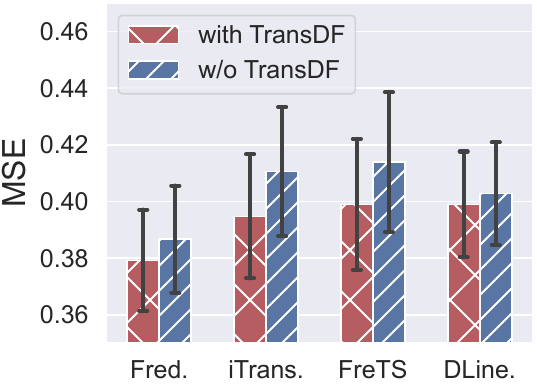}}
\subfigure[ETTm2]{\includegraphics[width=0.245\linewidth]{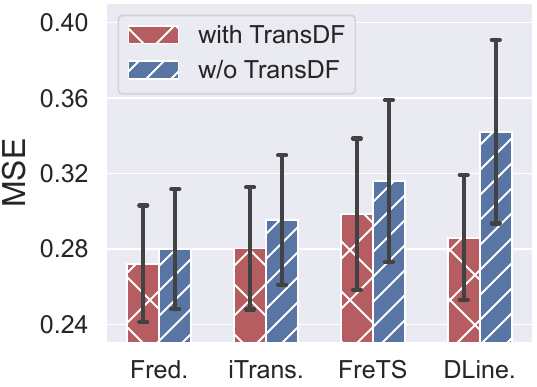}}
\subfigure[ETTh1]{\includegraphics[width=0.245\linewidth]{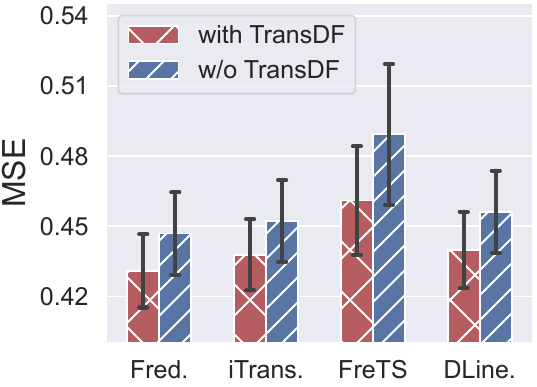}}
\subfigure[ETTh2]{\includegraphics[width=0.245\linewidth]{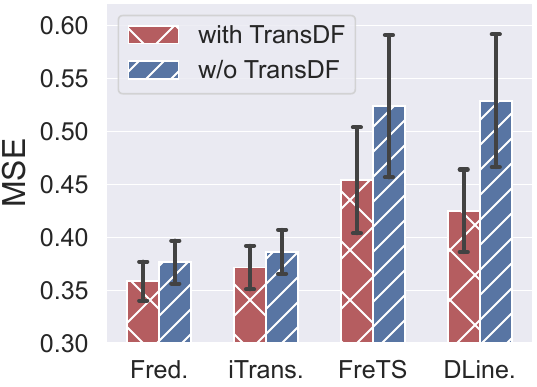}}
\caption{Performance of different forecast models with and without Time-o1. The forecast errors are averaged over forecast lengths and the error bars represent 50\% confidence intervals.}
\label{fig:backbone_app}
\end{center}
\end{figure*}

\begin{table}
\caption{Varying transformation results.}\label{tab:trans}
\renewcommand{\arraystretch}{0.85} \setlength{\tabcolsep}{7.6pt} \footnotesize
\centering
\renewcommand{\multirowsetup}{\centering}
\begin{threeparttable}
\begin{tabular}{c|c|cc|cc|cc|cc|cc}
    \toprule
    \multicolumn{2}{l}{\rotatebox{0}{\scaleb{Trans}}} & 
    \multicolumn{2}{c}{\rotatebox{0}{\scaleb{PCA}}} &
    \multicolumn{2}{c}{\rotatebox{0}{\scaleb{RPCA}}} &
    \multicolumn{2}{c}{\rotatebox{0}{\scaleb{SVD}}} &
    \multicolumn{2}{c}{\rotatebox{0}{\scaleb{FA}}} &
    \multicolumn{2}{c}{\rotatebox{0}{\scaleb{DF}}} \\
    \cmidrule(lr){3-4} \cmidrule(lr){5-6}\cmidrule(lr){7-8} \cmidrule(lr){9-10} \cmidrule(lr){11-12}
    \multicolumn{2}{l}{\rotatebox{0}{\scaleb{Metrics}}}  & \scalea{MSE} & \scalea{MAE}  & \scalea{MSE} & \scalea{MAE}  & \scalea{MSE} & \scalea{MAE}  & \scalea{MSE} & \scalea{MAE}  & \scalea{MSE} & \scalea{MAE} \\
    \toprule
    
    % ECL
    \multirow{5}{*}{{\rotatebox{90}{\scalebox{0.95}{ECL}}}} 
    & \scalea{96}  & \bst{\scalea{0.1449}} & \bst{\scalea{0.2348}} & \subbst{\scalea{0.1450}} & \subbst{\scalea{0.2349}} & \scalea{0.1450} & \scalea{0.2350} & \scalea{0.1478} & \scalea{0.2385} & \scalea{0.1500} & \scalea{0.2415} \\
    & \scalea{192} & \bst{\scalea{0.1592}} & \bst{\scalea{0.2487}} & \subbst{\scalea{0.1594}} & \subbst{\scalea{0.2487}} & \scalea{0.1595} & \scalea{0.2490} & \scalea{0.1619} & \scalea{0.2517} & \scalea{0.1681} & \scalea{0.2591} \\
    & \scalea{336} & \subbst{\scalea{0.1731}} & \subbst{\scalea{0.2645}} & \scalea{0.1732} & \scalea{0.2646} & \bst{\scalea{0.1730}} & \bst{\scalea{0.2643}} & \scalea{0.1789} & \scalea{0.2711} & \scalea{0.1823} & \scalea{0.2744} \\
    & \scalea{720} & \bst{\scalea{0.2033}} & \bst{\scalea{0.2920}} & \subbst{\scalea{0.2066}} & \subbst{\scalea{0.2960}} & \scalea{0.2214} & \scalea{0.3066} & \scalea{0.2095} & \scalea{0.2975} & \scalea{0.2145} & \scalea{0.3035} \\
    \cmidrule(lr){2-12}
    & \scalea{Avg} & \bst{\scalea{0.1701}} & \bst{\scalea{0.2600}} & \subbst{\scalea{0.1710}} & \subbst{\scalea{0.2611}} & \scalea{0.1747} & \scalea{0.2637} & \scalea{0.1745} & \scalea{0.2647} & \scalea{0.1787} & \scalea{0.2696}  \\
    \midrule
    
    % Weather
    \multirow{5}{*}{{\rotatebox{90}{\scalebox{0.95}{Weather}}}}
    & \scalea{96}  & \bst{\scalea{0.1692}} & \bst{\scalea{0.2185}} & \subbst{\scalea{0.1715}} & \subbst{\scalea{0.2199}} & \scalea{0.1723} & \scalea{0.2223} & \scalea{0.1717} & \scalea{0.2247} & \scalea{0.1737} & \scalea{0.2277} \\
    & \scalea{192} & \bst{\scalea{0.2102}} & \bst{\scalea{0.2575}} & \scalea{0.2116} & \subbst{\scalea{0.2590}} & \subbst{\scalea{0.2116}} & \scalea{0.2597} & \scalea{0.2125} & \scalea{0.2636} & \scalea{0.2128} & \scalea{0.2661} \\
    & \scalea{336} & \bst{\scalea{0.2586}} & \bst{\scalea{0.2971}} & \subbst{\scalea{0.2631}} & \subbst{\scalea{0.3072}} & \scalea{0.2676} & \scalea{0.3110} & \scalea{0.2613} & \scalea{0.2997} & \scalea{0.2705} & \scalea{0.3159} \\
    & \scalea{720} & \bst{\scalea{0.3271}} & \bst{\scalea{0.3487}} & \subbst{\scalea{0.3303}} & \subbst{\scalea{0.3581}} & \scalea{0.3394} & \scalea{0.3681} & \scalea{0.3354} & \scalea{0.3610} & \scalea{0.3372} & \scalea{0.3623} \\
    \cmidrule(lr){2-12}
    & \scalea{Avg} & \bst{\scalea{0.2413}} & \bst{\scalea{0.2805}} & \subbst{\scalea{0.2441}} & \subbst{\scalea{0.2860}} & \scalea{0.2477} & \scalea{0.2903} & \scalea{0.2452} & \scalea{0.2872} & \scalea{0.2486} & \scalea{0.2930} \\
    \bottomrule
\end{tabular}
\end{threeparttable}
\end{table}

\subsection{Hyperparameter sensitivity}\label{sec:app_sense}
Additional results on hyperparameter sensitivity are available in  \autoref{fig:sensi-alpha} for $\alpha$ and \autoref{fig:sensi-rank} for $\gamma$.

\begin{figure*}
    \subfigure[\hspace{-20pt}]{
    \includegraphics[width=0.195\linewidth]{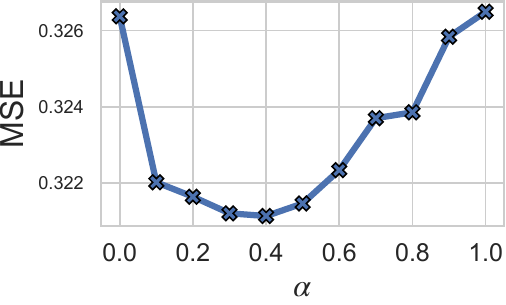}
    \includegraphics[width=0.195\linewidth]{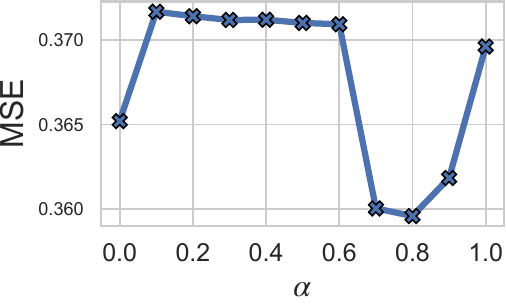}
    \includegraphics[width=0.195\linewidth]{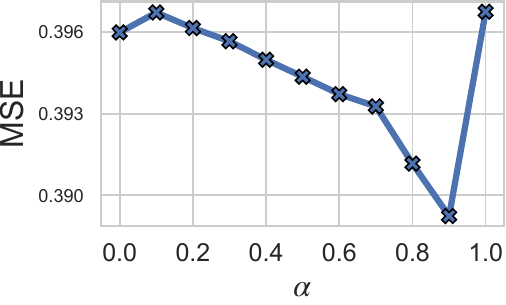}
    \includegraphics[width=0.195\linewidth]{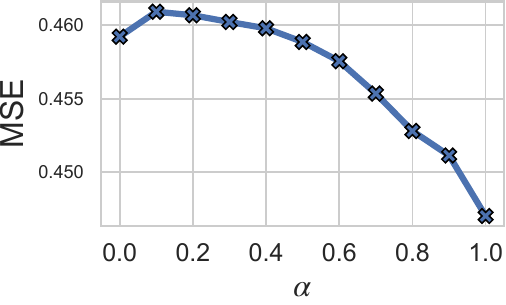}
    \includegraphics[width=0.195\linewidth]{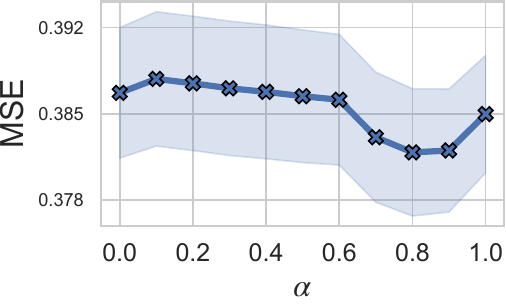}}

    \subfigure[\hspace{-20pt}]{
    \includegraphics[width=0.195\linewidth]{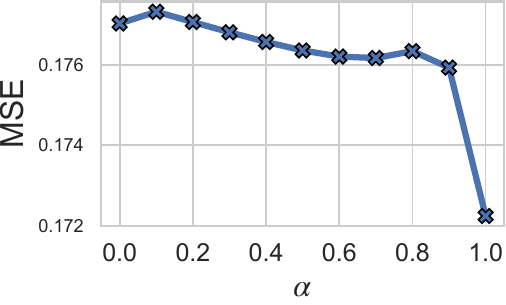}
    \includegraphics[width=0.195\linewidth]{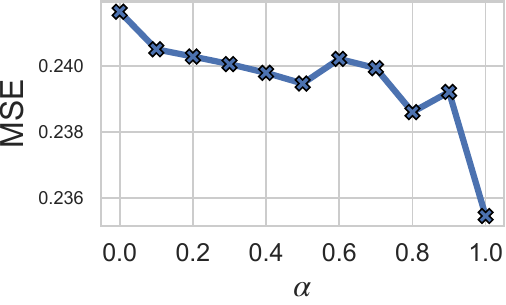}
    \includegraphics[width=0.195\linewidth]{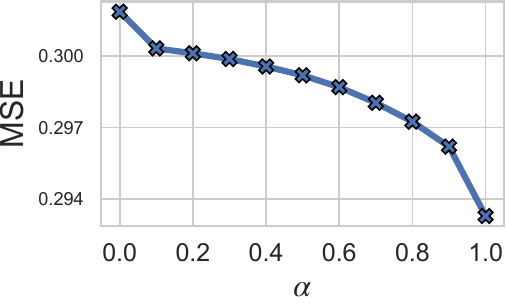}
    \includegraphics[width=0.195\linewidth]{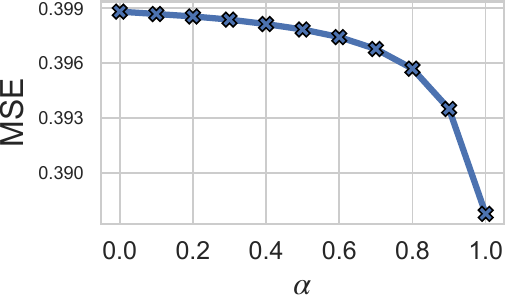}
    \includegraphics[width=0.195\linewidth]{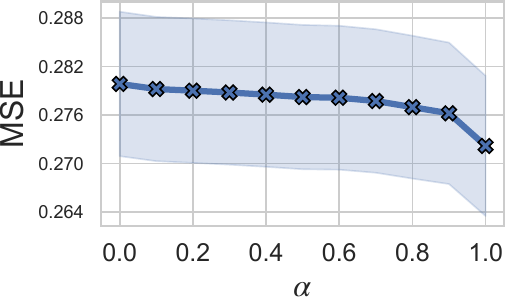}}

    \subfigure[\hspace{-20pt}]{
    \includegraphics[width=0.195\linewidth]{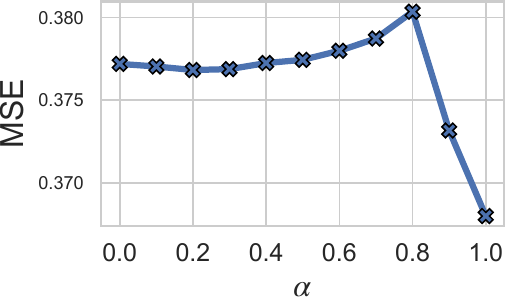}
    \includegraphics[width=0.195\linewidth]{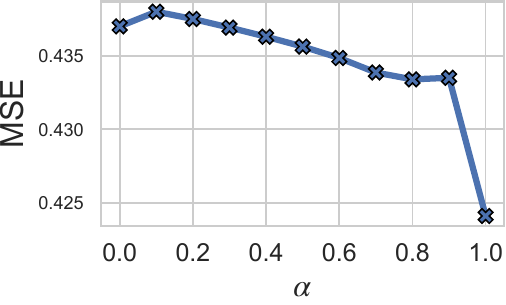}
    \includegraphics[width=0.195\linewidth]{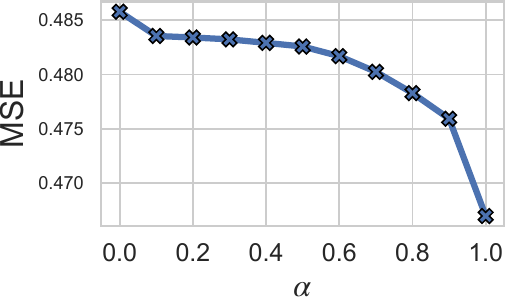}
    \includegraphics[width=0.195\linewidth]{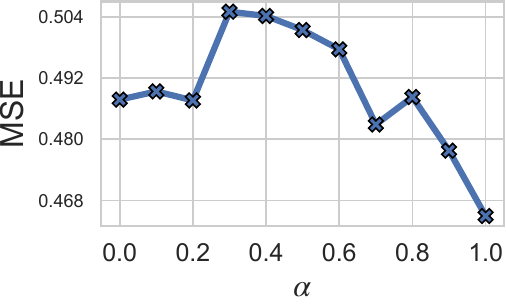}
    \includegraphics[width=0.195\linewidth]{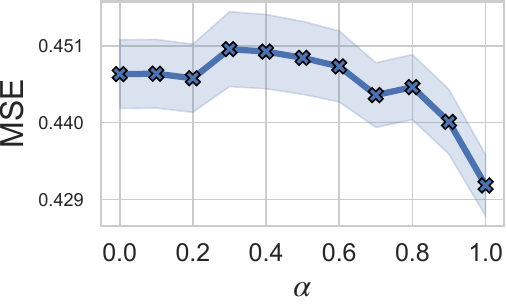}}

    \subfigure[\hspace{-20pt}]{
    \includegraphics[width=0.195\linewidth]{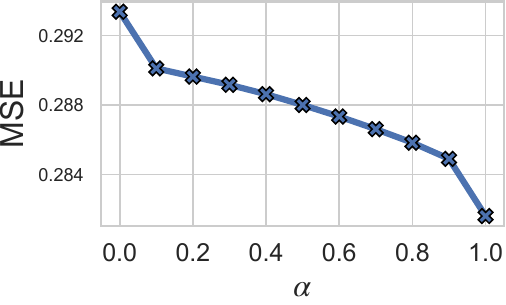}
    \includegraphics[width=0.195\linewidth]{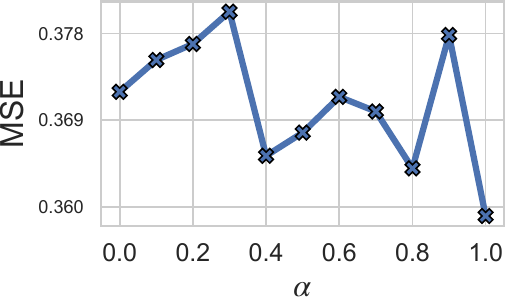}
    \includegraphics[width=0.195\linewidth]{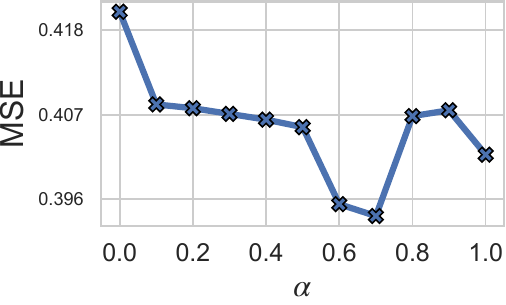}
    \includegraphics[width=0.195\linewidth]{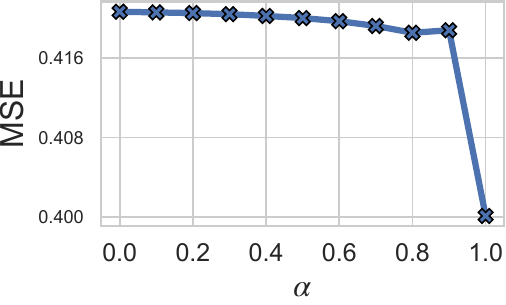}
    \includegraphics[width=0.195\linewidth]{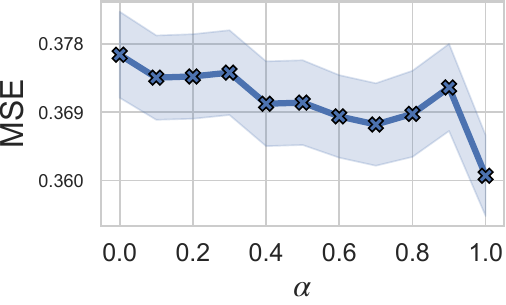}}

    \subfigure[\hspace{-20pt}]{
    \includegraphics[width=0.195\linewidth]{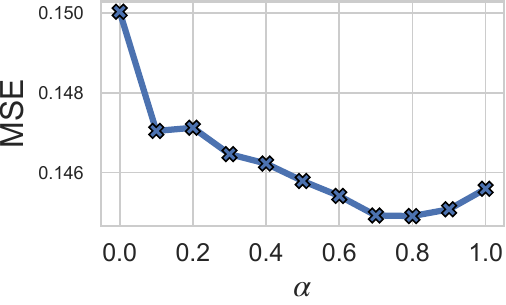}
    \includegraphics[width=0.195\linewidth]{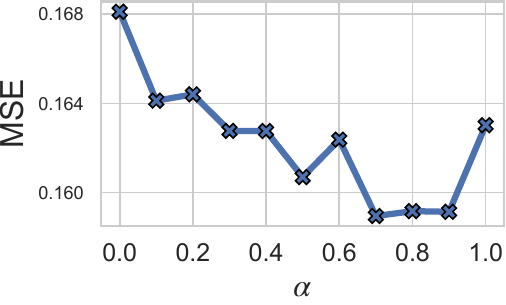}
    \includegraphics[width=0.195\linewidth]{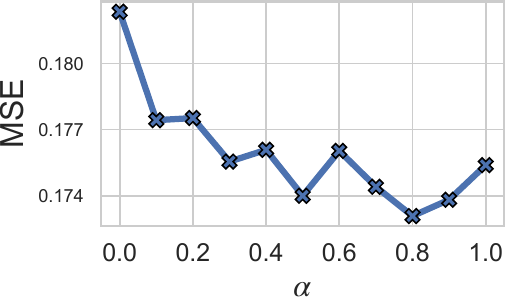}
    \includegraphics[width=0.195\linewidth]{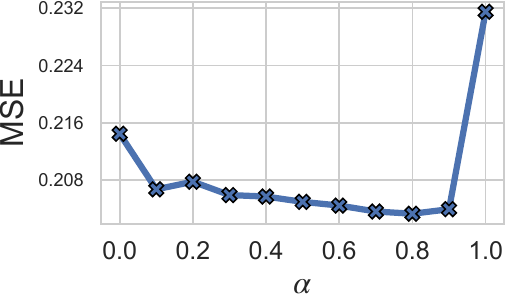}
    \includegraphics[width=0.195\linewidth]{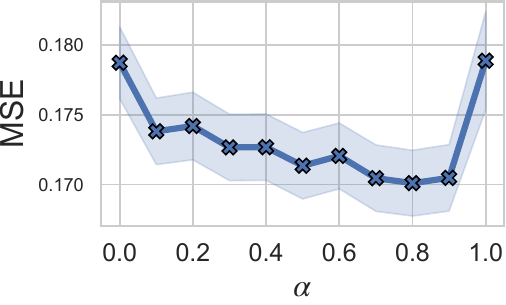}}

    \subfigure[\hspace{-20pt}]{
    \includegraphics[width=0.195\linewidth]{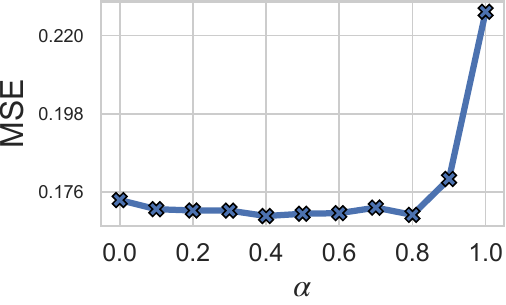}
    \includegraphics[width=0.195\linewidth]{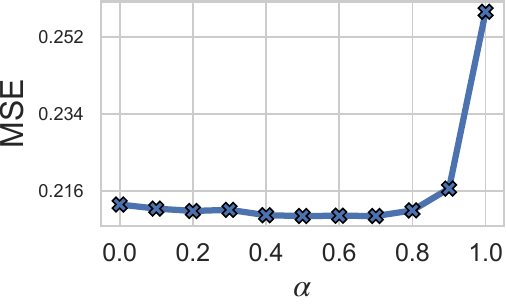}
    \includegraphics[width=0.195\linewidth]{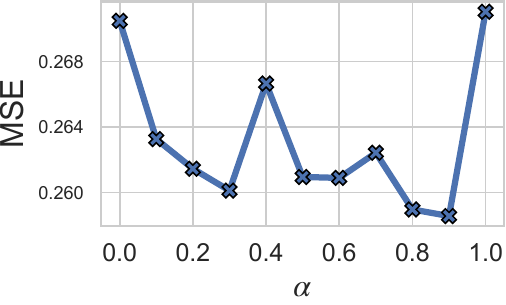}
    \includegraphics[width=0.195\linewidth]{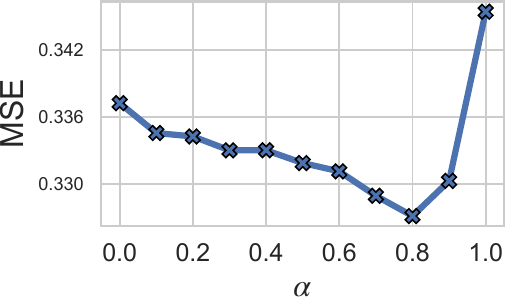}
    \includegraphics[width=0.195\linewidth]{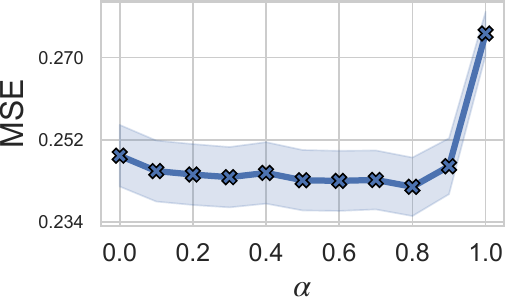}}
    \caption{Time-o1 improves Fredformer performance given a wide range of transformed loss strength $\alpha$. These experiments are conducted on ETTh1 (a), ETTh2 (b), ETTm1 (c), ETTm2 (d), ECL (e), Weather (f) datasets. Different columns correspond to different forecast lengths (from left to right: 96, 192, 336, 720, and their average with shaded areas being 15\% confidence intervals). }
\label{fig:sensi-alpha}
\end{figure*}

\begin{figure*}
    \subfigure[\hspace{-20pt}]{
    \includegraphics[width=0.195\linewidth]{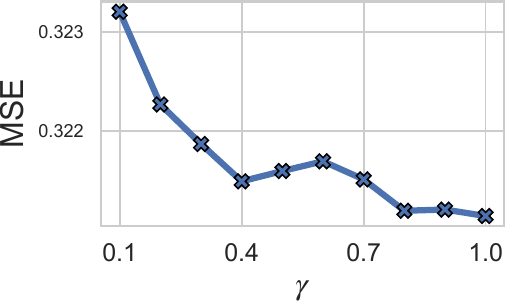}
    \includegraphics[width=0.195\linewidth]{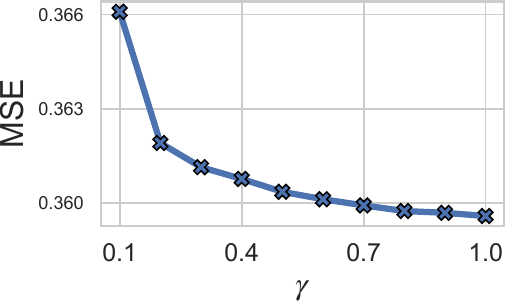}
    \includegraphics[width=0.195\linewidth]{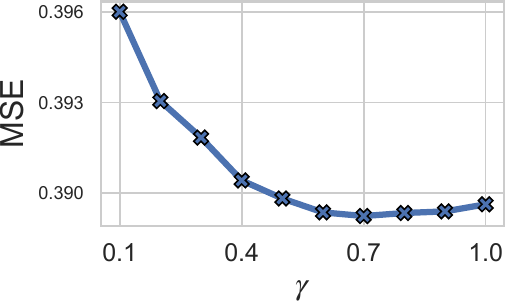}
    \includegraphics[width=0.195\linewidth]{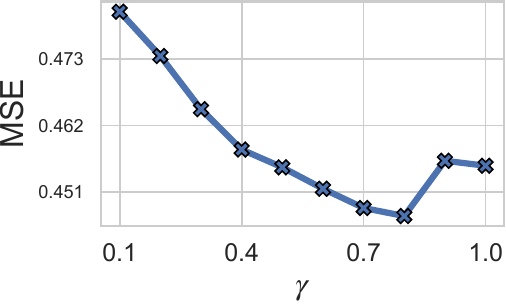}
    \includegraphics[width=0.195\linewidth]{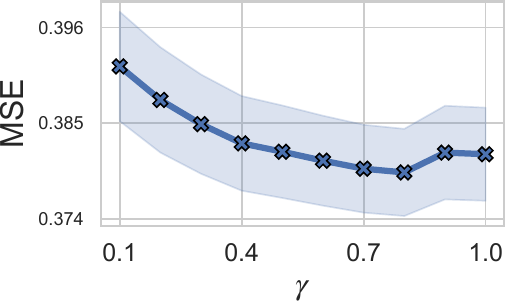}}

    \subfigure[\hspace{-20pt}]{
    \includegraphics[width=0.195\linewidth]{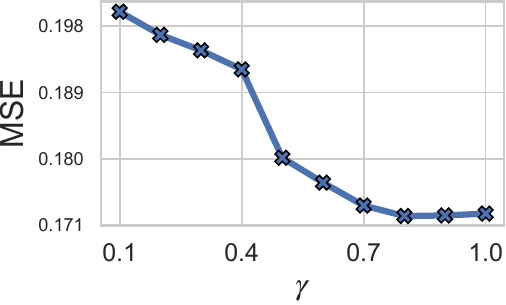}
    \includegraphics[width=0.195\linewidth]{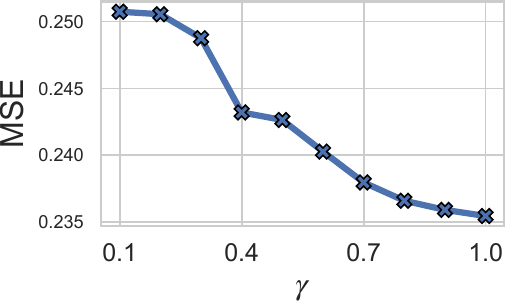}
    \includegraphics[width=0.195\linewidth]{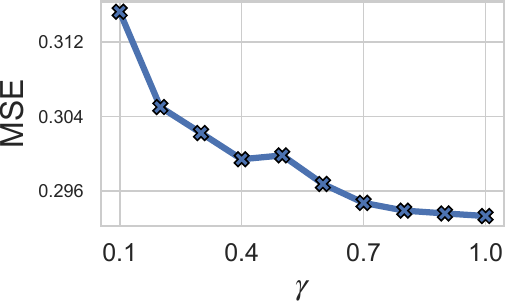}
    \includegraphics[width=0.195\linewidth]{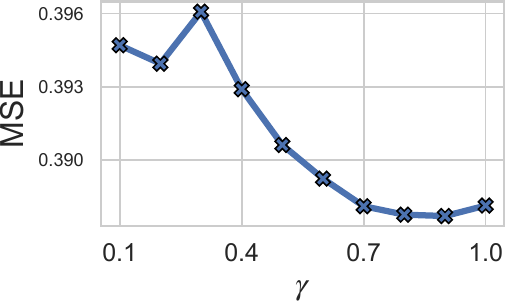}
    \includegraphics[width=0.195\linewidth]{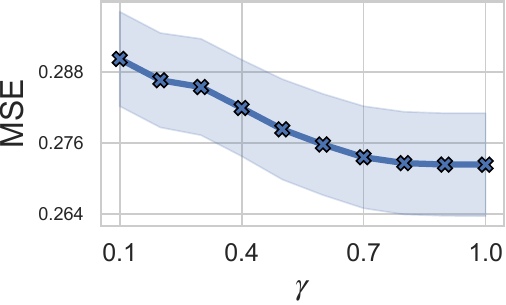}}

    \subfigure[\hspace{-20pt}]{
    \includegraphics[width=0.195\linewidth]{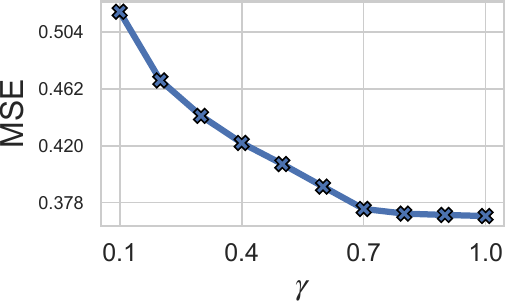}
    \includegraphics[width=0.195\linewidth]{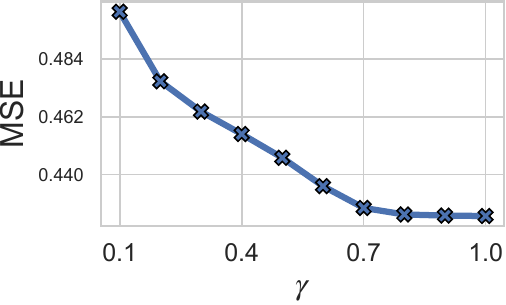}
    \includegraphics[width=0.195\linewidth]{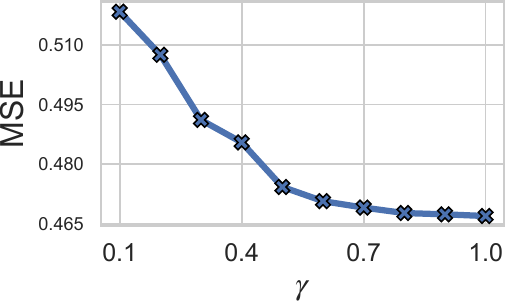}
    \includegraphics[width=0.195\linewidth]{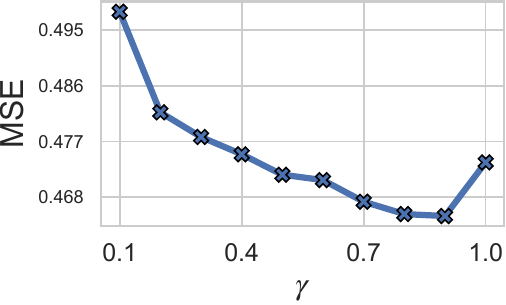}
    \includegraphics[width=0.195\linewidth]{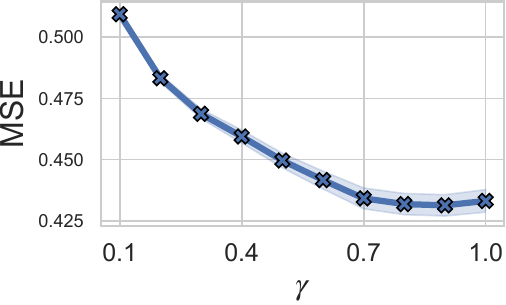}}

    \subfigure[\hspace{-20pt}]{
    \includegraphics[width=0.195\linewidth]{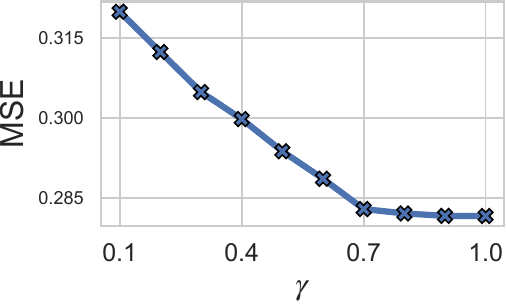}
    \includegraphics[width=0.195\linewidth]{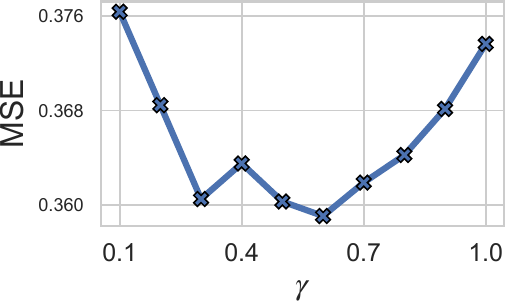}
    \includegraphics[width=0.195\linewidth]{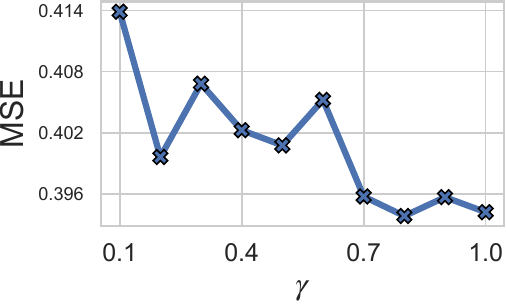}
    \includegraphics[width=0.195\linewidth]{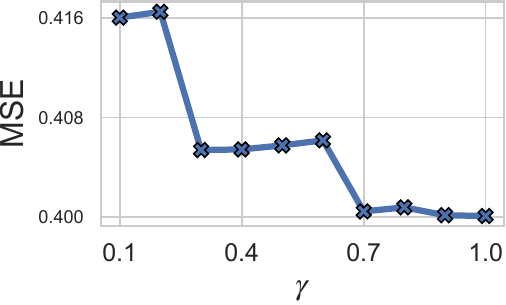}
    \includegraphics[width=0.195\linewidth]{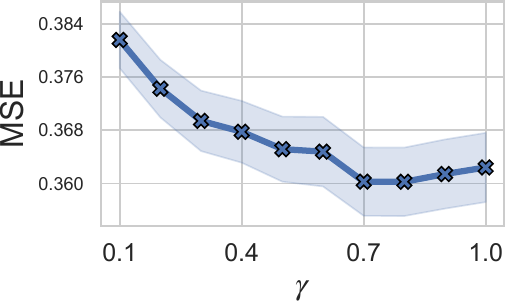}}

    \subfigure[\hspace{-20pt}]{
    \includegraphics[width=0.195\linewidth]{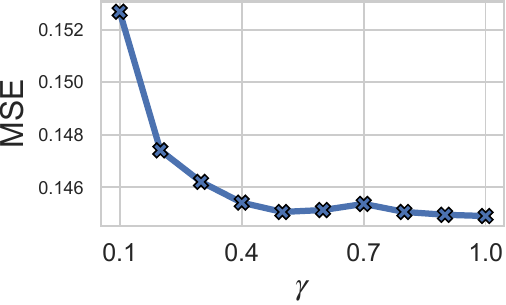}
    \includegraphics[width=0.195\linewidth]{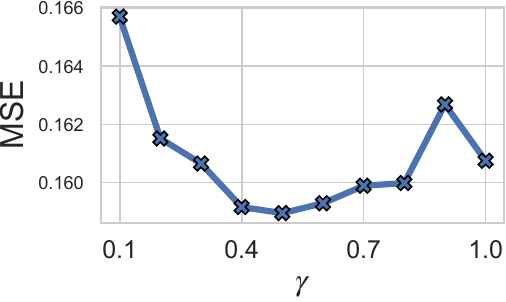}
    \includegraphics[width=0.195\linewidth]{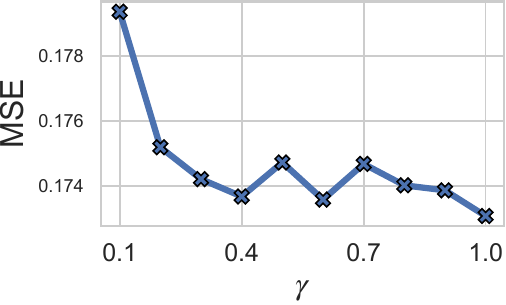}
    \includegraphics[width=0.195\linewidth]{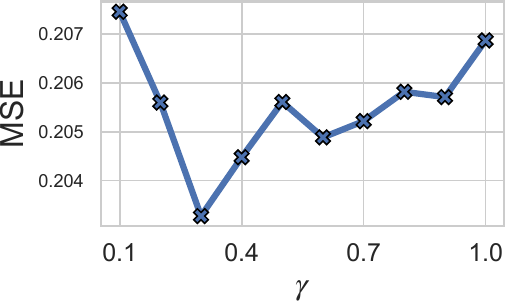}
    \includegraphics[width=0.195\linewidth]{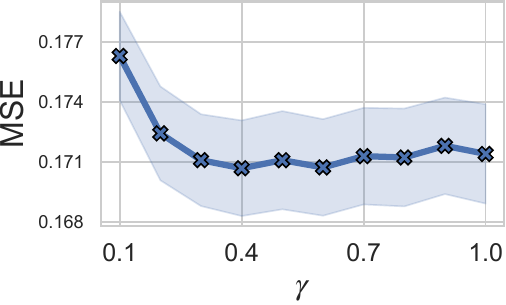}}

    \subfigure[\hspace{-20pt}]{
    \includegraphics[width=0.195\linewidth]{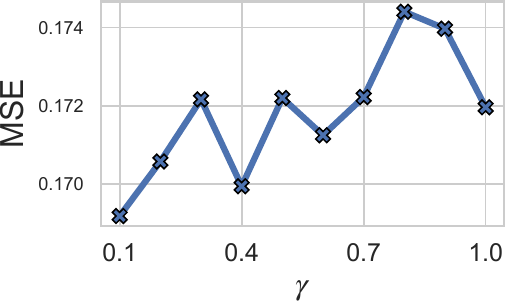}
    \includegraphics[width=0.195\linewidth]{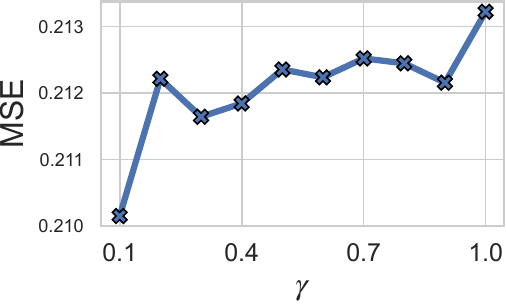}
    \includegraphics[width=0.195\linewidth]{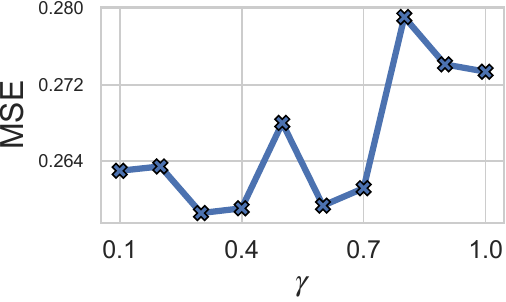}
    \includegraphics[width=0.195\linewidth]{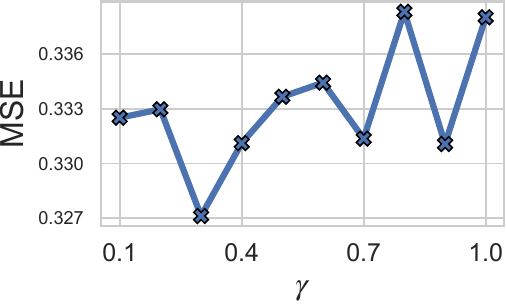}
    \includegraphics[width=0.195\linewidth]{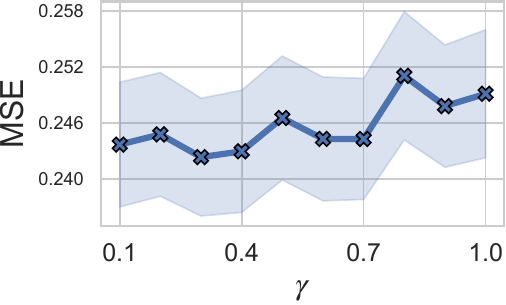}}
    \caption{Time-o1 improves Fredformer performance given a wide range of rank ratio $\gamma$. These experiments are conducted on ETTh1 (a), ETTh2 (b), ETTm1 (c), ETTm2 (d), ECL (e), and Weather (f) datasets. Different columns correspond to different forecast lengths (from left to right: 96, 192, 336, 720, and their average with shaded areas being 15\% confidence intervals). }
\label{fig:sensi-rank}
\end{figure*}

\subsection{Comparison with different learning objectives}
Additional results on comparing different learning objectives are available in \autoref{tab:loss-fred}.

\begin{table*}
  \caption{Comparable results with different learning objectives.}\label{tab:loss-fred}
  \renewcommand{\arraystretch}{0.85} \setlength{\tabcolsep}{5pt} \scriptsize
  \centering
  \renewcommand{\multirowsetup}{\centering}
  \begin{threeparttable}
  \begin{tabular}{c|c|cc|cc|cc|cc|cc|cc|cc}
    \toprule
    \multicolumn{2}{l}{Loss} & 
    \multicolumn{2}{c}{\textbf{Time-o1}} &
    \multicolumn{2}{c}{FreDF} &
    \multicolumn{2}{c}{Koopman} &
    \multicolumn{2}{c}{Dilate} &
    \multicolumn{2}{c}{Soft-DTW} &
    \multicolumn{2}{c}{DPTA} &
    \multicolumn{2}{c}{DF} \\
    \cmidrule(lr){3-4} \cmidrule(lr){5-6}\cmidrule(lr){7-8} \cmidrule(lr){9-10}\cmidrule(lr){11-12}\cmidrule(lr){13-14}\cmidrule(lr){15-16}
    \multicolumn{2}{l}{Metrics}  & MSE & MAE  & MSE & MAE  & MSE & MAE  & MSE & MAE  & MSE & MAE  & MSE & MAE  & MSE & MAE  \\
    \hline
    \rowcolor{blue!8}
    \multicolumn{14}{l}{\textbf{Forecast model: FredFormer}}\\\hline
    \multirow{5}{*}{{\rotatebox{90}{\scalebox{0.95}{ETTm1}}}}
    & 96  & 0.321 & 0.357 & 0.326 & 0.355 & 0.335 & 0.368 & 0.337 & 0.367 & 0.332 & 0.363 & 0.332 & 0.364 & 0.326 & 0.361 \\
    & 192 & 0.360 & 0.378 & 0.363 & 0.380 & 0.366 & 0.384 & 0.364 & 0.384 & 0.370 & 0.386 & 0.370 & 0.386 & 0.365 & 0.382 \\
    & 336 & 0.389 & 0.400 & 0.392 & 0.400 & 0.399 & 0.408 & 0.397 & 0.406 & 0.406 & 0.409 & 0.409 & 0.410 & 0.396 & 0.404 \\
    & 720 & 0.447 & 0.435 & 0.455 & 0.440 & 0.456 & 0.441 & 0.457 & 0.443 & 0.478 & 0.450 & 0.476 & 0.448 & 0.459 & 0.444 \\
    \cmidrule(lr){2-16}
    & Avg & 0.379 & 0.393 & 0.384 & 0.394 & 0.389 & 0.400 & 0.389 & 0.400 & 0.397 & 0.402 & 0.396 & 0.402 & 0.387 & 0.398 \\
    \midrule
    
    \multirow{5}{*}{{\rotatebox{90}{\scalebox{0.95}{ETTh1}}}}
    & 96  & 0.368 & 0.391 & 0.370 & 0.392 & 0.375 & 0.397 & 0.378 & 0.399 & 0.376 & 0.398 & 0.378 & 0.399 & 0.377 & 0.396 \\
    & 192 & 0.424 & 0.422 & 0.436 & 0.437 & 0.438 & 0.434 & 0.439 & 0.435 & 0.439 & 0.435 & 0.438 & 0.433 & 0.437 & 0.425 \\
    & 336 & 0.467 & 0.441 & 0.473 & 0.443 & 0.473 & 0.455 & 0.481 & 0.453 & 0.484 & 0.455 & 0.486 & 0.455 & 0.486 & 0.449 \\
    & 720 & 0.465 & 0.463 & 0.474 & 0.466 & 0.523 & 0.487 & 0.516 & 0.482 & 0.542 & 0.510 & 0.538 & 0.510 & 0.488 & 0.467 \\
    \cmidrule(lr){2-16}
    & Avg & 0.431 & 0.429 & 0.438 & 0.434 & 0.452 & 0.443 & 0.453 & 0.442 & 0.460 & 0.449 & 0.460 & 0.449 & 0.447 & 0.434  \\
    \midrule
    
    \multirow{5}{*}{{\rotatebox{90}{\scalebox{0.95}{ECL}}}}
    & 96  & 0.151 & 0.245 & 0.152 & 0.247 & 0.166 & 0.263 & 0.158 & 0.253 & 0.168 & 0.266 & 0.158 & 0.253 & 0.161 & 0.258 \\
    & 192 & 0.166 & 0.256 & 0.166 & 0.257 & 0.174 & 0.267 & 0.170 & 0.263 & 0.218 & 0.313 & 0.216 & 0.307 & 0.174 & 0.269 \\
    & 336 & 0.181 & 0.274 & 0.183 & 0.278 & 0.188 & 0.280 & 0.190 & 0.286 & 0.197 & 0.291 & 0.199 & 0.295 & 0.194 & 0.290 \\
    & 720 & 0.213 & 0.304 & 0.216 & 0.304 & 0.232 & 0.318 & 0.229 & 0.316 & 0.240 & 0.322 & 0.235 & 0.322 & 0.235 & 0.319 \\
    \cmidrule(lr){2-16}
    & Avg & 0.178 & 0.270 & 0.179 & 0.272 & 0.190 & 0.282 & 0.187 & 0.280 & 0.206 & 0.298 & 0.202 & 0.294 & 0.191 & 0.284 \\
    \midrule

    \multirow{5}{*}{{\rotatebox{90}{\scalebox{0.95}{Weather}}}}
    & 96  & 0.171 & 0.208 & 0.174 & 0.213 & 0.174 & 0.214 & 0.173 & 0.214 & 0.173 & 0.213 & 0.179 & 0.219 & 0.180 & 0.220 \\
    & 192 & 0.219 & 0.253 & 0.219 & 0.254 & 0.220 & 0.256 & 0.225 & 0.260 & 0.220 & 0.255 & 0.223 & 0.257 & 0.222 & 0.258 \\
    & 336 & 0.277 & 0.295 & 0.278 & 0.296 & 0.280 & 0.298 & 0.280 & 0.299 & 0.281 & 0.296 & 0.281 & 0.298 & 0.283 & 0.301 \\
    & 720 & 0.353 & 0.346 & 0.354 & 0.347 & 0.354 & 0.347 & 0.355 & 0.348 & 0.369 & 0.355 & 0.356 & 0.347 & 0.358 & 0.348 \\
    \cmidrule(lr){2-16}
    & Avg & 0.255 & 0.276 & 0.256 & 0.277 & 0.257 & 0.279 & 0.258 & 0.280 & 0.261 & 0.280 & 0.260 & 0.280 & 0.261 & 0.282 \\
    \hline
    \rowcolor{blue!8}
    \multicolumn{14}{l}{\textbf{Forecast model: itranFormer}}\\\hline
    \multirow{5}{*}{{\rotatebox{90}{\scalebox{0.95}{ETTm1}}}}
    & 96  & 0.323 & 0.358 & 0.334 & 0.365 & 0.350 & 0.382 & 0.342 & 0.376 & 0.339 & 0.373 & 0.341 & 0.375 & 0.338 & 0.372 \\
    & 192 & 0.371 & 0.388 & 0.381 & 0.390 & 0.389 & 0.400 & 0.381 & 0.396 & 0.383 & 0.395 & 0.383 & 0.395 & 0.382 & 0.396 \\
    & 336 & 0.408 & 0.407 & 0.417 & 0.412 & 0.425 & 0.423 & 0.418 & 0.418 & 0.429 & 0.423 & 0.429 & 0.423 & 0.427 & 0.424 \\
    & 720 & 0.477 & 0.450 & 0.489 & 0.453 & 0.489 & 0.458 & 0.487 & 0.457 & 0.516 & 0.469 & 0.512 & 0.467 & 0.496 & 0.463 \\
    \cmidrule(lr){2-16}
    & Avg & 0.395 & 0.401 & 0.405 & 0.405 & 0.413 & 0.416 & 0.407 & 0.412 & 0.417 & 0.415 & 0.416 & 0.415 & 0.411 & 0.414 \\
    \midrule
    
    \multirow{5}{*}{{\rotatebox{90}{\scalebox{0.95}{ETTh1}}}}
    & 96  & 0.378 & 0.393 & 0.378 & 0.395 & 0.392 & 0.411 & 0.385 & 0.405 & 0.387 & 0.405 & 0.386 & 0.405 & 0.385 & 0.405 \\
    & 192 & 0.428 & 0.423 & 0.428 & 0.423 & 0.446 & 0.442 & 0.440 & 0.437 & 0.443 & 0.439 & 0.441 & 0.439 & 0.440 & 0.437 \\
    & 336 & 0.473 & 0.450 & 0.470 & 0.447 & 0.483 & 0.461 & 0.480 & 0.457 & 0.494 & 0.464 & 0.489 & 0.462 & 0.480 & 0.457 \\
    & 720 & 0.473 & 0.469 & 0.490 & 0.484 & 0.501 & 0.491 & 0.504 & 0.492 & 0.557 & 0.520 & 0.538 & 0.509 & 0.504 & 0.492 \\
    \cmidrule(lr){2-16}
    & Avg & 0.438 & 0.434 & 0.442 & 0.437 & 0.455 & 0.451 & 0.452 & 0.448 & 0.470 & 0.457 & 0.463 & 0.454 & 0.452 & 0.448 \\
    \midrule
    
    \multirow{5}{*}{{\rotatebox{90}{\scalebox{0.95}{ECL}}}}
    & 96  & 0.145 & 0.235 & 0.149 & 0.238 & 0.151 & 0.243 & 0.150 & 0.241 & 0.149 & 0.241 & 0.149 & 0.240 & 0.150 & 0.242 \\
    & 192 & 0.159 & 0.249 & 0.163 & 0.251 & 0.167 & 0.257 & 0.168 & 0.259 & 0.164 & 0.255 & 0.166 & 0.257 & 0.168 & 0.259 \\
    & 336 & 0.173 & 0.264 & 0.179 & 0.268 & 0.182 & 0.275 & 0.181 & 0.274 & 0.180 & 0.274 & 0.180 & 0.272 & 0.182 & 0.274 \\
    & 720 & 0.203 & 0.292 & 0.212 & 0.297 & 0.212 & 0.300 & 0.212 & 0.300 & 0.207 & 0.296 & 0.212 & 0.300 & 0.214 & 0.304 \\
    \cmidrule(lr){2-16}
    & Avg & 0.170 & 0.260 & 0.176 & 0.264 & 0.178 & 0.269 & 0.178 & 0.269 & 0.175 & 0.266 & 0.177 & 0.267 & 0.179 & 0.270 \\
    \midrule

    \multirow{5}{*}{{\rotatebox{90}{\scalebox{0.95}{Weather}}}}
    & 96  & 0.163 & 0.202 & 0.170 & 0.208 & 0.206 & 0.257 & 0.208 & 0.259 & 0.207 & 0.252 & 0.209 & 0.258 & 0.171 & 0.210 \\
    & 192 & 0.214 & 0.248 & 0.219 & 0.252 & 0.264 & 0.300 & 0.252 & 0.285 & 0.264 & 0.303 & 0.258 & 0.291 & 0.246 & 0.278 \\
    & 336 & 0.274 & 0.294 & 0.279 & 0.296 & 0.309 & 0.326 & 0.311 & 0.328 & 0.314 & 0.333 & 0.312 & 0.331 & 0.296 & 0.313 \\
    & 720 & 0.351 & 0.344 & 0.358 & 0.347 & 0.377 & 0.369 & 0.374 & 0.364 & 0.384 & 0.377 & 0.383 & 0.373 & 0.362 & 0.353 \\
    \cmidrule(lr){2-16}
    & Avg & 0.251 & 0.272 & 0.257 & 0.276 & 0.289 & 0.313 & 0.286 & 0.309 & 0.292 & 0.316 & 0.291 & 0.313 & 0.269 & 0.289 \\
    \bottomrule
  \end{tabular}
  \end{threeparttable}
\end{table*}

\subsection{Varying input length results}
\begin{table}
\centering
\caption{Varying input sequence length results on the Weather dataset.}\label{tab:vary_seq_len}
\renewcommand{\arraystretch}{1} \setlength{\tabcolsep}{10pt} \scriptsize
\centering
\renewcommand{\multirowsetup}{\centering}
\begin{tabular}{c|c|c|cc|cc|cc|cc}
    \toprule
    \multicolumn{3}{c|}{\rotatebox{0}{Models}} & \multicolumn{2}{c}{\textbf{Time-o1}} & \multicolumn{2}{c|}{iTransformer} & \multicolumn{2}{c}{\textbf{Time-o1}} & \multicolumn{2}{c}{PatchTST} \\
    \cmidrule(lr){4-5} \cmidrule(lr){6-7} \cmidrule(lr){8-9} \cmidrule(lr){10-11}
    \multicolumn{3}{c|}{\rotatebox{0}{Metrics}} & MSE & MAE & MSE & MAE & MSE & MAE & MSE & MAE \\
    \midrule
    \multirow{20}{*}{\rotatebox{90}{Input sequence length}} 
    & \multirow{5}{*}{96}
      & 96  & 0.163 & 0.202 & 0.171 & 0.210 & 0.175 & 0.213 & 0.200 & 0.244 \\
    & & 192 & 0.214 & 0.248 & 0.246 & 0.278 & 0.224 & 0.257 & 0.229 & 0.263 \\
    & & 336 & 0.274 & 0.294 & 0.296 & 0.313 & 0.276 & 0.296 & 0.287 & 0.303 \\
    & & 720 & 0.351 & 0.344 & 0.362 & 0.353 & 0.353 & 0.346 & 0.363 & 0.353 \\
    \cmidrule(lr){3-11}
    & & Avg & 0.250 & 0.272 & 0.269 & 0.289 & 0.257 & 0.278 & 0.270 & 0.291 \\
    \cmidrule(lr){2-11}
    
    & \multirow{5}{*}{192}  
      & 96  & 0.163 & 0.205 & 0.168 & 0.215 & 0.158 & 0.199 & 0.164 & 0.208 \\
    & & 192 & 0.210 & 0.248 & 0.213 & 0.253 & 0.204 & 0.242 & 0.225 & 0.269 \\
    & & 336 & 0.259 & 0.287 & 0.265 & 0.294 & 0.257 & 0.286 & 0.287 & 0.308 \\
    & & 720 & 0.334 & 0.338 & 0.341 & 0.345 & 0.332 & 0.337 & 0.341 & 0.345 \\
    \cmidrule(lr){3-11}
    & & Avg & 0.241 & 0.270 & 0.247 & 0.277 & 0.238 & 0.266 & 0.254 & 0.283 \\
    \cmidrule(lr){2-11}
    
    & \multirow{5}{*}{336} 
      & 96  & 0.157 & 0.203 & 0.162 & 0.213 & 0.150 & 0.196 & 0.156 & 0.206 \\
    & & 192 & 0.199 & 0.246 & 0.211 & 0.256 & 0.196 & 0.241 & 0.222 & 0.277 \\
    & & 336 & 0.251 & 0.287 & 0.260 & 0.295 & 0.246 & 0.282 & 0.251 & 0.285 \\
    & & 720 & 0.324 & 0.338 & 0.332 & 0.341 & 0.320 & 0.333 & 0.327 & 0.338 \\
    \cmidrule(lr){3-11}
    & & Avg & 0.233 & 0.268 & 0.241 & 0.276 & 0.228 & 0.263 & 0.239 & 0.277 \\
    \cmidrule(lr){2-11}
    
    & \multirow{5}{*}{720} 
      & 96  & 0.161 & 0.213 & 0.172 & 0.225 & 0.152 & 0.201 & 0.154 & 0.207 \\
    & & 192 & 0.205 & 0.250 & 0.220 & 0.268 & 0.198 & 0.248 & 0.205 & 0.254 \\
    & & 336 & 0.254 & 0.292 & 0.282 & 0.311 & 0.248 & 0.284 & 0.248 & 0.288 \\
    & & 720 & 0.318 & 0.339 & 0.337 & 0.351 & 0.313 & 0.335 & 0.317 & 0.339 \\
    \cmidrule(lr){3-11}
    & & Avg & 0.235 & 0.274 & 0.253 & 0.289 & 0.228 & 0.267 & 0.231 & 0.272 \\
    \bottomrule
\end{tabular}
\end{table}

Additional results on varying input lengths are available in \autoref{tab:vary_seq_len}—complementing the fixed length of 96 used in the main text.

\subsection{Random seed sensitivity}
\begin{table*}
\centering
\caption{Experimental results ($\mathrm{mean}_{\pm\mathrm{std}}$) with varying seeds (2021-2025).}\label{tab:seed}
\renewcommand{\arraystretch}{1} \setlength{\tabcolsep}{4pt} \scriptsize
\centering
\renewcommand{\multirowsetup}{\centering}
\begin{tabular}{c|cc|cc|cc|cc}
    \toprule
    \rotatebox{0}{Dataset} & \multicolumn{4}{c|}{ECL} & \multicolumn{4}{c}{Weather} \\
    \cmidrule(lr){2-9}
    Models & \multicolumn{2}{c}{\textbf{Time-o1}} & \multicolumn{2}{c|}{DF} & \multicolumn{2}{c}{\textbf{Time-o1}} & \multicolumn{2}{c}{DF} \\
    \cmidrule(lr){2-3} \cmidrule(lr){4-5} \cmidrule(lr){6-7} \cmidrule(lr){8-9}
    Metrics & MSE & MAE & MSE & MAE & MSE & MAE & MSE & MAE \\
    \midrule
    96   & 0.145$_{\pm 0.000}$ & 0.235$_{\pm 0.000}$ & 0.150$_{\pm 0.001}$ & 0.242$_{\pm 0.001}$ & 0.164$_{\pm 0.001}$ & 0.203$_{\pm 0.001}$ & 0.190$_{\pm 0.012}$ & 0.232$_{\pm 0.014}$ \\
    192  & 0.160$_{\pm 0.001}$ & 0.249$_{\pm 0.001}$ & 0.166$_{\pm 0.002}$ & 0.257$_{\pm 0.002}$ & 0.216$_{\pm 0.002}$ & 0.250$_{\pm 0.001}$ & 0.240$_{\pm 0.011}$ & 0.272$_{\pm 0.010}$ \\
    336  & 0.174$_{\pm 0.002}$ & 0.266$_{\pm 0.002}$ & 0.181$_{\pm 0.001}$ & 0.273$_{\pm 0.001}$ & 0.274$_{\pm 0.001}$ & 0.294$_{\pm 0.001}$ & 0.293$_{\pm 0.003}$ & 0.310$_{\pm 0.003}$ \\
    720  & 0.205$_{\pm 0.001}$ & 0.293$_{\pm 0.001}$ & 0.216$_{\pm 0.004}$ & 0.303$_{\pm 0.003}$ & 0.353$_{\pm 0.002}$ & 0.344$_{\pm 0.001}$ & 0.361$_{\pm 0.002}$ & 0.352$_{\pm 0.001}$ \\
    \cmidrule(lr){1-9}
    Avg  & 0.171$_{\pm 0.001}$ & 0.261$_{\pm 0.001}$ & 0.178$_{\pm 0.001}$ & 0.269$_{\pm 0.001}$ & 0.252$_{\pm 0.001}$ & 0.273$_{\pm 0.001}$ & 0.271$_{\pm 0.003}$ & 0.292$_{\pm 0.003}$ \\
    \bottomrule
\end{tabular}
\end{table*}

Additional results on varying random seeds are available in \autoref{tab:seed}.

\end{document}